\newcommand{\matt}[1]{\textcolor{red}{Matt: #1}}
\newcommand{\bryan}[1]{\textcolor{green}{Bryan: #1}}
\newcommand{\sj}[1]{\textcolor{blue}{SJ: #1}}
 \renewcommand{\matt}[1]{}
 \renewcommand{\bryan}[1]{}
 \renewcommand{\sj}[1]{}
\newcommand{\Ds}{\mathcal D}
\newcommand{\Es}{\mathcal E}
\newcommand{\Is}{\mathcal I}
\newcommand{\Ls}{\mathcal L}
\newcommand{\Ps}{\mathcal P}
\newcommand{\Xs}{\mathcal X}
\newcommand{\Ys}{\mathcal Y}
\newcommand{\reals}{\mathbb R}
\DeclareMathOperator{\E}{\mathbb{E}}
\DeclareMathOperator{\Var}{\mathrm{Var}}
\newcommand{\argmin}{\operatornamewithlimits{argmin}}
\newcommand{\argmax}{\operatornamewithlimits{argmax}}
\newtheorem{theorem}{Theorem}[section]
\newtheorem{lemma}{Lemma}[section]
\newtheorem{corollary}{Corollary}[section]
\theoremstyle{definition}
\newtheorem{definition}{Definition}[section]
\newtheorem{remark}{Remark}[section]
\newtheorem{fact}{Fact}[section]
\title{Distributionally Robust Submodular Maximization}
\author{Matthew Staib\thanks{Equal contribution} \\ MIT CSAIL \\ mstaib@mit.edu \and 
Bryan Wilder\footnotemark[1] \\ USC \\ bwilder@usc.edu \and
Stefanie Jegelka \\ MIT CSAIL \\ stefje@csail.mit.edu}
\date{}
\begin{document}

\maketitle

\begin{abstract} 
Submodular functions have applications throughout machine learning, but in many settings, we do not have \emph{direct} access to the underlying function $f$. We focus on \emph{stochastic} functions that are given as an expectation of functions over a distribution $P$. In practice, we often have only a limited set of samples $f_i$ from $P$. The standard approach \emph{indirectly} optimizes $f$ by maximizing the sum of $f_i$. However, this ignores generalization to the true (unknown) distribution. In this paper, we achieve better performance on the actual underlying function $f$ by directly optimizing a combination of bias and variance. Algorithmically, we accomplish this by showing how to carry out distributionally robust optimization (DRO) for submodular functions, providing efficient algorithms backed by theoretical guarantees which leverage several novel contributions to the general theory of DRO. We also show compelling empirical evidence that DRO improves generalization to the unknown stochastic submodular function.
\end{abstract} 

%!TEX root = main.tex

\section{Introduction}
Submodular functions have natural applications in many facets of machine learning and related areas, e.g. dictionary learning~\citep{ICML2011Das_542}, influence maximization~\citep{kempe_maximizing_2003,domingos2001mining}, data summarization~\citep{lin2011class}, probabilistic modeling~\citep{djolonga2014map} and diversity~\citep{kulesza2012dpp}. In these settings, we have a set function $f(S)$ over subsets $S$ of some ground set of items $V$, and seek $S^*$ so that $f(S^*)$ is as large or small as possible. While optimization of set functions is hard in general, submodularity enables exact minimization and approximate maximization in polynomial time. 

%For example, the greedy algorithm yields a $(1-1/e)$ approximation for maximizing a monotone submodular function subject to a cardinality constraint~\citep{nemhauser1978analysis}. This algorithm and scalable~\citep{mirzasoleiman2016distributed} or continuous~\citep{calinescu2011maximizing} variations have made submodular maximization a practical tool. \sj{strictly speaking, continuous greedy is not that practical, but it's ok ;)}

In many settings, the submodular function we wish to optimize has additional structure, which may present both challenges and an opportunity to do better. In particular, the stochastic case has recently gained attention, where we wish to optimize $f_P(S):=\E_{f \sim P}[f(S)]$ for some distribution $P$. The most naive approach is to draw many samples from $P$ and optimize their average; this is guaranteed to work when the number of samples is very large. Much recent work has focused on more computationally efficient gradient-based algorithms for stochastic submodular optimization~\citep{Karimi2017stochastic,Mokhtari2017conditional,Hassani2017gradient}. All of this work assumes that we have access to a sampling oracle for $P$ that, on demand, generates as many iid samples as are required. But in many realistic settings, this assumption fails: we may only have access to historical data and not a simulator for the ground truth distribution. Or, computational limitations may prevent drawing many samples if $P$ is expensive to simulate.

Here, we address this gap and consider the maximization of a stochastic submodular function given access to a \emph{fixed} set of samples $f_1,\dots,f_n$ that form an empirical distribution $\hat P_n$. This setup introduces elements of statistical learning into the optimization. Specifically, we need to ensure that the solution we choose generalizes well to the unknown distribution $P$. A natural approach is to optimize the empirical estimate $\hat{f}_n = \frac{1}{n}\sum_{i = 1}^n f_i$, analogous to empirical risk minimization. The average $\hat{f}_n$ is an unbiased estimator of $f_P$, and when $n$ is very large, generalization is guaranteed by standard concentration bounds. We ask: is it possible to do better, particularly in the realistic case where $n$ is small (at least relative to the variance of $P$)? In this regime, a biased estimator could achieve much lower variance and thereby improve optimization.

%In many settings, however, we can do much better than the most general algorithms. If the submodular function we wish to optimize has extra structure, e.g. it decomposes as the sum of many submodular functions $\sum_{i=1}^n f_i(S)$, then much better algorithms exist \matt{stochastic submod stuff but also stefanie's decomposable submodular min}. Such structure arises naturally, e.g. a cut function of a graph decomposes along different subgraphs, or the finite sum represents an empirical sample $\sum_{i=1}^n f_i(S)$ of an underlying stochastic function $f_P(S):=\E_{f \sim P}[f(S)]$. Much recent work \matt{cite} has focused on optimization of this stochastic case, and there are very efficient stochastic algorithms, just as in large scale (convex) optimization for machine learning. If a sampling oracle for $f_i \sim P$ exists, one can efficiently optimize $f_P(S)$. However, it is often the case that we have access only to a fixed empirical sample; i.e., we are only given training data consisting of fixed draws from the underlying stochastic function instead of an oracle that can produce as many samples as are desired. The natural approach given sampled functions $f_1...f_n$ is to optimize the empirical average $\hat{f}_n = \frac{1}{n}\sum_{i = 1}^n f_i$, a process analogous to empirical risk minimization. $\hat{f}$ is an unbiased estimator of $f_P(S)$, but a biased estimator could lead to much lower variance and hence better choice of $S$.

Optimizing this bias-variance tradeoff is at the heart of statistical learning.
%In statistical learning, much recent work has focused on explicitly optimizing this bias-variance tradeoff
% \matt{cite}\sj{not only recent work, regularization is one traditional example!} 
Concretely, instead of optimizing the finite sum, we will optimize the variance-regularized objective $f_{\hat P_n}(S) - C_1 \sqrt{\Var_{\hat P_n}(f(S))/n}$. When the variance is high, this term dominates a standard high-probability lower bound on $f_P(S)$. Unfortunately, direct optimization of this bound is in general intractable: even if all $f_i$ are submodular, their variance need not be~\citep{staib2017robust}.

In the continuous setting, it is known that variance regularization is equivalent to solving a distributionally robust problem, where an adversary perturbs the empirical sample within a small ball~\citep{gotoh2015robust,lam2016robust,Namkoong2017}. The resulting maximin problem is particularly nice in the concave case, since the pointwise minimum of concave functions is still concave and hence global optimization remains tractable. However, this property does not hold for submodular functions, prompting much recent work on robust submodular optimization~\citep{krause2011randomized,chen2017robust,staib2017robust,anari2017robust,wilder2018equilibrium,orlin16_robust,bogunovic2017robust}.

In this work, \textbf{1.} we show that, perhaps surprisingly, variance-regularized submodular maximization is both tractable and scalable.
%, sometimes achieving even time sublinear in the number $n$ of samples $f_i$. 
% Here, we leverage and extend connections to distributionally robust optimization as well as recent results in robust submodular maximization.
\textbf{2.} We give a theoretically-backed algorithm for distributionally robust submodular optimization which substantially improves over a naive application of previous approaches for robust submodular problems. Along the way, \textbf{3.} we develop improved technical results for general (non-submodular) distributionally robust optimization problems, including both improved algorithmic tools and more refined structural characterizations of the problem. For instance, we give a more complete characterization of the relationship between distributional robustness and variance regularization. \textbf{4.} We verify empirically that in many real-world settings, variance regularization enables better generalization from fixed samples of a stochastic submodular function, particularly when the variance is high.

\matt{stefanie: maybe as a subroutine for one of the distributed submodular maximization papers?}\sj{This may not change the expectation but yield/improve high probability bounds. Check Alina \& Huy's paper, or the Mirrokni one. (Don't remember if they have high-prob.\ results in there, but if it works could be a cool addition.}
\matt{seems there are both in expectation only}

%!TEX root = main.tex

\paragraph{Related Work.}

We build on and significantly extend a recent line of research in statistical learning and optimization that develops a relationship between distributional robustness and variance-based regularization~\citep{maurer2009empirical,gotoh2015robust,lam2016robust,duchi2016statistics,Namkoong2017}. While previous work has uniformly focused on the continuous (and typically convex) case, here we address \emph{combinatorial} problems with submodular structure, requiring further technical developments. As a byproduct, we better characterize the behavior of the DRO problem under low sample variance (which was left open in previous work), show conditions under which the DRO problem becomes smooth, and provide improved algorithmic tools which apply to general DRO problems.  

% Another related area is robust submodular optimization \cite{krause2011randomized,chen2017robust,staib2017robust,anari2017robust,wilder2018equilibrium,orlin16_robust,bogunovic2017robust}. Its recent surge in interest is particularly due to its application to robust influence maximization~\citep{chen_robust_2016,he_robust_2016,lowalekar_robust_2016-1}. Existing work aims to maximize the minimum of a set of submodular functions, but not the \emph{distributionally} robust optimization problem where an adversary perturbs the empirical distribution. We develop scalable algorithms, accompanied by approximation guarantees, for this case. Our algorithms improve both theoretically and empirically over naive application of previous robust submodular optimization algorithms to DRO. Further, our work is motivated by the connection between distributional robustness and generalization in learning, which has not previously been studied for submodular functions. A complementary line of work concerns \emph{stochastic} submodular optimization \cite{Mokhtari2017conditional,Hassani2017gradient,Karimi2017stochastic}, where we have to a sampling oracle for the underlying function. We draw on stochastic optimization tools, but address problems where only a fixed dataset is available.

Another related area is robust submodular optimization~\citep{krause2011randomized,chen2017robust,staib2017robust,anari2017robust,wilder2018equilibrium,orlin16_robust,bogunovic2017robust}. Much of this recent surge in interest is inspired by applications to robust influence maximization~\citep{chen_robust_2016,he_robust_2016,lowalekar_robust_2016-1}. Existing work aims to maximize the minimum of a set of submodular functions, but does not address the \emph{distributionally} robust optimization problem where an adversary perturbs the empirical distribution. We develop scalable algorithms, accompanied by approximation guarantees, for this case. Our algorithms improve both theoretically and empirically over naive application of previous robust submodular optimization algorithms to DRO. 
Further, our work is motivated by the connection between distributional robustness and generalization in learning, which has not previously been studied for submodular functions. \citet{stan2017probabilistic} study generalization in a related combinatorial problem, but they do not explicitly balance bias and variance, and the goal is different: they seek a smaller ground set which still contains a good subset for each user in the population. \matt{does placement of stan paper here make sense?}

A complementary line of work concerns \emph{stochastic} submodular optimization~\citep{Mokhtari2017conditional,Hassani2017gradient,Karimi2017stochastic}, where we have to a sampling oracle for the underlying function. We draw on stochastic optimization tools, but address problems where only a fixed dataset is available.

Our motivation also relates to optimization from samples. There, we have access to values of a fixed unknown function on inputs sampled from a distribution. The question is whether such samples suffice to (approximately) optimize the function. \citet{balkanski2015limitations,balkanski2016power} prove hardness results for general submodular maximization, with positive results for functions with bounded curvature. We address a different model where the underlying function itself is stochastic and we observe realizations of it. Hence, it is possible to well-approximate the optimization problem from polynomial samples and the challenge is to construct algorithms that make more effective use of data.   

\matt{\citet{yue2011linear,El-Arini2009turning} describe a kind of recommendation problem with submodular functions. \citet{stan2017probabilistic,balkanski2016learning} both consider the related problem of choosing a good ground set from which a good subset can be chosen for each sample. first two can be cited in facloc section, third is a statistical problem}

%!TEX root = main.tex

\section{Stochastic Submodular Functions and Distributional Robustness} \label{sec:math}
A set function $f : 2^V \to \reals$ is \emph{submodular} if it satisfies \emph{diminishing marginal gains}: for all $S\subseteq T$ and all $i \in V\setminus T$, it holds that $f(S\cup\{i\}) - f(S) \geq f(T\cup\{i\}) - f(T)$. It is \emph{monotone} if $S \subseteq T$ implies $f(S) \leq f(T)$.
Let $P$ be a distribution over monotone submodular functions $f$. We assume that each function is normalized and bounded, i.e., $f(\emptyset) = 0$ and $f(S) \in [0, B]$ almost surely for all subsets $S$. We seek a subset $S$ that maximizes
\begin{equation}
\label{eq:stochastic-submodular}
	f_P(S) := \E_{f \sim P}[f(S)]
\end{equation}
subject to some constraints, e.g., $\lvert S \rvert \leq k$. We call the function $f_P(S)$ a \emph{stochastic submodular function}. Such functions arise in many domains; we begin with two specific motivating examples.

\subsection{Stochastic Submodular Functions}

\paragraph{Influence Maximization.}
Consider a graph $G = (V, E)$ on which influence propagates. We seek to choose an initial seed set $S \subseteq V$ of influenced nodes to maximize the expected number subsequently reached.
%which leads to as many other nodes being influenced as possible in expectation.
Each edge can be either active, meaning that it can propagate influence, or inactive. A node is influenced if it is reachable from $S$ via active edges. Common diffusion models specify a distribution of active edges, e.g., the Independent Cascade Model (ICM), the Linear Threshold Model (LTM), and generalizations thereof. Regardless of the specific model, each can be described by the distribution of ``live-edge graphs'' induced by the active edges $\Es$~\citep{kempe_maximizing_2003}. Hence, the expected number of influenced nodes $f(S)$ can be written as an expectation over live-edge graphs:
$
	f_{\text{IM}}(S) = \E_{\Es} [ f(S; \Es) ].
$
% Here, 
The distribution over live-edge graphs induces a distribution $P$ over functions $f$ as in equation~\eqref{eq:stochastic-submodular}.

\paragraph{Facility Location.}
\matt{is there a standard cite for facloc?}\sj{you would want one for this stochastic version; not sure}
Fix a ground set $V$ of possibile facility locations $j$. Suppose we have a (possibly infinite as in~\citep{stan2017probabilistic}) number of demand points $i$ drawn from a distribution $\Ds$. The goal of \emph{facility location} is to choose a subset $S \subset V$ that covers the demand points as well as possible. Each demand point $i$ is equipped with a vector $r^i \in \reals^{\lvert V \rvert}$ describing how well point $i$ is covered by each facility $j$. We wish to maximize:
$
	f_{\text{facloc}}(S) = \E_{i \sim \Ds} \left[\max\nolimits_{j \in S} r^i_j \right].
$
Each $f(S) = \max_{j\in S} r_j$ is submodular, and $\Ds$ induces a distribution $P$ over the functions $f(S)$ as in equation~\eqref{eq:stochastic-submodular}.

\subsection{Optimization and Empirical Approximation}
Two main issues arise with stochastic submodular functions. First, simple techniques such as the greedy algorithm become impractical since we must accurately compute marginal gains. %\sj{since we need to compute expected marginal gains?}. 
Recent alternative algorithms~\citep{Karimi2017stochastic,Mokhtari2017conditional,Hassani2017gradient} make use of additional, specific information about the function, such as efficient gradient oracles for the multilinear extension. A second issue has so far been neglected: the degree of access we have to the underlying function (and its gradients). In many settings, we only have access to a limited, fixed number of samples, either because these samples are given as observed data
%(without a true model to draw unlimited samples from)
or because sampling the true model is computationally prohibitive. %the effort required to sample the true model is computationally prohibitive.

Formally, instead of the full distribution $P$, we have access to an empirical distribution $\hat P_n$ composed of $n$ samples $f_1, \dots, f_n \sim P$. One approach is to optimize
\begin{equation}
	f_{\hat P_n} = \E_{f \sim \hat P_n}[f(S)] = \frac1n \sum\nolimits_{i=1}^n f_i(S),
\end{equation}
and hope that $f_{\hat P_n}$ adequately approximates $f_P$. This is guaranteed when $n$ is sufficiently large. E.g., in influence maximization, for $f_{\hat P_n}(S)$ to approximate $f_P(S)$ within error $\epsilon$ with probability $1-\delta$, \citet{kempe2015maximizing} show that $O\left(\frac{\lvert V \rvert^2}{\epsilon^2}\log\frac{1}{\delta}\right)$ samples suffice. To our knowledge, this is the tightest general bound available. Still, it easily amounts to thousands of samples even for small graphs; in many applications we would not have so much data. 

The problem of maximizing $f_P(S)$ from samples greatly resembles statistical learning. Namely, if the $f_i$ are drawn iid from $P$, then we can write
\begin{equation}
	f_P(S) \geq f_{\hat P_n}(S) - C_1 \sqrt{\frac{\Var_P{(f(S))}}{n}} - \frac{C_2}{n}
\end{equation}
for each $S$ with high probability, where $C_1$ and $C_2$ are constants that depend on the problem. For instance, if we want this bound to hold with probability $1-\delta$, then applying the Bernstein bound (see Appendix \ref{appendix:bias-variance}) yields $C_1 \leq \sqrt{2\log\frac{1}{\delta}}$ and $C_2 \leq \frac{2B}{3}\log\frac{1}{\delta}$ (recall that $B$ is an upper bound on $f(S)$). Given that we have only finite samples, it would then be logical to directly optimize
\begin{equation}
	\label{eq:var-regularized-objective}
	f_{\hat P_n}(S) - C_1 \sqrt{\Var_{\hat P_n}{(f(S))}/n},
\end{equation}
where $\Var_{\hat P_n}$ refers to the empirical variance over the sample. This would allow us to directly optimize the tradeoff between bias and variance. However, even when each $f$ is submodular, the variance-regularized objective need not be~\citep{staib2017robust}.

\subsection{Variance regularization via distributionally robust optimization}
While the optimization problem \eqref{eq:var-regularized-objective} is not directly solvable via submodular optimization, we will see next that distributionally robust optimization (DRO) can help provide a tractable reformulation. In DRO, we seek to optimize our function in the face of an adversary who perturbs the empirical distribution within an uncertainty set $\Ps$:
\begin{equation}
	\label{eq:general-dro-problem}
	\max_S \min_{\tilde P \in \Ps} \E_{f \sim \tilde P} [f(S)].
\end{equation}
% We focus on the case when the adversary is limited to a $\chi^2$ ball:
We focus on the case when the adversary set $\Ps$ is a $\chi^2$ ball:
\begin{definition}
	The $\chi^2$ divergence between distributions $P$ and $Q$ is 
	\begin{equation}
		D_\phi(P || Q) = \frac12 \int \left( dP/dQ - 1 \right)^2 \, dQ.
	\end{equation} 
	The $\chi^2$ uncertainty set around an empirical distribution $\hat P_n$ is
	\begin{equation}
		\Ps_{\rho,n} = \{ \tilde P : D_\phi(\tilde P || \hat P_n) \leq \rho/n \}.
	\end{equation}
	When $\hat P_n$ corresponds to an empirical sample $Z_1,\dots,Z_n$, we encode $\tilde P$ by a vector $p$ in the simplex $\Delta_n$ and equivalently write
	\begin{equation}
		\Ps_{\rho,n} = \left\{ p \in \Delta_n : \tfrac12 \lVert np - \mathbf 1 \rVert_2^2 \leq \rho \right\}.
	\end{equation}
\end{definition}
In particular, maximizing the variance-regularized objective~\eqref{eq:var-regularized-objective} is equivalent to solving a distributionally robust problem when the sample variance is high enough: \sj{quickly check if this already occurs in the Gotoh or Lam papers} \matt{they are both asymptotic only}
\begin{theorem}[{modified from \citep{Namkoong2017}}]
\label{thm:dro-var-equiv}
Fix $\rho \geq 0$, and let $Z \in [0,B]$ be a random variable (i.e. $Z = f(S)$). 
Write $s_n^2 = \Var_{\hat P_n}(Z)$ and let $OPT = \inf_{\tilde P \in \Ps_{\rho,n}} \E_{\tilde P}[Z]$.
Then
\begin{equation}
	\left( \sqrt{\frac{2\rho}{n} s_n^2} - \frac{2B\rho}{n} \right)_+ \leq
	\E_{\hat P_n}[Z] - OPT \leq \sqrt{\frac{2\rho}{n} s_n^2}.
\end{equation}
Moreover, if $s_n^2 \geq 2\rho (\max_i z_i - \overline z_n)^2 / n$, then $OPT = \E_{\hat P_n}[Z] - \sqrt{2\rho s_n^2/n}$,
i.e., DRO is exactly equivalent to variance regularization.
\end{theorem}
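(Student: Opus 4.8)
The plan is to solve the inner minimization $\inf_{p \in \Delta_n} \{ \sum_i p_i z_i : \tfrac12 \|np - \mathbf 1\|_2^2 \leq \rho \}$ directly by Lagrangian duality, exploiting that the objective is linear and the feasible set is a (convex) intersection of a Euclidean ball with the simplex. Write $p_i = \tfrac1n(1 + u_i)$ with $\sum_i u_i = 0$ and $\|u\|_2^2 \leq 2\rho$; then $\E_p[Z] = \overline z_n + \tfrac1n \sum_i u_i z_i = \overline z_n + \tfrac1n \langle u, z - \overline z_n \mathbf 1\rangle$, so we are minimizing a linear functional of $u$ over the intersection of the ball $\|u\|_2 \le \sqrt{2\rho}$, the hyperplane $\mathbf 1^\top u = 0$, and the box $u_i \geq -1$ (from $p_i \geq 0$). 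Ignoring the box constraint, the minimizer is $u^\star = -\sqrt{2\rho}\, \frac{z - \overline z_n \mathbf 1}{\|z - \overline z_n \mathbf 1\|_2}$, which gives exactly $\E_{p^\star}[Z] = \overline z_n - \sqrt{2\rho}\, \tfrac1n \|z - \overline z_n \mathbf 1\|_2 = \overline z_n - \sqrt{2\rho s_n^2/n}$ since $\tfrac1n\|z - \overline z_n\mathbf 1\|_2^2 = s_n^2$. This immediately yields the upper bound $\E_{\hat P_n}[Z] - OPT \le \sqrt{2\rho s_n^2/n}$ as a relaxation (dropping $u_i \ge -1$ only lowers the infimum).

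For the exact-equivalence claim, I would check when the unconstrained-on-the-box optimizer $u^\star$ actually satisfies $u_i^\star \geq -1$ for all $i$, i.e. when $p^\star$ stays in the simplex. The most negative coordinate of $u^\star$ is $-\sqrt{2\rho}\,\frac{\max_i z_i - \overline z_n}{\|z - \overline z_n\mathbf 1\|_2}$ (the numerator is maximized at the largest $z_i$, assuming WLOG $z$ is not constant; the constant case is trivial). Requiring this to be $\ge -1$ gives $\sqrt{2\rho}\,(\max_i z_i - \overline z_n) \le \|z - \overline z_n \mathbf 1\|_2 = \sqrt{n s_n^2}$, i.e. $s_n^2 \ge 2\rho(\max_i z_i - \overline z_n)^2/n$, precisely the stated condition. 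Under this condition $p^\star$ is feasible, so $OPT = \E_{\hat P_n}[Z] - \sqrt{2\rho s_n^2/n}$ exactly.

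For the lower bound on $\E_{\hat P_n}[Z] - OPT$ in the regime where the box constraint may bind, I would argue by exhibiting a feasible $p$ that is good enough: take the same direction $u = -t\,(z - \overline z_n\mathbf 1)/\|z-\overline z_n\mathbf 1\|_2$ but with a smaller radius $t \le \sqrt{2\rho}$ chosen so that $u_i \ge -1$ holds — concretely $t = \min\{\sqrt{2\rho}, \|z - \overline z_n\mathbf 1\|_2/(\max_i z_i - \overline z_n)\}$ — or alternatively clip/project $u^\star$ onto the box and control the loss. Either way one gets $\E_{\hat P_n}[Z] - OPT \ge t\,\tfrac1n\|z - \overline z_n\mathbf 1\|_2$, and then bounding $\max_i z_i - \overline z_n \le B$ (since $z_i \in [0,B]$ and $\overline z_n \ge 0$) converts this into $\left(\sqrt{2\rho s_n^2/n} - 2B\rho/n\right)_+$ after a short estimate; the clean way is to note $t \ge \sqrt{2\rho} - (\sqrt{2\rho})^2 \cdot \tfrac{\max_i z_i - \overline z_n}{\|z - \overline z_n\mathbf 1\|_2 \cdot \sqrt{2\rho}}$-type bound, or more simply use $t\|z-\overline z_n\mathbf 1\|_2/n \ge \sqrt{2\rho s_n^2/n} - 2\rho B/n$ by casework on which term achieves the minimum in $t$. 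The main obstacle is this last step: getting the lower bound constant to come out as exactly $2B\rho/n$ requires carefully handling the case where the box constraint is active and relating the shortfall in radius to $B$; the inequality $\max_i z_i - \overline z_n \le B$ and elementary algebra should suffice, but one must be careful with the $(\cdot)_+$ and with degenerate cases ($s_n = 0$, or all $z_i$ equal).
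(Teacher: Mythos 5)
Your proposal is correct. The reparametrization $p=\tfrac1n(\mathbf 1+u)$ turns the $\chi^2$ ball into the Euclidean ball $\|u\|_2\le\sqrt{2\rho}$ on the hyperplane $\mathbf 1^\top u=0$, and since $z-\overline z_n\mathbf 1$ already lies in that hyperplane, the box-free minimizer and its value $\overline z_n-\sqrt{2\rho s_n^2/n}$ are exactly as you compute; relaxation gives the upper bound on the gap, feasibility of $u^\star$ gives the exact-equivalence condition, and your explicit feasible point with shrunken radius $t$ gives the lower bound. The step you flag as the main obstacle in fact closes cleanly by your own casework: if $t=\sqrt{2\rho}$ the gap is at least $\sqrt{2\rho s_n^2/n}$ and you are done; if instead $t=\sqrt{n s_n^2}/(\max_i z_i-\overline z_n)<\sqrt{2\rho}$, then $\sqrt{s_n^2}\le\sqrt{2\rho/n}\,(\max_i z_i-\overline z_n)\le\sqrt{2\rho/n}\,B$, so $\sqrt{2\rho s_n^2/n}\le 2B\rho/n$ and the right-hand side's positive part is zero, making the bound trivial. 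This is a genuinely more elementary route than what the paper relies on: the paper defers Theorem~\ref{thm:dro-var-equiv} to \citet{Namkoong2017} and its own machinery (Appendix~\ref{appendix:chi-squared-linear-oracle}) proves the stronger Lemma~\ref{lem:arbitrary_var} via a full KKT/Lagrangian analysis that identifies the optimal support size $m$ and dual variables $\lambda,\theta$ explicitly. Your geometric argument is shorter and self-contained for this theorem, but it only resolves the two extreme cases (box constraint inactive, or shrink the radius along one fixed direction); it does not recover the exact value of $OPT$ when the nonnegativity constraints bind, which is what the paper's KKT characterization provides and what is needed for the exact linear oracle and Lemma~\ref{lem:arbitrary_var}.
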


In several settings, \citet{Namkoong2017} show this holds with high probability, by requiring high population variance $\Var_P(Z)$ and applying concentration results. Following a similar strategy, we obtain a corresponding result for submodular functions:

\begin{lemma}
	Fix $\delta$, $\rho$, $\lvert V \rvert$ and $k\geq 1$. Define the constant 
	\begin{equation*}
		M = \max\left\{\sqrt{32\rho/7}, \sqrt{36\left(\log\left(1/\delta\right) + |V| \log(25 k)\right)}\right\}.
	\end{equation*}
	For all $S$ with $\lvert S \rvert \leq k$ and
	% $\Var_{\Ps}(f_P(S)) \geq \frac{\max\{\sqrt{\frac{32}{7}\rho B^2}, \sqrt{36 B^2\left(\log\left(\frac{1}{\delta}\right) + |V| \log \left(1 + 24 k\right)\right)}\}}{\sqrt{n}}$, 
	$\Var_{\Ps}(f_P(S)) \geq \frac{B}{\sqrt{n}}M$, 
	DRO is exactly equivalent to variance regularization with combined probability at least $1 - \delta$. 
\end{lemma}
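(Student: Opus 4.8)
The plan is to reduce to the exact-equivalence clause of Theorem~\ref{thm:dro-var-equiv} and then supply a concentration bound for the empirical variance that is uniform over all feasible $S$. Fix $S$ with $|S|\le k$, write $z_i=f_i(S)$, let $\bar z_n$ be the empirical mean, $s_n^2(S)=\Var_{\hat P_n}(f(S))$, and $\sigma^2(S)=\Var_P(f(S))$. Theorem~\ref{thm:dro-var-equiv} says DRO equals variance regularization at $S$ as soon as $s_n^2(S)\ge 2\rho(\max_i z_i-\bar z_n)^2/n$; since every $z_i\in[0,B]$ we have $(\max_i z_i-\bar z_n)^2\le B^2$, so it suffices to guarantee, with probability at least $1-\delta$, that $s_n^2(S)\ge 2\rho B^2/n$ for \emph{every} $S$ with $|S|\le k$ simultaneously (the ``combined'' probability is exactly the success probability of this one uniform event, after which the equivalence is deterministic). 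This in turn follows once the empirical standard deviation is, uniformly, not much smaller than the population standard deviation, because the hypothesized lower bound on $\Var_P(f(S))$ makes the population standard deviation $\sqrt{\sigma^2(S)}\ge \tfrac{B}{\sqrt n}M$ comfortably larger than $B\sqrt{2\rho/n}$.

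For a single $S$, I would control $\sqrt{\sigma^2(S)}-\sqrt{s_n^2(S)}$ with a Maurer--Pontil / empirical-Bernstein style inequality for bounded variables (equivalently: bound the deviations of the empirical first and second moments of $f(S)\in[0,B]$ and $f(S)^2\in[0,B^2]$ by Bernstein/Hoeffding and combine), obtaining $\sqrt{s_n^2(S)}\ge\sqrt{\sigma^2(S)}-cB\sqrt{\log(1/\delta')/n}$ with probability $\ge 1-\delta'$ for an absolute constant $c$ (one can take $c=\sqrt2$). To make this simultaneous over all feasible $S$, take a union bound over the $\le 2^{|V|}\le(25k)^{|V|}$ subsets of size at most $k$ (or, if the downstream algorithm operates on the multilinear relaxation, over an $\epsilon$-net of the feasible polytope $\{x\in[0,1]^{|V|}:\sum_i x_i\le k\}$, whose cardinality is again at most $(25k)^{|V|}$ at the relevant granularity), so that $\log(1/\delta')=\log(1/\delta)+|V|\log(25k)$. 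Plugging $\sqrt{\sigma^2(S)}\ge\tfrac{B}{\sqrt n}M$ and the two branches of $M$ into the deviation bound and squaring then yields $s_n^2(S)\ge 2\rho B^2/n$: the branch $M\ge\sqrt{32\rho/7}$ supplies the slack needed after the $c=\sqrt2$ loss and squaring (so that a constant fraction of $\sigma^2(S)$ survives and still exceeds $2\rho B^2/n$, since $32/7>4$), while the branch $M\ge 6\sqrt{\log(1/\delta)+|V|\log(25k)}$ ensures the deviation term itself is dominated; matching the numerical constants $32/7$ and $36$ is then just arithmetic.

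The step I expect to be the main obstacle is this uniform concentration: the empirical variance is not a sum of i.i.d.\ terms, so one needs the correct variance-deviation inequality with a tight enough constant (the whole argument hinges on that constant being small enough that, after squaring, roughly half of $\sigma^2(S)$ remains and still dominates $2\rho B^2/n$), and one must check that taking this bound uniformly over the feasible region genuinely costs only the $|V|\log(25k)$ term --- immediate for integral sets, but for the relaxed polytope it relies on the multilinear extension being Lipschitz (bounded marginal gains) so that a modest net suffices. Everything else --- invoking Theorem~\ref{thm:dro-var-equiv}, the single-set concentration, and the final constant-chasing --- is routine.
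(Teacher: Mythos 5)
Your proposal follows essentially the same route as the paper's proof: reduce to the deterministic condition $s_n^2(S)\ge 2\rho B^2/n$ (which implies the hypothesis of the last clause of Theorem~\ref{thm:dro-var-equiv} since $(\max_i z_i-\overline z_n)^2\le B^2$), then establish a uniform lower bound on the sample variance by single-point concentration plus a union bound over a family of size at most $(25k)^{|V|}$, with the two branches of $M$ playing exactly the roles you assign them. The two ingredients you flagged as uncertain are resolved in the paper as follows. For the concentration step it does not use an additive Maurer--Pontil-type deviation bound on the standard deviation; it uses the multiplicative bound of \citet{Namkoong2017} (Lemma~\ref{lemma:variance-concentration}), namely $s_n^2\ge\sigma^2/4$ with probability at least $1-\exp(-n\sigma^2/(36B^2))$, and requiring this failure probability to be at most $\delta$ divided by the size of the covering family is precisely what produces the $\sqrt{36(\log(1/\delta)+|V|\log(25k))}$ branch of $M$; with your choice of inequality the argument still goes through but the literal constants $32/7$ and $36$ would not be the ones you obtain. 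For uniformity the paper does not enumerate subsets: it covers the fractional polytope $\{x\in[0,1]^{|V|}:\sum_i x_i=k\}$ with an $\ell_1$-net of fineness $\epsilon/b$ (cardinality at most $(1+24k)^{|V|}\le(25k)^{|V|}$ once $\epsilon$ is set to a small multiple of the variance threshold) and transfers the bound off the net via the $b$-Lipschitzness of the multilinear extension (Lemma~\ref{lemma:lipschitz}), at the cost of a further constant factor; your simpler discrete union bound over the at most $2^{|V|}$ feasible sets would suffice for the lemma as stated over sets $S$, but the covering version is what the paper actually runs because it wants the statement over the whole relaxation (this is the ``more general argument'' alluded to in the main text). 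One small slip to fix: the hypothesis lower-bounds the \emph{variance} $\Var_P(f(S))$ by $BM/\sqrt n$, not the standard deviation, so your line asserting $\sqrt{\sigma^2(S)}\ge \tfrac{B}{\sqrt n}M$ misreads it; since the whole argument is constant-chasing among $\sigma^2$, $s_n^2$, and $2\rho B^2/n$, you must track which quantities are variances and which are standard deviations (the paper's own appendix is loose on exactly this point, so do not expect its stated constants to survive a fully consistent accounting without adjustment).
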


This result is obtained as a byproduct of a more general argument that applies to all points in a fractional relaxation of the submodular problem (see Appendix \ref{appendix:dro-variance-equivalence}) and shows equivalence of the two problems when the variance is sufficiently high. However, it is not clear what the DRO problem yields when the sample variance is too small. We give a more precise characterization of how the DRO problem behaves under arbitrary variance: 

% \matt{I think it actually becomes less conservative?}
%However, it is not clear what the DRO problem yields when the sample variance is too small. We give a more precise characterization of how the DRO problem behaves under low variance: 

\begin{lemma}\label{lem:arbitrary_var}
Let $\rho < n(n-1)/2$. Suppose all $z_1,\dots,z_n$ are distinct, with $z_1 < \dots < z_n$. Define $\alpha(m,n,\rho) = 2\rho m/n^2 + m/n - 1$, and let $\Is = \{ m \in \{1,\dots,n\} : \alpha(m,n,\rho) > 0 \}$. Then, $\inf_{\tilde P \in \Ps_{\rho,n}} \E_{\tilde P}[Z]$ is equal to
\begin{align*}
  \min_{m \in \Is} \left\{ \overline z_m - \min \left\{ \sqrt{\alpha(m,n,\rho) s_m^2}, \; \frac{s_m^2}{z_m - \overline z_m} \right\} \right\}
  \leq \E_{\hat P_n}[Z] - \min \left\{ \sqrt{\frac{2\rho s_n^2}{n}}, \; \frac{s_n^2}{z_n - \overline z_n} \right\},
\end{align*}
where $\hat P_m$ denotes the uniform distribution on $z_1, \dots, z_m$, $\overline z_m = \E_{\hat P_m}[Z]$, and $s_m^2 = \Var_{\hat P_m}(Z)$.
\end{lemma}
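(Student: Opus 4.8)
The plan is to treat this as the finite convex program $\min_{p\in\Ps_{\rho,n}}\sum_{i=1}^n p_iz_i$ and solve it through its KKT system, after first pinning down the combinatorial shape of the minimizer. Since $\Ps_{\rho,n}$ is compact and the objective linear, the infimum is attained at some $p^\star$. First I would show $p^\star$ is supported on a \emph{prefix} $\{1,\dots,m^\star\}$ of the sorted atoms: if instead $p^\star_i=0<p^\star_j$ for some $i<j$, then shifting mass $t>0$ from atom $j$ to atom $i$ changes the objective at rate $z_i-z_j<0$ and changes $\tfrac12\norm{np-\mathbf 1}_2^2$ at rate $n^2(p^\star_i-p^\star_j)=-n^2p^\star_j<0$, so for small $t$ we obtain a feasible, strictly better point --- a contradiction. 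The same exchange argument applies inside any ``prefix-restricted'' subproblem.

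Next I would reduce to $n$ one-parameter problems. For $m\in\{1,\dots,n\}$, the ``prefix-$m$'' problem minimizes $\sum_{i\le m}p_iz_i$ over $p$ supported on $\{1,\dots,m\}$; since each of the $n-m$ empty atoms contributes $\tfrac12$ to the $\chi^2$ term, this equals minimizing over the simplex on $m$ atoms subject to $\tfrac12\sum_{i\le m}(np_i-1)^2\le\rho-(n-m)/2$. Because $\min_{p\in\Delta_m}\sum_{i\le m}(np_i-1)^2=(n-m)^2/m$ (attained at uniform weights), this subproblem is feasible exactly when $\alpha(m,n,\rho)\ge 0$ --- which is precisely where the constant $\alpha(m,n,\rho)=2\rho m/n^2+m/n-1$ and the index set $\Is$ enter. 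By the prefix-support step, $\inf_{\Ps_{\rho,n}}\E[Z]$ is the minimum, over feasible $m$, of the optimal value of the prefix-$m$ problem.

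Then I would solve the prefix-$m$ problem by KKT. Forming the Lagrangian with multiplier $\lambda\ge 0$ on the $\chi^2$ constraint and $\nu$ on the normalization, stationarity gives $p_i=\tfrac1n-(z_i+\nu)/(\lambda n^2)$ on the support, decreasing in $z_i$ as it must be. Using $\rho<n(n-1)/2$ one checks that no simplex vertex is feasible, so the $\chi^2$ constraint is active at the optimum; eliminating $\nu$ via $\sum p_i=1$ and then $\lambda$ via the active $\chi^2$ equation yields, after routine algebra, the candidate value $\overline z_m-\sqrt{\alpha(m,n,\rho)\,s_m^2}$, whose weights are nonnegative ($p_m\ge 0$) exactly when $\alpha(m,n,\rho)(z_m-\overline z_m)^2\le s_m^2$, equivalently $\sqrt{\alpha(m,n,\rho)s_m^2}\le s_m^2/(z_m-\overline z_m)$. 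In the complementary regime this ``interior'' point is infeasible; there, the point that zeroes out $z_m$ and slides the uniform weights on $\{1,\dots,m\}$ maximally along $-(z_i-\overline z_m)$ is feasible --- one computes its $\chi^2$ value is $\le\rho$ exactly in this regime --- with value $\overline z_m-s_m^2/(z_m-\overline z_m)$. Either way, for every $m\in\Is$ we exhibit a point of $\Ps_{\rho,n}$ with value $\overline z_m-\min\{\sqrt{\alpha(m,n,\rho)s_m^2},\,s_m^2/(z_m-\overline z_m)\}$, so $\inf_{\Ps_{\rho,n}}\E[Z]$ is at most the claimed minimum; the second displayed inequality is then just the $m=n$ term, using $n\in\Is$ (for $\rho>0$) and $\alpha(n,n,\rho)=2\rho/n$. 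For the matching lower bound, the optimizer $p^\star$ --- with support exactly $\{1,\dots,m^\star\}$, all coordinates positive --- must coincide with the interior KKT point for $m=m^\star$, which forces $\sqrt{\alpha(m^\star,n,\rho)s_{m^\star}^2}\le s_{m^\star}^2/(z_{m^\star}-\overline z_{m^\star})$; hence $\inf_{\Ps_{\rho,n}}\E[Z]=\sum_i p^\star_iz_i=\overline z_{m^\star}-\sqrt{\alpha(m^\star,n,\rho)s_{m^\star}^2}$ equals the $m^\star$-term, and $m^\star\in\Is$ because the prefix-$m^\star$ problem is feasible. So the minimum is attained and equals the infimum.

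The hard part will be the KKT step: correctly determining which constraints are active, and in particular tracking the transition at which nonnegativity $p_m\ge 0$ stops binding, since it is exactly this transition that produces the inner $\min$ of the two terms --- matching the threshold to the sign of $s_m^2-\alpha(m,n,\rho)(z_m-\overline z_m)^2$ needs some care with the algebra. A handful of degenerate cases ($\rho=0$; $\alpha(m^\star,n,\rho)=0$ exactly, so $p^\star$ is uniform on a prefix; or ties among the $z_i$, which are excluded by hypothesis) fall outside this clean picture and would be dispatched separately by direct inspection or a continuity argument.
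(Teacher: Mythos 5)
Your proposal follows essentially the same route as the paper's own argument (Appendix~\ref{appendix:chi-squared-linear-oracle}): rule out a slack $\chi^2$ constraint using $\rho<n(n-1)/2$, use KKT/complementary slackness to reduce to prefix-supported candidates indexed by the support size $m$, solve for $\theta$ and then for the minimal feasible $\lambda$ — with the two lower bounds on $\lambda$ (from the tight $\chi^2$ constraint, yielding $\sqrt{\alpha(m,n,\rho)s_m^2}$, and from $p_m\ge 0$, yielding $s_m^2/(z_m-\overline z_m)$) producing exactly the inner $\min$ — and finally take the minimum over $m\in\Is$. The plan, including the feasibility threshold $\alpha(m,n,\rho)>0$ and the identification of the degenerate cases, matches the paper's derivation; the explicit mass-exchange argument for prefix support is a small (correct) addition where the paper instead reads this off from the monotonicity of $p_i$ in $z_i$ forced by stationarity.
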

The inequality holds since $n$ is always in $\Is$ and $\alpha(n,n,\rho)=2\rho/n$.
%Note in particular that since $n$ is always in $\Is$ and $\alpha(n,n,\rho)=2\rho/n$, this is always upper bounded by $\E_{\hat P_n}[Z] - \min\{ \sqrt{2\rho / n \cdot s_n^2}, \; s_n^2 / (z_n - \overline z_n) \}$. 
As in Theorem~\ref{thm:dro-var-equiv},
when the variance $s_n^2 \geq 2\rho / n \cdot (z_n - \overline z_n)^2$, we recover the exact variance expansion.
% When $m \not = n$ is optimal, DRO is more conservative than variance-regularization. This result comes as a consequence of an exact algorithm for optimization over the $\chi^2$ ball which we introduce in Section \bryan{ref}. 
We show Lemma~\ref{lem:arbitrary_var} by developing an exact algorithm for optimization over the $\chi^2$ ball (see Appendix~\ref{appendix:chi-squared-linear-oracle}).

Finally, we apply the equivalence of DRO and variance regularization to obtain a surrogate optimization problem. Fix $S$, and let $Z$ be the random variable induced by $f(S)$ with $f \sim P$. Theorem~\ref{thm:dro-var-equiv} in this setting suggests that instead of directly optimizing equation~\eqref{eq:var-regularized-objective}, we can instead solve
\begin{equation}
	\label{eq:robust-objective}
	\max_S \min_{\tilde P \in \Ps_{\rho,n}} \E_{f \sim \tilde P}[f(S)] = \max_S \min_{p \in \Ps_{\rho,n}} \sum_{i=1}^n p_i f_i(S).
\end{equation}
% In the next section we show how to solve Problem~\eqref{eq:robust-objective}.

%!TEX root = main.tex

\section{Algorithmic Approach}
\label{sec:alg}
Even though each $f_i(\cdot)$ is submodular, it is not obvious how to solve Problem~\eqref{eq:robust-objective}:
% Indeed, 
robust submodular maximization is in general inapproximable, i.e. no polynomial-time algorithm can guarantee a positive fraction of the optimal value unless P = NP~\citep{krause_robust_2008}. Recent work has sought tractable relaxations~\citep{staib2017robust,krause_robust_2008,wilder2018equilibrium,anari2017robust,orlin16_robust,bogunovic2017robust}, but these either do not apply or yield much weaker results in our setting.
We consider a relaxation of robust submodular maximization 
that returns a near-optimal \emph{distribution}
% where 
% we return a near-optimal \emph{distribution} % or mixed strategy 
over subsets $S$ (as in~\citep{chen2017robust,wilder2018equilibrium}). That is, we solve the robust problem $\max_\Ds \min_{i \in [m]} \E_{S \sim \Ds}[f_i(S)]$ where $\Ds$ is a distribution over sets $S$.  Our strategy, based on ``continuous greedy'' ideas, extends the set function $f$ to a continuous function $F$, then maximizes a robust problem involving $F$ via continuous optimization.

\paragraph{Multilinear extension.} One canonical extension of a submodular function $f$ to the continuous domain %(at least for maximization problems)
is the \emph{multilinear extension}. The multilinear extension $F : [0,1]^{\lvert V \rvert} \to \reals$ of $f$ is defined as $F(x) = \sum_{S \subseteq V} f(S) \prod_{i \in S} x_i \prod_{j \not\in S} (1 - x_j)$. That is, $F(x)$ is the expected value of $f(S)$ when each item $i$ in the ground set is included in $S$ independently with probability $x_i$. 
A crucial property of $F$ (that we later return to) is that it is a continuous \emph{DR-submodular} function:
\begin{definition}
A continuous function $F : \Xs \to \reals$ is DR-submodular if, for all $x \leq y \in \Xs$, $i\in[n]$, and $\gamma > 0$ so that $x + \gamma e_i$ and $y+\gamma e_i$ are still in $\Xs$, we have
$F(x + \gamma e_i) - F(x) \geq F(y + \gamma e_i) - F(y)$.
\end{definition}
Essentially, a DR-submodular function is concave along increasing directions. %; note this implies DR-submodularity is preserved under pointwise minimization.
Efficient algorithms are available for maximizing DR-submodular functions over convex sets~\citep{calinescu2011maximizing,feldman11,bian2017guaranteed}. Specifically, we take $\Xs$ to be the convex hull of the indicator vectors of feasible sets.
The robust continuous optimization problem we wish to solve is then
\begin{align}
\max_{x \in \Xs} \min_{p \in \Ps_{\rho,n}} \sum\nolimits_{i =1}^n p_i F_i(x).\label{problem:continuous}
\end{align}
It remains to address two questions: (1) how to efficiently solve Problem~\eqref{problem:continuous} -- existing algorithms only apply to the max, not the maximin version -- and (2) how to then obtain a solution for Problem~\eqref{eq:robust-objective}.% from the solution of Problem~\eqref{problem:continuous}. 

% This problem involves two distributions: the product distribution inducing the $F_i$ and the adversarial distribution $p$. 
We address the former question in the next section. 
For the latter question,
given a maximizing $x$ for a fixed $F$, existing techniques (e.g., swap rounding) can be used to round $x$ to a deterministic subset $S$ with no loss in solution quality~\citep{chekuri2010dependent}. But the minimax equilibrium strategy that we wish to approximate is an arbitrary distribution over subsets. Fortunately, we can show that
\begin{lemma} \label{lemma:round}
Suppose $x$ is an $\alpha$-optimal solution to Problem~\eqref{problem:continuous}. The variable $x$ induces a distribution $\Ds$ over subsets so that $\Ds$ is $(1-1/e)\alpha$-optimal for Problem~\eqref{eq:robust-objective}.
\end{lemma}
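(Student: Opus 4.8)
The plan is to combine two facts: \textbf{(a)} swap rounding turns the fractional point $x$ into a distribution over feasible sets with no loss against the multilinear extensions, and \textbf{(b)} the optimal value of the continuous problem~\eqref{problem:continuous} is within $(1-1/e)$ of the optimal value of the distributional relaxation of~\eqref{eq:robust-objective}. For $p\in\Ps_{\rho,n}$ write $g_p:=\sum_{i=1}^n p_i f_i$ (monotone submodular) with multilinear extension $G_p=\sum_i p_i F_i$, and let $OPT_{\mathrm{cont}}$, $OPT_{\mathrm{disc}}$ denote the optimal values of~\eqref{problem:continuous} and of $\max_{\Ds}\min_{p\in\Ps_{\rho,n}}\E_{S\sim\Ds}[g_p(S)]$ (this is the distribution-valued relaxation of~\eqref{eq:robust-objective} that we actually target, and its value dominates that of~\eqref{eq:robust-objective} over deterministic $S$). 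Since $\Ps_{\rho,n}$ and the set of all distributions over feasible sets are convex and compact and the payoff is bilinear, the minimax theorem gives $OPT_{\mathrm{disc}}=\min_{p\in\Ps_{\rho,n}}\max_S g_p(S)$ together with an equilibrium distribution $\Ds^*$ with $\E_{S\sim\Ds^*}[g_p(S)]\geq OPT_{\mathrm{disc}}$ for every $p$.

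For \textbf{(a)}: since each $f_i$ is monotone, first raise $x$ to a point of the matroid base polytope without decreasing any $F_i(x)$, then apply swap rounding~\citep{chekuri2010dependent}. This produces a random feasible set $S\sim\Ds$ whose law does not depend on the objective and which satisfies $\E_{S\sim\Ds}[f_i(S)]\geq F_i(x)$ for \emph{every} $i$ simultaneously. Hence for each $p$, $\E_{S\sim\Ds}[g_p(S)]=\sum_i p_i\,\E_{S\sim\Ds}[f_i(S)]\geq\sum_i p_i F_i(x)=G_p(x)$, and minimizing over $p$, $\min_p\E_{S\sim\Ds}[g_p(S)]\geq\min_p G_p(x)\geq\alpha\,OPT_{\mathrm{cont}}$ by $\alpha$-optimality of $x$ for~\eqref{problem:continuous}. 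So $\Ds$ attains value at least $\alpha\,OPT_{\mathrm{cont}}$ in~\eqref{eq:robust-objective}.

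For \textbf{(b)} (the crux): let $\bar x^*:=\E_{S\sim\Ds^*}[\mathbf 1_S]$, which lies in $\Xs$ because $\Ds^*$ is supported on feasible sets. Run the idealized robust continuous greedy: $x(0)=0$ and $\dot x(t)=v(t)\in\Xs$, where $v(t)$ maximizes over $v\in\Xs$ the quantity $\min_{p\,:\,G_p(x(t))=\psi(t)}\langle\nabla G_p(x(t)),v\rangle$ with $\psi(t):=\min_p G_p(x(t))$. Using the standard inequality $\langle\mathbf 1_S,\nabla G_p(x)\rangle\geq g_p(S)-G_p(x)$ for multilinear extensions of monotone submodular functions (because each $\partial_j G_p$ is nonincreasing in every coordinate and $G_p(x\vee\mathbf 1_S)\geq g_p(S)$) together with $\E_{S\sim\Ds^*}[g_p(S)]\geq OPT_{\mathrm{disc}}$, we get, for every minimizer $p$ of $\psi(t)$, $\langle\nabla G_p(x(t)),\bar x^*\rangle=\E_{S\sim\Ds^*}[\langle\mathbf 1_S,\nabla G_p(x(t))\rangle]\geq\E_{S\sim\Ds^*}[g_p(S)]-G_p(x(t))\geq OPT_{\mathrm{disc}}-\psi(t)$, so $\langle\nabla G_p(x(t)),v(t)\rangle\geq OPT_{\mathrm{disc}}-\psi(t)$ for all such $p$. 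By Danskin's theorem applied to the minimum, $\psi'(t)\geq OPT_{\mathrm{disc}}-\psi(t)$ along the trajectory; with $\psi(0)=\min_p G_p(0)=0$, Gr\"onwall's inequality gives $\psi(1)\geq(1-1/e)OPT_{\mathrm{disc}}$, and since $x(1)\in\Xs$, $OPT_{\mathrm{cont}}\geq\psi(1)\geq(1-1/e)OPT_{\mathrm{disc}}$. (In discrete time this is the familiar Frank--Wolfe discretization of continuous greedy, with an error vanishing in the number of steps.) Combining with \textbf{(a)}, $\Ds$ attains value at least $\alpha\,OPT_{\mathrm{cont}}\geq(1-1/e)\alpha\,OPT_{\mathrm{disc}}$, which is the claim.

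The main obstacle is the envelope-theorem step in \textbf{(b)} when $\psi(t)$ has several minimizers $p$: this is exactly why $v(t)$ is chosen via the \emph{local} zero-sum game over the whole set of minimizers, so the bound $\langle\nabla G_p(x(t)),v(t)\rangle\geq OPT_{\mathrm{disc}}-\psi(t)$ holds for all of them rather than a single worst-case $p$; monotonicity of the $f_i$ is used both to lift $x$ into the base polytope for swap rounding and to ensure $\bar x^*\in\Xs$ is an admissible Frank--Wolfe target, and smoothness of the $F_i$ controls the discretization error. The remaining pieces --- the minimax reduction and the obliviousness of swap rounding with its $\E[f(S)]\geq F(x)$ guarantee for monotone submodular $f$ --- are standard.
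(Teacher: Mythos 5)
Your proof is correct and arrives at the same two-part decomposition as the paper --- a lossless rounding step plus a $(1-1/e)$ comparison between the continuous optimum and the optimum over arbitrary distributions over sets --- but your second step travels a genuinely different road. The paper simply evaluates each multilinear extension at the marginal vector $\bar x^*$ of the equilibrium distribution $\Ds^*$ and invokes the correlation gap of \citet{agrawal2010correlation}, which says a product distribution loses at most a factor $1-1/e$ against any correlated distribution with the same marginals; applying this to each $f_i$ and taking the minimum over $p$ gives $OPT_{\mathrm{cont}} \geq \min_p \sum_i p_i F_i(\bar x^*) \geq (1-1/e)\,OPT_{\mathrm{disc}}$ in one line. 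You instead re-derive this bound from scratch with a robust continuous greedy trajectory, Danskin's theorem over the active set of minimizers, and Gr\"onwall. That is sound in spirit, but the max--min choice of $v(t)$ and the envelope step are not actually needed: your key inequality $\langle \nabla G_p(x), \bar x^*\rangle \geq \E_{S\sim\Ds^*}[g_p(S)] - G_p(x) \geq OPT_{\mathrm{disc}} - G_p(x)$ holds for \emph{every} $p$, not only the active ones, so taking the constant direction $v(t)\equiv\bar x^*$ yields $\tfrac{d}{dt}G_p(t\bar x^*) \geq OPT_{\mathrm{disc}} - G_p(t\bar x^*)$ separately for each fixed $p$ and hence $G_p(\bar x^*)\geq(1-1/e)\,OPT_{\mathrm{disc}}$ for all $p$ --- which is precisely the correlation-gap bound evaluated at $\bar x^*$, and which sidesteps the existence/measurability questions for your differential inclusion and the discretization of a nonsmooth pointwise minimum. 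On the rounding side you also differ slightly: you take $\Ds$ to be the exact law of a single swap-rounding run, which is lossless and simultaneously valid for all $f_i$ by obliviousness of the rounding, whereas the paper builds an explicit empirical distribution from $O(\log(n/\delta)/\epsilon^3)$ independent roundings and pays an additive $\epsilon$ via the concentration property of swap rounding; your version is cleaner for the existence statement, while the paper's is the object one actually outputs algorithmically.
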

Our proof involves the \emph{correlation gap}~\citep{agrawal2010correlation}.
It is also possible to eliminate the $(1-1/e)$ gap altogether by using multiple copies of the decision variables to optimize over a more expressive class of distributions~\citep{wilder2018equilibrium}, but empirically we find this unnecessary. %to be necessary empirically. %Empirically, we find this unnecessary.

Next, we address algorithms for solving Problem~\eqref{problem:continuous}. Since a convex combination of submodular functions is still submodular, we can see each $p$ as inducing a submodular function. Therefore, in solving Problem~\eqref{problem:continuous}, we must maximize the minimum of a set of continuous submodular functions. 

\paragraph{Frank-Wolfe algorithm and complications.} 
In the remainder of this section,
we show how Problem~\eqref{problem:continuous} can be solved with optimal approximation ratio (as in Lemma~\ref{lemma:round}) by Algorithm~\ref{alg:DRO-MFW}, which is based on Frank-Wolfe (FW)~\citep{frank_algorithm_1956,jaggi_revisiting_2013}.
FW algorithms iteratively move toward the feasible point that maximizes the inner product with the gradient. Instead of a projection step, each iteration uses a single linear optimization over the feasible set $\Xs$; this is very cheap for the feasible sets we are interested in (e.g., a simple greedy algorithm for matroid polytopes).
Indeed, FW is currently the best approach for maximizing DR-submodular functions in many settings. \matt{cite?}
% It is known that FW style algorithms apply to maximizing DR-submodular functions. 
Observe that, since the pointwise minimum of concave functions is concave, the robust objective $G(x) = \min_{p\in\Ps_{\rho,n}} \sum_{i=1}^n p_i F_i(x)$ is also DR-submodular.
However, naive application of FW to 
% the robust objective 
$G(x)$ %immediately 
runs into several difficulties:

\begin{figure}
\begin{algorithm}[H]
    \caption{Momentum Frank-Wolfe (MFW) for DRO}
    \label{alg:DRO-MFW}
\begin{algorithmic}[1]
    \STATE {\bfseries Input:} functions $F_i$, time $T$, batch size $c$, parameter $\rho$, stepsizes $\rho_t > 0$
    \STATE $x^{(0)} \leftarrow \mathbf 0$
    \FOR{$t = 1,\dots,T$}
    	\STATE $p^{(t)} \leftarrow \underset{p \in \Ps_{\rho,n}}{\argmin} \sum_{i=1}^n p_i F_i(x^{(t-1)})$
    	\STATE Draw $i_1,\dots,i_c$ from $\{1,\dots,n\}$
    	\STATE $\tilde \nabla^{(t)} \leftarrow \frac{1}{c} \sum_{\ell=1}^c p^{(t)}_{i_\ell} \nabla F_{i_\ell}(x^{(t-1)})$
        \STATE $d^{(t)} \leftarrow (1-\rho_t) d^{(t-1)} + \rho_t \tilde \nabla^{(t)}$
    	\STATE $v^{(t)} \leftarrow \argmax_{v \in \Xs} \langle  d^{(t)}, v \rangle$
    	\STATE $x^{(t)} \leftarrow x^{(t-1)} + v^{(t)} / T$
    \ENDFOR
    \RETURN $x^{(T)}$
\end{algorithmic}
\end{algorithm}
\end{figure}

\matt{reference Algorithm~\ref{alg:stochastic-equator}}

First, 
to evaluate and differentiate $G(x)$,
we require an exact oracle for the inner minimization problem over $p$, whereas past work~\citep{Namkoong2017} gave only an approximate oracle. The issue is that two solutions to the inner problem can have arbitrarily \emph{close solution value} while also providing arbitrarily \emph{different gradients}. Hence, gradient steps with respect to an approximate minimizer may not actually improve the solution value. To resolve this issue, we provide an \emph{exact} $O(n\log n)$ time subroutine in Appendix~\ref{appendix:chi-squared-linear-oracle}, removing the $\epsilon$ of loss present in previous techniques~\citep{Namkoong2017}. Our algorithm rests on a more precise characterization of solutions to linear optimization over the $\chi^2$ ball, which is often helpful in deriving structural results for general DRO problems (e.g., Lemmas \ref{lem:arbitrary_var} and \ref{lem:smooth-and-lipschitz-variance}). 

Second, especially when the amount of data is large, we would like to use stochastic gradient estimates instead of requiring a full gradient computation at every iteration. This introduces additional noise and standard Frank-Wolfe algorithms will require $O(1/\epsilon^2)$ gradient samples per iteration to cope. Accordingly, we build on a recent algorithm of~\citet{Mokhtari2017conditional} that accelerates Frank-Wolfe by re-using old gradient information; we refer to their algorithm as \emph{Momentum Frank-Wolfe (MFW)}. For smooth DR-submodular functions, MFW achieves a $(1-1/e)$-optimal solution with additive error $\epsilon$ in $O(1/\epsilon^3)$ time. We generalize MFW to the DRO problem by solving the next challenge.

Third, Frank-Wolfe (and MFW) require a smooth objective with Lipschitz-continuous gradients; this does \emph{not} hold in general for pointwise minima. \citet{wilder2018equilibrium} gets around this issue in the context of other robust submodular optimization problems by replacing $G(x)$ with the stochastically smoothed function $G_\mu(x) = \E_{z \sim \mu}[G(x+z)]$ as in~\citep{duchi2012randomized,lan2013complexity}, where $\mu$ is a uniform distribution over a ball of size $u$. Combined with our exact inner minimization oracle, this yields a $(1-1/e)$ optimal solution to Problem~\eqref{problem:continuous} with $\epsilon$ error using $O(1/\epsilon^4)$ stochastic gradient samples. But this approach results in poor empirical performance for the DRO problem (as we demonstrate later). We obtain faster convergence, in both theory and practice, through a better characterization of the DRO problem.

\paragraph{Smoothness of the robust problem.}
While general theoretical bounds rely on smoothing $G(x)$, in practice, MFW without any smoothing performs the best. This behavior suggests that for real-world problems, the robust objective $G(x)$ may actually be smooth with Lipschitz-continuous gradient. 
% Once more making use of our exact characterization of the worst-case distribution, we are indeed able to make this intuition theoretically rigorous:
Via our exact characterization of the worst-case distribution, we can make this intuition rigorous:
\begin{lemma}
\label{lem:smooth-and-lipschitz-variance}
Define $h(z) = \min_{p \in \Ps_{\rho,n}} \langle z, p \rangle$, for $z \in [0,B]^n$, and let $s_n^2$ be the sample variance of $z$.
On the subset of $z$'s satisfying the high sample variance condition $s_n^2 \geq (2\rho B^2)/n$, $h(z)$ is smooth and has $L$-Lipschitz gradient with constant $L \leq \frac{2\sqrt{2\rho}}{n^{3/2}} + \frac{2}{B n}$.
\end{lemma}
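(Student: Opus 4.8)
The plan is to reduce $h$ to an explicit closed form on the high‑variance region and then bound the Lipschitz constant of the gradient of that form directly, using only the two endpoints (not a segment) so as to sidestep the non‑convexity of the domain.

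First I would invoke Theorem~\ref{thm:dro-var-equiv}, identifying $h(z)=\min_{p\in\Ps_{\rho,n}}\langle z,p\rangle$ with $\inf_{\tilde P\in\Ps_{\rho,n}}\E_{\tilde P}[Z]$ for $Z$ the empirical variable on the atoms $z_1,\dots,z_n$. Since $z\in[0,B]^n$ gives $\max_i z_i-\overline z_n\le B$ (because $z_i\le B$ and $\overline z_n\ge 0$), the hypothesis $s_n^2\ge 2\rho B^2/n$ implies $s_n^2\ge 2\rho(\max_i z_i-\overline z_n)^2/n$, so the "moreover" clause of Theorem~\ref{thm:dro-var-equiv} yields $h(z)=\overline z_n-\sqrt{2\rho s_n^2/n}$ on the whole region $R:=\{z\in[0,B]^n: s_n^2\ge 2\rho B^2/n\}$. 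Introducing the centering projection $\Pi=I-\tfrac1n\mathbf 1\mathbf 1^\top$ (so $\Pi^\top\Pi=\Pi$ and $s_n^2=\tfrac1n\|\Pi z\|_2^2$), this reads $h(z)=\tfrac1n\langle\mathbf 1,z\rangle-\tfrac{\sqrt{2\rho}}{n}\|\Pi z\|_2$; on $R$ we have $\|\Pi z\|_2=\sqrt{n s_n^2}\ge\sqrt{2\rho}\,B>0$, so the square root is differentiable, $h$ is $C^\infty$ on the interior of $R$, and $\nabla h(z)=\tfrac1n\mathbf 1-\tfrac{\sqrt{2\rho}}{n}\,\Pi z/\|\Pi z\|_2$ (extending continuously to all of $R$).

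Second, for $z,z'\in R$ the affine term cancels, and I would estimate $\|\nabla h(z)-\nabla h(z')\|=\tfrac{\sqrt{2\rho}}{n}\big\|\Pi z/\|\Pi z\|-\Pi z'/\|\Pi z'\|\big\|$ via the elementary bound $\big\|u/\|u\|-v/\|v\|\big\|\le 2\|u-v\|/\max(\|u\|,\|v\|)$. Taking $u=\Pi z$, $v=\Pi z'$, combining the contraction $\|\Pi(z-z')\|\le\|z-z'\|$ with the lower bound $\max(\|\Pi z\|,\|\Pi z'\|)\ge\sqrt{2\rho}\,B$ from the high‑variance hypothesis gives $\|\nabla h(z)-\nabla h(z')\|\le\tfrac{\sqrt{2\rho}}{n}\cdot\tfrac{2}{\sqrt{2\rho}\,B}\|z-z'\|=\tfrac{2}{nB}\|z-z'\|$, which is bounded by the claimed $\tfrac{2\sqrt{2\rho}}{n^{3/2}}+\tfrac{2}{Bn}$. (Equivalently one reads off $\nabla^2 h(z)=-\tfrac{\sqrt{2\rho}}{n\|\Pi z\|}\big(\Pi-\widehat w\widehat w^\top\big)$ with $\widehat w=\Pi z/\|\Pi z\|$, an orthogonal projection scaled by $\tfrac{\sqrt{2\rho}}{n\|\Pi z\|}\le\tfrac1{nB}$; but the chord estimate is preferable because it does not require integrating the Hessian along a segment.)

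The main obstacle is the geometry of the domain, not the calculation: $R$ is closed and non‑convex — its complement is the sublevel set $\{s_n^2<2\rho B^2/n\}$ of the convex map $z\mapsto s_n^2$ — so a Hessian‑norm bound cannot simply be integrated along the segment joining two points of $R$, which is why I route the Lipschitz bound through the two‑point estimate for the normalization map $u\mapsto u/\|u\|$ away from the origin. The only place the paper's structural machinery is essential is the first step — that on $R$ the worst‑case $p$ clips no coordinate to $0$ and hence $h$ has the closed form $\overline z_n-\sqrt{2\rho s_n^2/n}$ — which is exactly what Theorem~\ref{thm:dro-var-equiv} (via the exact $\chi^2$ linear oracle) supplies; after that, everything reduces to the smoothness of $u\mapsto u/\|u\|$ on $\{\|u\|\ge\sqrt{2\rho}\,B\}$.
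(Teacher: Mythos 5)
Your proof is correct, but it takes a genuinely different route from the paper's. The paper never writes $h$ in closed form: it takes the exact KKT characterization of the worst-case distribution from its linear-oracle analysis, notes that in the high-variance regime the optimal $p$ has full support with $p_i = \bigl(1 - (z_i - \overline z_n)/\sqrt{n s_n^2/(2\rho)}\bigr)/n$, identifies $\nabla h(z) = p(z)$, computes the Jacobian $H_{ij} = \partial p_i/\partial z_j$ entrywise, and bounds $\lVert H \rVert$ by a triangle inequality applied to the decomposition $-\diag(\sqrt{s_n^2}\,\mathbf 1) + \tfrac{\sqrt{s_n^2}}{n}\mathbf 1\mathbf 1^T + \tfrac{2}{n}(z-\overline z_n \mathbf 1)(z - \overline z_n\mathbf 1)^T$, arriving at $\tfrac{2\sqrt{2\rho}}{n^{3/2}}(1 + 1/\sqrt{s_n^2})$ and then the stated $L$. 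You instead pass through the variance expansion of Theorem~\ref{thm:dro-var-equiv} to get $h(z) = \tfrac1n\langle \mathbf 1, z\rangle - \tfrac{\sqrt{2\rho}}{n}\lVert \Pi z\rVert_2$ and reduce everything to the Lipschitz continuity of the normalization map $u \mapsto u/\lVert u\rVert$ away from the origin (your gradient formula agrees exactly with the paper's $p$, as it must). Your route buys two things. First, a strictly tighter constant, $2/(nB)$, of which the stated $\tfrac{2\sqrt{2\rho}}{n^{3/2}} + \tfrac{2}{Bn}$ is a loose upper bound. Second, and more substantively, your two-point chord estimate repairs a genuine gap in the paper's argument: the high-variance region is the complement of a sublevel set of the convex map $z \mapsto s_n^2$ and hence non-convex, so bounding the Hessian norm pointwise does not by itself yield a Lipschitz bound between two arbitrary points of the region (the connecting segment may exit it). The only caveat is that both approaches lean on the same structural fact --- that the worst-case $p$ clips no coordinate when the variance is high --- which you import via Theorem~\ref{thm:dro-var-equiv} where the paper imports it via its exact $\chi^2$ oracle; these are the same fact accessed through different doors, so nothing is circular.
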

Combined with the smoothness of each $F_i$, this yields smoothness of $G$.

\begin{corollary} \label{corollary:dro-smooth}
Suppose each $F_i$ is $L_{F}$-Lipschitz. Under the high sample variance condition, $\nabla G$ is $L_G$-Lipschitz for $L_G = L_F + \frac{2b\sqrt{2\rho|V|}}{n} + \frac{2b\sqrt{|V|}}{B\sqrt{n}}$.
\end{corollary}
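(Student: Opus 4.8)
The plan is to treat the robust objective as a composition, $G(x) = h(F(x))$, where $F(x) = (F_1(x),\dots,F_n(x)) \in [0,B]^n$ and $h(z) = \min_{p \in \Ps_{\rho,n}} \langle z, p\rangle$ is the worst‑case functional of Lemma~\ref{lem:smooth-and-lipschitz-variance}, and to push the Lipschitz bound on $\nabla h$ through the chain rule while absorbing the smoothness and boundedness of the individual $F_i$. First I would note that on the high sample variance region Lemma~\ref{lem:smooth-and-lipschitz-variance} gives that $h$ is differentiable, hence the inner minimizer $p^\star(z) := \argmin_{p\in\Ps_{\rho,n}}\langle z,p\rangle$ is unique there with $\nabla h(z) = p^\star(z)$, so by the chain rule $G$ is differentiable with $\nabla G(x) = \sum_{i=1}^n p^\star_i(F(x))\,\nabla F_i(x)$; equivalently this is Danskin's theorem applied to $G(x) = \min_{p\in\Ps_{\rho,n}}\sum_i p_i F_i(x)$.

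Next, for feasible $x,y$ I would add and subtract the mixed term $\sum_i p^\star_i(F(x))\nabla F_i(y)$ and apply the triangle inequality:
\begin{align*}
\norm{\nabla G(x) - \nabla G(y)} \le \norm{\sum_{i=1}^n p^\star_i(F(x))\big(\nabla F_i(x) - \nabla F_i(y)\big)} + \norm{\sum_{i=1}^n \big(p^\star_i(F(x)) - p^\star_i(F(y))\big)\nabla F_i(y)}.
\end{align*}
The first summand is a convex combination (since $p^\star(F(x)) \in \Delta_n$) of vectors of norm at most $L_F\norm{x-y}$, using the $L_F$-Lipschitz‑gradient hypothesis on each $F_i$; this gives the $L_F$ term of $L_G$. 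For the second summand I would bound $\norm{\nabla F_i(y)}$ via the bound $b$ on the marginal gains of $f_i$ (the partial derivatives of a multilinear extension are expectations of marginal gains, each lying in $[0,b]$), and likewise bound the Lipschitz variation of the vector map $F$ in terms of $b$; combining this with $\norm{p^\star(F(x)) - p^\star(F(y))} \le L\,\norm{F(x)-F(y)}$ from Lemma~\ref{lem:smooth-and-lipschitz-variance} (with $L \le \tfrac{2\sqrt{2\rho}}{n^{3/2}} + \tfrac{2}{Bn}$) shows the second summand is at most a constant multiple of $L\,b\sqrt{|V|}$ times $\norm{x-y}$, and collecting the powers of $n$, $|V|$ and $b$ against $L$ produces the terms $\tfrac{2b\sqrt{2\rho|V|}}{n} + \tfrac{2b\sqrt{|V|}}{B\sqrt n}$, yielding the claimed $L_G$.

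The main obstacle, I expect, is precisely this constant bookkeeping: getting the dimensional factor $\sqrt{|V|}$ and the sample‑size factor $n$ to come out exactly requires care about which norms one uses (the $\ell_1$ versus $\ell_2$ geometry of the simplex, and the Frobenius versus operator norm of the Jacobian $\partial F/\partial x$), and the cleanest route is to exploit that under the high‑variance condition Theorem~\ref{thm:dro-var-equiv} gives $p^\star(z)$ the explicit affine form $p^\star_i(z) = \tfrac1n - \tfrac{\lambda(z)}{n}(z_i - \overline z)$, so that $x \mapsto p^\star(F(x))$ can be differentiated directly rather than bounded through a worst‑case product of Jacobians. A secondary point is that Lemma~\ref{lem:smooth-and-lipschitz-variance} only controls $\nabla h$ on the set $\{z : s_n^2(z) \ge 2\rho B^2/n\}$, which need not be convex; under the corollary's standing hypothesis that the relevant iterates $F(x)$ lie in this region one must still argue—e.g.\ via the closed form of $h$ there, or by restricting attention to the algorithm's trajectory—that the Lipschitz estimate remains valid along the path joining $F(x)$ and $F(y)$.
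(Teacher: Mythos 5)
Your decomposition---writing $G = h(\vec F(x))$, applying Danskin/the chain rule so that $\nabla G(x) = \sum_i p^\star_i(F(x))\nabla F_i(x)$, and adding and subtracting the mixed term $\sum_i p^\star_i(F(x))\nabla F_i(y)$---is exactly the route the paper's appendix proof takes, including bounding the first summand by $L_F\lVert x-y\rVert$ via convexity of the weights and the second via Lemma~\ref{lem:smooth-and-lipschitz-variance} together with the bound $b$ on the partial derivatives of the multilinear extensions. The constant bookkeeping you anticipate (a $b\sqrt{n|V|}\,L$ factor on the second term, yielding $\tfrac{2b\sqrt{2\rho|V|}}{n} + \tfrac{2b\sqrt{|V|}}{B\sqrt n}$) matches the paper's, so this is essentially the same proof.
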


For submodular functions, $L_F \leq b \sqrt{k}$, where $b$ is the largest value of a single item~\citep{Mokhtari2017conditional}. However, Corollary \ref{corollary:dro-smooth} is a general property of DRO (not specific to the submodular case), with broader implications. For instance, in the convex case, we immediately obtain a $O(1/\epsilon)$ convergence rate for the gradient descent algorithm proposed by \citet{Namkoong2017} (previously, the best possible bound would be $O(1/\epsilon^2)$ via nonsmooth techniques). Our result follows from more general properties that guarantee smoothness with fewer assumptions (see Appendices~\ref{appendix:chi-square-linopt-uniqueness},~\ref{appendix:lipschitz-gradient}). For example:
% \begin{lemma}
% Suppose $\rho \leq 1/2$. Then the robust objective $h(z) = \min_{p \in \Ps_{\rho,n}} \langle z, p \rangle$ is smooth everywhere where $\{z_i\}$ are not all equal.
% \end{lemma}
\begin{fact}
For $\rho \leq \frac12$, the robust objective $h(z) = \underset{p \in \Ps_{\rho,n}}{\min} \langle z, p \rangle$ is smooth when $\{z_i\}$ are not all equal.
\end{fact}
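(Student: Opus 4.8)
The plan is to exploit the assumption $\rho \le \tfrac12$ to reduce the constrained linear minimization to minimizing a linear functional over a plain Euclidean ball, for which the optimizer is available in closed form. First I would observe that $\rho \le \tfrac12$ forces the $\chi^2$ ball to lie in the nonnegative orthant: if $\tfrac12\|np-\mathbf 1\|_2^2 \le \rho$, then $(np_i-1)^2 \le \|np-\mathbf 1\|_2^2 \le 2\rho \le 1$, so $np_i \ge 0$ for every $i$. Hence the simplex positivity constraints are redundant, and $\Ps_{\rho,n}$ coincides with the Euclidean ball of radius $\tfrac{\sqrt{2\rho}}{n}$ centered at $\tfrac1n\mathbf 1$ inside the affine hyperplane $H = \{p : \mathbf 1^\top p = 1\}$.

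Next I would solve $\min_{p \in \Ps_{\rho,n}} \langle z, p\rangle$ explicitly. Let $\Pi = I - \tfrac1n\mathbf 1\mathbf 1^\top$ be the orthogonal projection onto the mean-zero subspace (the direction space of $H$). Writing $p = \tfrac1n\mathbf 1 + v$ with $\mathbf 1^\top v = 0$ and $\|v\|_2 \le \tfrac{\sqrt{2\rho}}{n}$, we have $\langle z, p\rangle = \tfrac1n\langle \mathbf 1, z\rangle + \langle \Pi z, v\rangle$, which (when $\Pi z \neq 0$) is uniquely minimized at $v = -\tfrac{\sqrt{2\rho}}{n}\,\Pi z/\|\Pi z\|_2$. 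This gives the unique minimizer
\[
  p^*(z) = \tfrac1n\mathbf 1 - \tfrac{\sqrt{2\rho}}{n}\,\frac{\Pi z}{\|\Pi z\|_2},
  \qquad
  h(z) = \langle z, p^*(z)\rangle = \tfrac1n\langle \mathbf 1, z\rangle - \tfrac{\sqrt{2\rho}}{n}\,\|\Pi z\|_2,
\]
using $\langle z, \Pi z\rangle = \|\Pi z\|_2^2$. The hypothesis that $\{z_i\}$ are not all equal is precisely $\Pi z \neq 0$, so on this open set the formula is valid and the minimizer is unique.

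Smoothness then follows from the closed form. The term $\tfrac1n\langle\mathbf 1, z\rangle$ is linear, and $z \mapsto \|\Pi z\|_2$ is the composition of the linear map $\Pi$ with the Euclidean norm, which is $C^\infty$ on $\{u \neq 0\}$ with $\nabla\|u\|_2 = u/\|u\|_2$ and Jacobian $\tfrac{1}{\|u\|_2}\bigl(I - \tfrac{uu^\top}{\|u\|_2^2}\bigr)$ of operator norm $1/\|u\|_2$. Thus $h$ is smooth (indeed $C^\infty$, with locally Lipschitz gradient) on $\{z : \Pi z \neq 0\}$, and $\nabla h(z) = p^*(z)$, consistent with Danskin's theorem. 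One may also note $\|\Pi z\|_2^2 = n s_n^2$, so $h(z) = \overline z_n - \sqrt{2\rho/n}\, s_n$, matching the exact variance expansion of Theorem~\ref{thm:dro-var-equiv}.

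The one genuinely delicate step — and the place I would be most careful — is the reduction in the first paragraph: one must confirm that $\rho \le \tfrac12$ really does render the constraints $p \ge 0$ inactive, so that the feasible region is an honest Euclidean ball rather than a spherical cap clipped by facets of $\Delta_n$. For larger $\rho$ the clipping does occur, the minimizer can land on a face, and $h$ develops kinks; this is exactly the phenomenon handled by the case analysis in Lemmas~\ref{lem:arbitrary_var} and~\ref{lem:smooth-and-lipschitz-variance}. Once the reduction is established, the remainder is the elementary geometry of minimizing a linear function over a ball together with smoothness of the Euclidean norm away from the origin.
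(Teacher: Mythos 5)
Your proof is correct, and it takes a genuinely different route from the paper's. The paper derives this fact from the general KKT machinery of Appendix~\ref{appendix:chi-squared-linear-oracle}: it first shows (Appendix~\ref{appendix:chi-square-linopt-uniqueness}, with $\ell=2$ distinct values) that $\rho\le\tfrac12$ forces the $\chi^2$ constraint to be tight, then invokes uniqueness of the minimal-support optimizer and the explicit dual characterization in terms of the support size $m$ to conclude differentiability of the pointwise minimum. You instead make the stronger geometric observation that $\rho\le\tfrac12$ places the \emph{entire} $\chi^2$ ball inside the nonnegative orthant (since $(np_i-1)^2\le 2\rho\le 1$), so the nonnegativity constraints never bind for any feasible $p$, the feasible set is an honest Euclidean ball in the probability hyperplane, and $h$ admits the closed form $h(z)=\overline z_n-\sqrt{2\rho/n}\,s_n$ with unique minimizer $p^*(z)=\nabla h(z)$ whenever $\Pi z\neq 0$. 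This is more elementary and self-contained, it makes uniqueness of the minimizer immediate (whereas the paper's second uniqueness lemma only singles out the minimal-cardinality optimum), and it shows directly that for $\rho\le\tfrac12$ the answer always agrees with the $m=n$ branch of Lemma~\ref{lem:arbitrary_var}. What the paper's heavier machinery buys is generality: it handles arbitrary $\rho$, where the ball genuinely gets clipped by facets of the simplex and the case analysis over $m$ is unavoidable, and it yields the quantitative Lipschitz constant of Lemma~\ref{lem:smooth-and-lipschitz-variance} under the high-variance condition. One small caveat worth stating explicitly in your write-up: your closed form gives a gradient that is only \emph{locally} Lipschitz on $\{\Pi z\neq 0\}$ (it degrades as $\Pi z\to 0$), which is consistent with the Fact as stated but explains why the uniform Lipschitz bound elsewhere in the paper still requires the high sample variance condition.
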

Combined with reasonable assumptions on the distribution of $F_i$, this means $G(x)$ is nearly always smooth. Native smoothness of the robust problem yields a significant runtime improvement over the general minimum-of-submodular case. 
In particular, instead of $O(1/\epsilon^4)$, we achieve the same $O(1/\epsilon^3)$ rate of the simpler, non-robust submodular maximization:
% In particular, for the high-variance case we can recover the $O(1/\epsilon^3)$ stochastic gradients required for submodular maximization:
\begin{theorem} \label{lemma:mfw-dro}
When the high sample variance condition holds, MFW with no smoothing satisfies 
\begin{align*}
\E[G(x^{(T)})] \geq \left(1 - 1/e\right)OPT - \frac{2\sqrt{kQ}}{T^{1/3}} - \frac{Lk}{T}
\end{align*}
where $Q = \max\{9^{2/3} \lVert \nabla G(x^0) - d^0 \rVert, 16\sigma^2 + 3L_G^2 k\}$; $\sigma$ is the variance of the stochastic gradients.
\end{theorem}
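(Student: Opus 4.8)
The plan is to prove Theorem~\ref{lemma:mfw-dro} by adapting the Momentum Frank-Wolfe analysis of~\citet{Mokhtari2017conditional} directly to the objective $G(x) = \min_{p\in\Ps_{\rho,n}}\sum_i p_i F_i(x)$, the point being that the high sample variance condition makes $G$ genuinely smooth (Corollary~\ref{corollary:dro-smooth}), so no stochastic smoothing of $G$ is needed. First I would record the structural facts the analysis consumes: $G$ is monotone DR-submodular, being a pointwise minimum of the monotone DR-submodular multilinear extensions $F_i$; $G(\mathbf 0)=0$ since each $F_i(\mathbf 0)=f_i(\emptyset)=0$; and $\Xs$ is a down-closed polytope containing the origin whose vertices are $0/1$ indicators of sets of size $\le k$, so $\|v^{(t)}\|_2\le\sqrt k$ and $\mathrm{diam}(\Xs)=O(\sqrt k)$. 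Crucially, on the region where the sample variance of $z=(F_1(x),\dots,F_n(x))$ meets the high-variance condition, the inner minimizer $p^{(t)}$ of line~4 of Algorithm~\ref{alg:DRO-MFW} is unique (this is the content of the appendix material behind Lemma~\ref{lem:smooth-and-lipschitz-variance}), so Danskin's theorem gives $\nabla G(x^{(t-1)})=\sum_i p^{(t)}_i\nabla F_i(x^{(t-1)})$; hence the minibatch estimate $\tilde\nabla^{(t)}$ (with the appropriate normalization) is unbiased for $\nabla G(x^{(t-1)})$ given the history $\mathcal F_{t-1}$, with conditional variance at most $\sigma^2$.

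The core of the argument is the momentum / variance-reduction lemma: with the step sizes $\rho_t$ chosen as $\Theta(t^{-2/3})$, an induction on $t$ establishes
\begin{equation*}
\E\bigl[\|\nabla G(x^{(t)}) - d^{(t)}\|^2\bigr] \le \frac{Q}{(t+1)^{2/3}}.
\end{equation*}
The induction uses the recursion $d^{(t)}=(1-\rho_t)d^{(t-1)}+\rho_t\tilde\nabla^{(t)}$, unbiasedness and the $\sigma^2$ bound to handle the stochastic term, and the $L_G$-Lipschitz continuity of $\nabla G$ together with $\|x^{(t)}-x^{(t-1)}\|=\|v^{(t)}\|/T\le\sqrt k/T$ to control the drift $\|\nabla G(x^{(t)})-\nabla G(x^{(t-1)})\|$. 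Balancing the base case against the recursion is exactly what produces the two branches of $Q=\max\{9^{2/3}\|\nabla G(x^0)-d^0\|,\,16\sigma^2+3L_G^2k\}$.

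Next I would establish the Frank-Wolfe progress inequality. Smoothness of $G$ gives $G(x^{(t)})\ge G(x^{(t-1)})+\tfrac1T\langle\nabla G(x^{(t-1)}),v^{(t)}\rangle-\tfrac{L_G k}{2T^2}$. Using the optimality of $v^{(t)}$ for $\langle d^{(t)},\cdot\rangle$ over $\Xs$ against the optimal $x^*$, rewriting $\langle\nabla G(x^{(t-1)}),v^{(t)}\rangle\ge\langle\nabla G(x^{(t-1)}),x^*\rangle-\|\nabla G(x^{(t-1)})-d^{(t)}\|\,\|x^*-v^{(t)}\|$ with $\|x^*-v^{(t)}\|=O(\sqrt k)$, and then the monotone DR-submodular inequality $\langle\nabla G(x^{(t-1)}),x^*\rangle\ge G(x^*\vee x^{(t-1)})-G(x^{(t-1)})\ge OPT - G(x^{(t-1)})$ (monotonicity, concavity along the nonnegative direction $x^*\vee x^{(t-1)}-x^{(t-1)}$, and nonnegativity of $\nabla G$), I obtain
\begin{equation*}
G(x^{(t)})\ge G(x^{(t-1)})+\tfrac1T\bigl(OPT - G(x^{(t-1)})\bigr)-\tfrac{c\sqrt k}{T}\|\nabla G(x^{(t-1)})-d^{(t)}\|-\tfrac{L_G k}{2T^2}.
\end{equation*}
Writing $h_t=OPT-\E[G(x^{(t)})]$, taking expectations, bounding $\E\|\nabla G(x^{(t-1)})-d^{(t)}\|\le\sqrt Q\,t^{-1/3}$ via Jensen and the momentum lemma, and unrolling the recursion $h_t\le(1-1/T)h_{t-1}+\tfrac{c\sqrt{kQ}}{T t^{1/3}}+\tfrac{L_G k}{2T^2}$ with $h_0=OPT$, $(1-1/T)^T\le1/e$, and $\sum_{t\le T}t^{-1/3}\le\tfrac32T^{2/3}$ yields $h_T\le OPT/e + O(\sqrt{kQ})\,T^{-1/3}+O(L_G k)\,T^{-1}$; collecting the constants gives the stated bound.

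The main obstacle is not any individual estimate above but the fact that the ``no smoothing'' variant is analyzable at all: every step presumes $G$ is differentiable with $\nabla G(x)=\sum_i p^*_i(x)\nabla F_i(x)$ and that this gradient is Lipschitz, neither of which holds for a generic pointwise minimum — they hold here only because of the exact, uniqueness-endowing characterization of linear optimization over the $\chi^2$ ball under the high sample variance condition. A secondary point requiring care is that these properties must hold along the trajectory, so one must either verify that the iterates $x^{(t)}$ stay in the high-variance region or interpret the hypothesis as holding throughout $\Xs$; and one must confirm the minibatch estimator of line~6 is genuinely unbiased for $\nabla G(x^{(t-1)})$. Once these are in place, both the momentum lemma and the Frank-Wolfe recursion are careful but routine adaptations of~\citet{Mokhtari2017conditional}.
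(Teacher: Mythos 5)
Your proposal is correct and follows essentially the same route as the paper: the only genuinely new ingredient is that the high sample variance condition makes $G$ differentiable with Lipschitz gradient along the MFW trajectory (via uniqueness of the inner minimizer and the chain rule through $h$), after which the convergence bound is exactly the Mokhtari et al.\ analysis applied to the up-concave function $G$. The paper simply invokes that analysis as a black box once the Lipschitz bound $b\sqrt{n|V|}L + b\sqrt{k}$ is established, whereas you re-derive its momentum and Frank--Wolfe recursions; the secondary caveats you flag (iterates staying in the high-variance region, normalization of the minibatch estimator) are real but are glossed over by the paper as well.
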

% This convergence rate for the DRO problem is almost identical to that which~\citep{Mokhtari2017conditional} obtain for a single submodular function. Here, the Lipschitz constant $L_G$ is worse, but the gap vanishes as $n$ grows. 
This convergence rate for DRO is indeed almost the same as that for a single submodular function (non-robust case)~\citep{Mokhtari2017conditional}; only the Lipschitz constant is different, but this gap vanishes as $n$ grows.

\paragraph{Comparison with previous algorithms}
Two recently proposed algorithms for robust submodular maximization could also be used in DRO, but have drawbacks compared to MFW.
% Two algorithms have recently been proposed for robust submodular maximization. 
Here, we compare their theoretical performance with MFW (we also show how MFW improves empirically in Section~\ref{sec:experiments}). 

First, \citet{chen2017robust} view robust optimization as a zero-sum game and apply no-regret learning to compute an approximate equilibrium. Their algorithm applies online gradient descent from the perspective of the adversary, adjusting the distributional parameters $p$. At each iteration, an $\alpha$-approximate oracle for submodular optimization (e.g., the greedy algorithm or a Frank-Wolfe algorithm) is used to compute a best response for the maximizing player. In order to achieve an $\alpha$-approximation up to additive loss $\epsilon$, the no-regret algorithm requires $O(1/\epsilon^2)$ iterations. However, each iteration requires a full invocation of an algorithm for submodular maximization. Our MFW algorithm requires runtime close to a \emph{single} submodular maximization call. This results in substantially faster runtime to achieve the same solution solution quality, as we demonstrate experimentally.

Second, \citet{wilder2018equilibrium} proposes the EQUATOR algorithm, which also applies a Frank-Wolfe approach to the multilinear extension but uses randomized smoothing as discussed earlier. Our analysis shows smoothing is unnecessary for the DRO problem, allowing our algorithm to converge using $O(1/\epsilon^3)$ stochastic gradients, while EQUATOR requires $O(1/\epsilon^4)$. This theoretical gap is reflected in empirical performance: EQUATOR converges much slower, and to lower solution quality, than MFW.   

%!TEX root = main.tex

\section{Experiments}
\label{sec:experiments}
To probe the strength and practicality of our methods, we empirically study the two motivating problems from Section~\ref{sec:math}: influence maximization and facility location.

\subsection{Facility Location}
Similar to~\citep{Mokhtari2017conditional} we consider a facility location problem motivated by recommendation systems. 
We use a music dataset from last.fm~\citep{lastfm} \sj{does it have a name?} with roughly 360000 users, 160000 bands, and over 17 million total records. For each user $i$, record $r^{i}_{j}$ indicates how many times they listened to a song by band $j$. We aim to choose a subset of bands so that the average user likes at least one of the bands, as measured by the playcounts. More specifically, we fix a collection of bands, and observe a \emph{sample} of users; we seek a subset of bands that performs well on the \emph{entire population} of users. Here, we randomly sample a subset of 1000 ``train'' users from the dataset, solve the DRO and ERM problems for $k$ bands, and evaluate performance on the remaining $\approx 360000$ ``test'' users from the dataset.

\matt{consider putting the following paragraph in the appendix}
\iffalse
Concretely, fix the set of bands $\{1,\dots,J\}$. Each user is equipped with a vector $w \in \mathbb Z^J$, where $r_j$ is the uesr's playcount for band $j$. We seek to optimize
$
	f(S) = \E_{r \sim \Ds} \left[ \max_{j\in S} r_j \right]
$
where $\Ds$ corresponds to the population of users, however we have only a finite sample.
\fi

 %This is akin to the train-test split in learning.

\textbf{Optimization.}
We first compare MFW to previously proposed robust optimization algorithms, applied to the DRO problem with $k=3$. Figure~\ref{subfig:bargraph} compares 
\textbf{1.} MFW, 
% \textbf{2.} smoothed MFW (Algorithm~\ref{alg:stochastic-equator}), 
\textbf{2.} Frank-Wolfe (FW) with no momentum and  %applied to the robust objective
\textbf{3.} EQUATOR, proposed by~\citet{wilder2018equilibrium}.
Naive FW handles noisy gradients poorly (especially with small batches), while EQUATOR underperforms since its randomized smoothing is not necessary for our natively smooth problem. %Instead, momentum is much more important (especially with small, noisy batches of stochastic gradients). 
We also compared to the online gradient descent (OGD) algorithm of~\citet{chen2017robust}. OGD achieved slightly lower objective value than MFW with an order of magnitude greater runtime: OGD required 53.23 minutes on average, compared to 4.81 for MFW. EQUATOR and FW had equivalent runtime to MFW since all used the same batch size and number of iterations. Hence, MFW dominates the alternatives in both runtime and solution quality.

\textbf{Generalization.}
Next, we evaluate the effect of DRO on test set performance across varying set sizes $k$. Results are averaged over 64 trials for $\rho=10$ (corresponding to probability of failure $\delta = e^{-10}$ of the high probability bound).
In Figure~\ref{subfig:percent-improvement} we plot the mean percent improvement in test objective of DRO versus optimizing the average. DRO achieves clear gains, especially for small $k$.
In Figure~\ref{subfig:lastfm-test-variance} we show the variance of test performance achieved by each method. DRO achieves lower variance, meaning that overall DRO achieves better test performance, and with better consistency.

% We are also trying out the Jester dataset of joke ratings~\citep{goldberg2001eigentaste} and the last.fm dataset~\citep{lastfm}.

\matt{
\begin{figure}
	\centering
	\includegraphics[width=1.6in]{lastfm_train_k3_samples1000}
	\includegraphics[width=1.6in]{lastfm_test_k3_samples1000}
	\caption{Facility location on last.fm dataset. Test objective of DRO is higher, and better predicted by train objective than ERM.}\label{fig:lastfm}
	% \caption{Facility location on last.fm dataset. Test objective obtained by DRO is higher, and better predicted by the train objective than ERM.}\label{fig:lastfm}
\end{figure}}

% \begin{figure}
% 	\centering
% 	\includegraphics[width=1.6in]{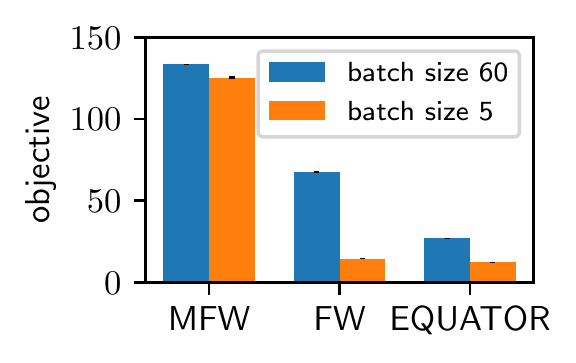}
% 	\includegraphics[width=1.6in]{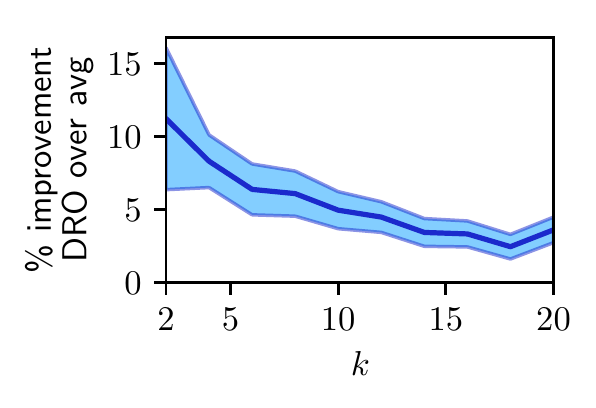}
% 	\includegraphics[width=1.6in]{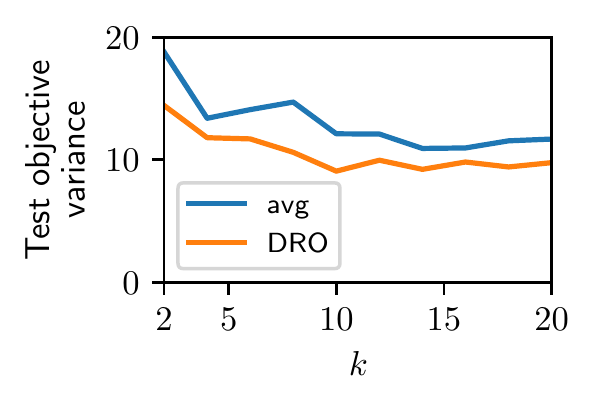}
% 	\caption{Left: objective values achieved by robust Frank-Wolfe algorithms for solving problem~\eqref{eq:robust-objective}. Blue: batch size of 60, Orange: batch size of 5 for forming gradient estimates. Middle: average percent improvement on held-out data of DRO over maximizing $\sum_i f_i$. Right: variance (over random trials) of test objective. DRO yields lower variance, hence more predictable performance.}
% 	\label{fig:fw-convergence}
% 	\vspace{-0.075in}
% \end{figure}

\begin{figure}
	\centering
	\begin{subfigure}[b]{3.2in}
		\centering
		\includegraphics[width=3.2in]{compare_FW_algs_bargraph_no_SMFW}
		\caption{Algorithm comparison}
		\label{subfig:bargraph}
	\end{subfigure}
	\begin{subfigure}[b]{3.2in}
		\centering
		\includegraphics[width=3.2in]{lastfm_samples1000_multiplek_test_percent_improvement}
		\caption{\% improvement via DRO}
		\label{subfig:percent-improvement}
	\end{subfigure}
	\begin{subfigure}[b]{3.2in}
		\centering
		\includegraphics[width=3.2in]{lastfm_samples1000_multiplek_test_variance}
		\caption{Test performance variance}
		\label{subfig:lastfm-test-variance}
	\end{subfigure}
	\caption{Algorithm comparison and generalization performance on last.fm dataset.}
	\label{fig:fw-convergence}
\end{figure}

\subsection{Influence maximization}
As described in Section~\ref{sec:math}, we study an influence maximization problem where we observe samples of live-edge graphs $\Es_1,\dots,\Es_n \sim P$. 
Our setting is challenging for learning: the number of samples is small and $P$ has high variance. Specifically, we choose $P$ to be a mixture of two different independent cascade models (ICM). In the ICM, each edge $e$ is live independently with probability $p_{e}$. In our mixture, each edge has $p_{e} = 0.025$ with probability $q$ and $p_{e} = 0.1$ with probability $1 - q$, mixing between settings of low and high influence spread. This models the realistic case where some messages are shared more widely than others. The mixture is \emph{not} an ICM, as observing the state of one edge gives information about 
% whether 
the propagation probability 
% was high or low 
for other edges. Handling such cases is an advantage of our DRO approach over ICM-specific robust influence maximization methods~\citep{chen_robust_2016}.

We use the political blogs dataset, a network with 1490 nodes representing links between blogs related to politics~\citep{adamic2005political}. 
Figure~\ref{fig:polblogs} compares the performance of DRO and ERM. Figure~\ref{subfig:polblogs-test-influence} shows that DRO generalizes better, achieving higher performance on the test set. Each algorithm was given $n = 20$ training samples, $k = 10$ seeds, and we set $q$ (the frequency of low influence) to be 0.1. Test influence was evaluated via a held-out set of 3000 samples from $P$. Figure~\ref{subfig:polblogs-rare-class} shows that DRO's improved generalization stems from greatly improved performance on the rare class in the mixture (low propagation probabilities). For these instances, DRO obtains a greater than \emph{40\% improvement over ERM} in held-out performance for $q = 0.1$. As $q$ increases (i.e., the rare class becomes less rare), ERM's performance on these instances converges towards DRO. A similar pattern is reflected in Figure~\ref{subfig:polblogs-test-variance}, which shows the variance in each algorithm's influence spread on the test set as a function of the number of training samples. \emph{DRO's variance is lower by 25-40\%}. As expected, DRO's advantage is greatest for small $n$, the most challenging setting for learning. 

\begin{figure}
	\centering
	\begin{subfigure}[b]{3.2in}
		\centering
		\includegraphics[width=3.2in]{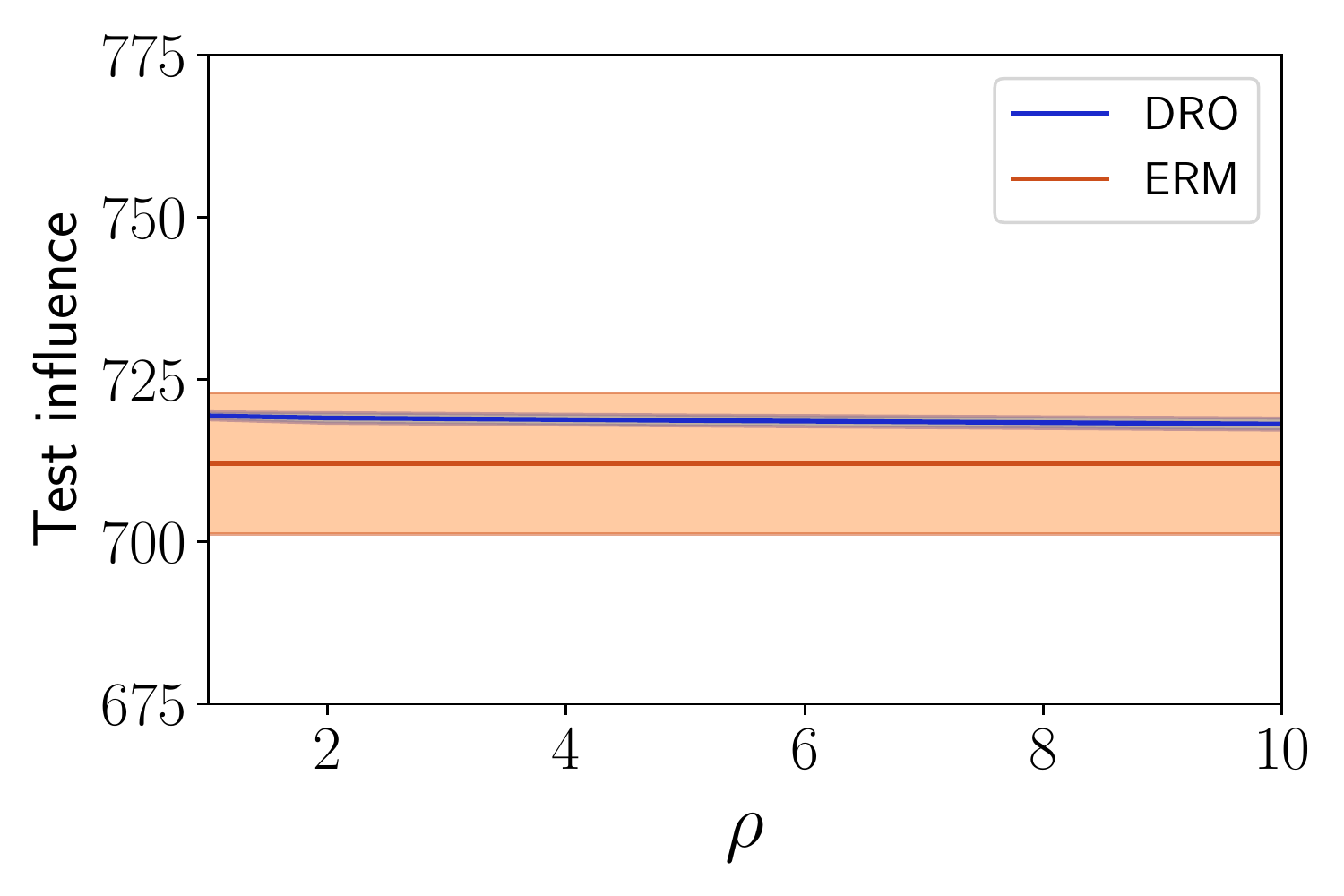}
		\caption{Influence on held-out set}
		\label{subfig:polblogs-test-influence}
	\end{subfigure}
	\begin{subfigure}[b]{3.2in}
		\centering
		\includegraphics[width=3.2in]{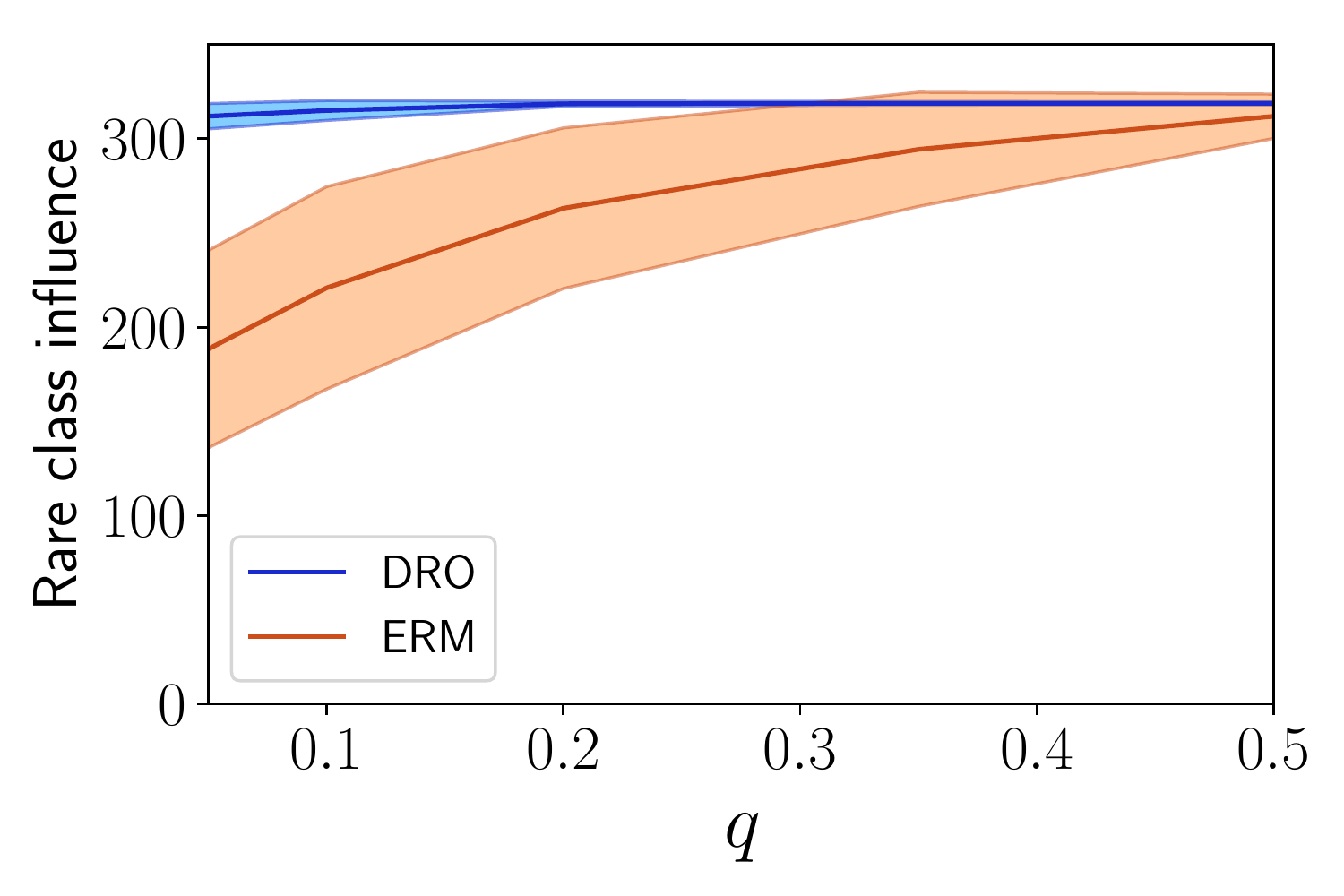}
		\caption{Rare class (held-out) influence}
		\label{subfig:polblogs-rare-class}
	\end{subfigure}
	\begin{subfigure}[b]{3.2in}
		\centering
		\includegraphics[width=3.2in]{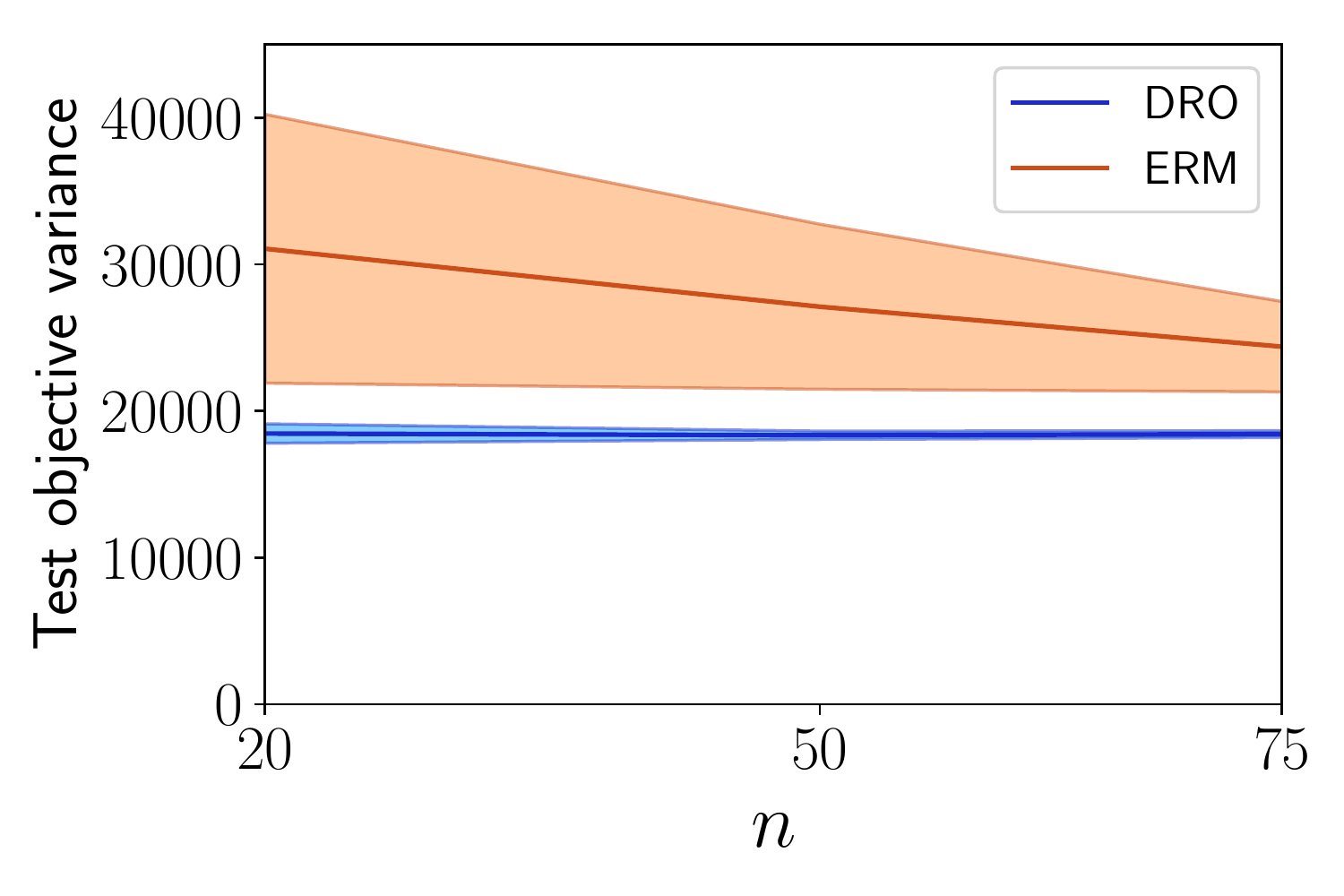}
		\caption{Test performance variance}
		\label{subfig:polblogs-test-variance}
	\end{subfigure}
	\caption{Influence maximization on political blogs dataset. 
	}\label{fig:polblogs}
\end{figure}

%!TEX root = main.tex

% \paragraph{Conclusion.}
\section{Conclusion}
We address optimization of \emph{stochastic submodular functions} $f_P(S) = \E_P[f(S)]$ in the setting where only finite samples $f_1,\dots,f_n \sim P$ are available. %We observe this problem resembles statistical learning.
Instead of simply maximizing the empirical mean $\frac1n\sum_i f_i$, we directly optimize a variance-regularized version which \textbf{1.} gives a high probability lower bound for $f_P(S)$ (generalization) and \textbf{2.} allows us to trade off bias and variance in estimating $f_P$. We accomplish this via an equivalent reformulation as a \emph{distributionally robust} submodular optimization problem, and show new results for the relation between distributionally robust optimization and variance regularization. Even though robust submodular maximization is hard in general, 
% however 
we are able to give efficient approximation algorithms for our reformulation. Empirically, our approach yields notable improvements for influence maximization and facility location problems.

\matt{future work?}

\subsubsection*{Acknowledgements}
This research was conducted with Government support under and awarded by DoD, Air Force Office of Scientific Research, National Defense Science and Engineering Graduate (NDSEG) Fellowship, 32 CFR 168a, and NSF Graduate Research Fellowship Program (GRFP).
This research was partially supported by The Defense Advanced Research Projects Agency (grant number YFA17 N66001-17-1-4039). The views, opinions, and/or findings contained in this article are those of the author and should not be interpreted as representing the official views or policies, either expressed or implied, of the Defense Advanced Research Projects Agency or the Department of Defense.

\bibliographystyle{plainnat}
\bibliography{ref}

\begin{thebibliography}{43}
\providecommand{\natexlab}[1]{#1}
\providecommand{\url}[1]{\texttt{#1}}
\expandafter\ifx\csname urlstyle\endcsname\relax
  \providecommand{\doi}[1]{doi: #1}\else
  \providecommand{\doi}{doi: \begingroup \urlstyle{rm}\Url}\fi

\bibitem[las()]{lastfm}
Last.fm dataset - 360k users.
\newblock URL
  \url{http://www.dtic.upf.edu/~ocelma/MusicRecommendationDataset/lastfm-360K.html}.
\newblock
  http://www.dtic.upf.edu/~ocelma/MusicRecommendationDataset/lastfm-360K.html.

\bibitem[Adamic and Glance(2005)]{adamic2005political}
Lada~A. Adamic and Natalie Glance.
\newblock The political blogosphere and the 2004 u.s. election: Divided they
  blog.
\newblock In \emph{Proceedings of the 3rd International Workshop on Link
  Discovery}, LinkKDD '05, pages 36--43, New York, NY, USA, 2005. ACM.
\newblock ISBN 1-59593-215-1.
\newblock \doi{10.1145/1134271.1134277}.
\newblock URL \url{http://doi.acm.org/10.1145/1134271.1134277}.

\bibitem[Agrawal et~al.(2010)Agrawal, Ding, Saberi, and
  Ye]{agrawal2010correlation}
Shipra Agrawal, Yichuan Ding, Amin Saberi, and Yinyu Ye.
\newblock Correlation robust stochastic optimization.
\newblock In \emph{SODA}, 2010.

\bibitem[Anari et~al.(2017)Anari, Haghtalab, {Naor, Joseph (Seffi)}, Pokutta,
  Singh, and Torrico]{anari2017robust}
Nima Anari, Nika Haghtalab, {Naor, Joseph (Seffi)}, Sebastian Pokutta, Mohit
  Singh, and Alfredo Torrico.
\newblock Robust submodular maximization: Offline and online algorithms.
\newblock \emph{arXiv preprint arXiv:1710.04740}, 2017.

\bibitem[Balkanski et~al.(2016)Balkanski, Rubinstein, and
  Singer]{balkanski2016power}
Eric Balkanski, Aviad Rubinstein, and Yaron Singer.
\newblock The power of optimization from samples.
\newblock In \emph{Advances In Neural Information Processing Systems}, pages
  4017--4025, 2016.

\bibitem[Balkanski et~al.(2017)Balkanski, Rubinstein, and
  Singer]{balkanski2015limitations}
Eric Balkanski, Aviad Rubinstein, and Yaron Singer.
\newblock The limitations of optimization from samples.
\newblock In \emph{Proceedings of the 49th Annual ACM SIGACT Symposium on
  Theory of Computing}, pages 1016--1027. ACM, 2017.

\bibitem[Bian et~al.(2017)Bian, Mirzasoleiman, Buhmann, and
  Krause]{bian2017guaranteed}
Andrew~An Bian, Baharan Mirzasoleiman, Joachim~M. Buhmann, and Andreas Krause.
\newblock Guaranteed non-convex optimization: Submodular maximization over
  continuous domains.
\newblock In \emph{AISTATS}, 2017.

\bibitem[Bogunovic et~al.(2017)Bogunovic, Mitrovi{\'c}, Scarlett, and
  Cevher]{bogunovic2017robust}
Ilija Bogunovic, Slobodan Mitrovi{\'c}, Jonathan Scarlett, and Volkan Cevher.
\newblock Robust submodular maximization: A non-uniform partitioning approach.
\newblock In Doina Precup and Yee~Whye Teh, editors, \emph{Proceedings of the
  34th International Conference on Machine Learning}, volume~70 of
  \emph{Proceedings of Machine Learning Research}, pages 508--516,
  International Convention Centre, Sydney, Australia, 06--11 Aug 2017. PMLR.
\newblock URL \url{http://proceedings.mlr.press/v70/bogunovic17a.html}.

\bibitem[Calinescu et~al.(2011)Calinescu, Chekuri, P{\'a}l, and
  Vondr{\'a}k]{calinescu2011maximizing}
Gruia Calinescu, Chandra Chekuri, Martin P{\'a}l, and Jan Vondr{\'a}k.
\newblock Maximizing a monotone submodular function subject to a matroid
  constraint.
\newblock \emph{SIAM Journal on Computing}, 40\penalty0 (6):\penalty0
  1740--1766, 2011.

\bibitem[Chekuri et~al.(2010)Chekuri, Vondrak, and
  Zenklusen]{chekuri2010dependent}
C.~Chekuri, J.~Vondrak, and R.~Zenklusen.
\newblock Dependent randomized rounding via exchange properties of
  combinatorial structures.
\newblock In \emph{2010 IEEE 51st Annual Symposium on Foundations of Computer
  Science}, pages 575--584, Oct 2010.
\newblock \doi{10.1109/FOCS.2010.60}.

\bibitem[Chen et~al.(2017)Chen, Lucier, Singer, and Syrgkanis]{chen2017robust}
Robert~S Chen, Brendan Lucier, Yaron Singer, and Vasilis Syrgkanis.
\newblock Robust optimization for non-convex objectives.
\newblock In I.~Guyon, U.~V. Luxburg, S.~Bengio, H.~Wallach, R.~Fergus,
  S.~Vishwanathan, and R.~Garnett, editors, \emph{Advances in Neural
  Information Processing Systems 30}, pages 4708--4717. Curran Associates,
  Inc., 2017.
\newblock URL
  \url{http://papers.nips.cc/paper/7056-robust-optimization-for-non-convex-objectives.pdf}.

\bibitem[Chen et~al.(2016)Chen, Lin, Tan, Zhao, and Zhou]{chen_robust_2016}
Wei Chen, Tian Lin, Zihan Tan, Mingfei Zhao, and Xuren Zhou.
\newblock Robust influence maximization.
\newblock In \emph{Proceedings of the 22nd ACM SIGKDD International Conference
  on Knowledge Discovery and Data Mining}, pages 795--804. ACM, 2016.

\bibitem[Das and Kempe(2011)]{ICML2011Das_542}
Abhimanyu Das and David Kempe.
\newblock Submodular meets spectral: Greedy algorithms for subset selection,
  sparse approximation and dictionary selection.
\newblock In Lise Getoor and Tobias Scheffer, editors, \emph{Proceedings of the
  28th International Conference on Machine Learning (ICML-11)}, ICML '11, pages
  1057--1064, New York, NY, USA, June 2011. ACM.
\newblock ISBN 978-1-4503-0619-5.

\bibitem[Djolonga and Krause(2014)]{djolonga2014map}
Josip Djolonga and Andreas Krause.
\newblock From map to marginals: Variational inference in bayesian submodular
  models.
\newblock In \emph{Advances in Neural Information Processing Systems}, pages
  244--252, 2014.

\bibitem[Domingos and Richardson(2001)]{domingos2001mining}
Pedro Domingos and Matt Richardson.
\newblock Mining the network value of customers.
\newblock In \emph{Proceedings of the seventh ACM SIGKDD international
  conference on Knowledge discovery and data mining}, pages 57--66. ACM, 2001.

\bibitem[Duchi et~al.(2008)Duchi, Shalev-Shwartz, Singer, and
  Chandra]{Duchi2008}
John Duchi, Shai Shalev-Shwartz, Yoram Singer, and Tushar Chandra.
\newblock {Efficient projections onto the l 1-ball for learning in high
  dimensions}.
\newblock pages 272--279. ACM, 2008.
\newblock URL \url{http://dl.acm.org/citation.cfm?id=1390191}.

\bibitem[Duchi et~al.(2016)Duchi, Glynn, and Namkoong]{duchi2016statistics}
John Duchi, Peter Glynn, and Hongseok Namkoong.
\newblock Statistics of robust optimization: A generalized empirical likelihood
  approach.
\newblock \emph{arXiv preprint arXiv:1610.03425}, 2016.

\bibitem[Duchi et~al.(2012)Duchi, Bartlett, and
  Wainwright]{duchi2012randomized}
John~C Duchi, Peter~L Bartlett, and Martin~J Wainwright.
\newblock Randomized smoothing for stochastic optimization.
\newblock \emph{SIAM Journal on Optimization}, 22\penalty0 (2):\penalty0
  674--701, 2012.

\bibitem[Feldman et~al.(2011)Feldman, Naor, and Schwartz]{feldman11}
Moran Feldman, Joseph~(Seffi) Naor, and Roy Schwartz.
\newblock A unified continuous greedy algorithm for submodular maximization.
\newblock In \emph{IEEE Symposium on Foundations of Computer Science (FOCS)},
  2011.

\bibitem[Frank and Wolfe(1956)]{frank_algorithm_1956}
Marguerite Frank and Philip Wolfe.
\newblock An algorithm for quadratic programming.
\newblock \emph{Naval Research Logistics Quarterly}, 3\penalty0 (1-2):\penalty0
  95--110, March 1956.
\newblock ISSN 1931-9193.
\newblock \doi{10.1002/nav.3800030109}.

\bibitem[Gotoh et~al.(2015)Gotoh, Kim, and Lim]{gotoh2015robust}
{Jun-ya} Gotoh, Michael Kim, and Andrew Lim.
\newblock {Robust Empirical Optimization is Almost the Same As Mean-Variance
  Optimization}.
\newblock \emph{Available at {SSRN} 2827400}, 2015.

\bibitem[Hassani et~al.(2017)Hassani, Soltanolkotabi, and
  Karbasi]{Hassani2017gradient}
Hamed Hassani, Mahdi Soltanolkotabi, and Amin Karbasi.
\newblock {Gradient Methods for Submodular Maximization}.
\newblock In \emph{Advances in Neural Information Processing Systems 30}, pages
  5843--5853, 2017.
\newblock URL
  \url{http://papers.nips.cc/paper/7166-gradient-methods-for-submodular-maximization}.

\bibitem[He and Kempe(2016)]{he_robust_2016}
Xinran He and David Kempe.
\newblock Robust influence maximization.
\newblock In \emph{Proceedings of the 22nd ACM SIGKDD International Conference
  on Knowledge Discovery and Data Mining}, pages 885--894. ACM, 2016.

\bibitem[Jaggi(2013)]{jaggi_revisiting_2013}
Martin Jaggi.
\newblock Revisiting {Frank-Wolfe}: Projection-free sparse convex optimization.
\newblock In Sanjoy Dasgupta and David McAllester, editors, \emph{Proceedings
  of the 30th International Conference on Machine Learning}, volume~28 of
  \emph{Proceedings of Machine Learning Research}, pages 427--435, Atlanta,
  Georgia, USA, 17--19 Jun 2013. PMLR.
\newblock URL \url{http://proceedings.mlr.press/v28/jaggi13.html}.

\bibitem[Karimi et~al.(2017)Karimi, Lucic, Hassani, and
  Krause]{Karimi2017stochastic}
Mohammad Karimi, Mario Lucic, Hamed Hassani, and Andreas Krause.
\newblock {Stochastic Submodular Maximization: The Case of Coverage Functions}.
\newblock In \emph{Advances in Neural Information Processing Systems 30}, pages
  6856--6866, 2017.
\newblock URL
  \url{http://papers.nips.cc/paper/7261-stochastic-submodular-maximization-the-case-of-coverage-functions}.

\bibitem[Kempe et~al.(2003)Kempe, Kleinberg, and Tardos]{kempe_maximizing_2003}
David Kempe, Jon Kleinberg, and {\'E}va Tardos.
\newblock Maximizing the {{Spread}} of {{Influence Through}} a {{Social
  Network}}.
\newblock In \emph{Proceedings of the {{Ninth ACM SIGKDD International
  Conference}} on {{Knowledge Discovery}} and {{Data Mining}}}, KDD '03, pages
  137--146, New York, NY, USA, 2003. {ACM}.
\newblock ISBN 978-1-58113-737-8.
\newblock \doi{10.1145/956750.956769}.

\bibitem[Kempe et~al.(2015)Kempe, Kleinberg, and Tardos]{kempe2015maximizing}
David Kempe, Jon~M Kleinberg, and {\'E}va Tardos.
\newblock Maximizing the spread of influence through a social network.
\newblock \emph{Theory of Computing}, 11\penalty0 (4):\penalty0 105--147, 2015.

\bibitem[Krause et~al.(2008)Krause, McMahan, Guestrin, and
  Gupta]{krause_robust_2008}
Andreas Krause, H~Brendan McMahan, Carlos Guestrin, and Anupam Gupta.
\newblock Robust submodular observation selection.
\newblock \emph{Journal of Machine Learning Research}, 9\penalty0
  (Dec):\penalty0 2761--2801, 2008.

\bibitem[Krause et~al.(2011)Krause, Roper, and Golovin]{krause2011randomized}
Andreas Krause, Alex Roper, and Daniel Golovin.
\newblock Randomized sensing in adversarial environments.
\newblock In \emph{IJCAI}, 2011.

\bibitem[Kulesza and Taskar(2012)]{kulesza2012dpp}
Alex Kulesza and Ben Taskar.
\newblock \emph{Determinantal Point Processes for Machine Learning}.
\newblock Now Publishers Inc., Hanover, MA, USA, 2012.
\newblock ISBN 1601986289, 9781601986283.

\bibitem[Lam(2016)]{lam2016robust}
Henry Lam.
\newblock {Robust Sensitivity Analysis for Stochastic Systems}.
\newblock \emph{Mathematics of Operations Research}, 41\penalty0 (4):\penalty0
  1248--1275, 2016.
\newblock \doi{10.1287/moor.2015.0776}.
\newblock URL \url{https://doi.org/10.1287/moor.2015.0776}.

\bibitem[Lan(2013)]{lan2013complexity}
Guanghui Lan.
\newblock The complexity of large-scale convex programming under a linear
  optimization oracle.
\newblock \emph{arXiv preprint arXiv:1309.5550}, 2013.

\bibitem[Lin and Bilmes(2011)]{lin2011class}
Hui Lin and Jeff Bilmes.
\newblock A class of submodular functions for document summarization.
\newblock In \emph{Proceedings of the 49th Annual Meeting of the Association
  for Computational Linguistics: Human Language Technologies - Volume 1}, HLT
  '11, pages 510--520, Stroudsburg, PA, USA, 2011. Association for
  Computational Linguistics.
\newblock ISBN 978-1-932432-87-9.
\newblock URL \url{http://dl.acm.org/citation.cfm?id=2002472.2002537}.

\bibitem[Lowalekar et~al.(2016)Lowalekar, Varakantham, and
  Kumar]{lowalekar_robust_2016-1}
Meghna Lowalekar, Pradeep Varakantham, and Akshat Kumar.
\newblock Robust {{Influence Maximization}}: ({{Extended Abstract}}).
\newblock In \emph{Proceedings of the 2016 {{International Conference}} on
  {{Autonomous Agents}} \& {{Multiagent Systems}}}, AAMAS '16, pages
  1395--1396, Richland, SC, 2016. {International Foundation for Autonomous
  Agents and Multiagent Systems}.
\newblock ISBN 978-1-4503-4239-1.

\bibitem[Maurer and Pontil(2009)]{maurer2009empirical}
Andreas Maurer and Massimiliano Pontil.
\newblock Empirical bernstein bounds and sample variance penalization.
\newblock In \emph{Conference on Learning Theory}, 2009.

\bibitem[Mokhtari et~al.(2018)Mokhtari, Hassani, and
  Karbasi]{Mokhtari2017conditional}
Aryan Mokhtari, Hamed Hassani, and Amin Karbasi.
\newblock Conditional gradient method for stochastic submodular maximization:
  Closing the gap.
\newblock In Amos Storkey and Fernando Perez-Cruz, editors, \emph{Proceedings
  of the Twenty-First International Conference on Artificial Intelligence and
  Statistics}, volume~84 of \emph{Proceedings of Machine Learning Research},
  pages 1886--1895, Playa Blanca, Lanzarote, Canary Islands, 09--11 Apr 2018.
  PMLR.
\newblock URL \url{http://proceedings.mlr.press/v84/mokhtari18a.html}.

\bibitem[Namkoong and Duchi(2016)]{namkoong2016stochastic}
Hongseok Namkoong and John~C. Duchi.
\newblock {Stochastic Gradient Methods for Distributionally Robust Optimization
  with f-divergences}.
\newblock In \emph{Advances in Neural Information Processing Systems 29}, pages
  2208--2216, 2016.

\bibitem[Namkoong and Duchi(2017)]{Namkoong2017}
Hongseok Namkoong and John~C. Duchi.
\newblock {Variance-based Regularization with Convex Objectives}.
\newblock In \emph{Advances in Neural Information Processing Systems 30}, pages
  2975--2984, 2017.
\newblock URL
  \url{http://papers.nips.cc/paper/6890-variance-based-regularization-with-convex-objectives}.

\bibitem[Orlin et~al.(2016)Orlin, Schulz, and Udwani]{orlin16_robust}
James~B. Orlin, Andreas Schulz, and Rajan Udwani.
\newblock Robust monotone submodular function maximization.
\newblock In \emph{Conference on Integer Programming and Combinatorial
  Optimization (IPCO)}, 2016.

\bibitem[Staib and Jegelka(2017)]{staib2017robust}
Matthew Staib and Stefanie Jegelka.
\newblock Robust budget allocation via continuous submodular functions.
\newblock In Doina Precup and Yee~Whye Teh, editors, \emph{Proceedings of the
  34th International Conference on Machine Learning}, volume~70 of
  \emph{Proceedings of Machine Learning Research}, pages 3230--3240,
  International Convention Centre, Sydney, Australia, 06--11 Aug 2017. PMLR.
\newblock URL \url{http://proceedings.mlr.press/v70/staib17a.html}.

\bibitem[Stan et~al.(2017)Stan, Zadimoghaddam, Krause, and
  Karbasi]{stan2017probabilistic}
Serban Stan, Morteza Zadimoghaddam, Andreas Krause, and Amin Karbasi.
\newblock Probabilistic submodular maximization in sub-linear time.
\newblock In Doina Precup and Yee~Whye Teh, editors, \emph{Proceedings of the
  34th International Conference on Machine Learning}, volume~70 of
  \emph{Proceedings of Machine Learning Research}, pages 3241--3250,
  International Convention Centre, Sydney, Australia, 06--11 Aug 2017. PMLR.
\newblock URL \url{http://proceedings.mlr.press/v70/stan17a.html}.

\bibitem[Wainwright(2017)]{wainwrightbook}
Martin Wainwright.
\newblock \emph{High-dimensional statistics: A non-asymptotic viewpoint}.
\newblock 2017.

\bibitem[Wilder(2018)]{wilder2018equilibrium}
Bryan Wilder.
\newblock Equilibrium computation and robust optimization in zero sum games
  with submodular structure.
\newblock In \emph{Proceedings of the 32nd AAAI Conference on Artificial
  Intelligence}, 2018.

\end{thebibliography}

\appendix
%!TEX root = main.tex

\section{Tail Bound} \label{appendix:bias-variance}

We use the following one-sided Bernstein's inequality:

\begin{lemma}[\citet{wainwrightbook}, Chapter 2] \label{lemma:bernstein}
Let $X_1...X_n$ be iid realizations of a random variable $X$ which satisfies $X \leq B$ almost surely. We have 
\begin{align*}
\Pr\left[\frac{1}{n}\sum_{i = 1}^n X_i - \E[X] \geq \epsilon \right] \leq \exp\left(-\frac{n\epsilon^2}{\Var\left(X\right) + \frac{B\epsilon}{3}}\right)
\end{align*}
\end{lemma}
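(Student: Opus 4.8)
The plan is to run the standard exponential-moment (Chernoff) argument; no submodular structure enters. Set $Y_i := X_i - \E[X]$, so the $Y_i$ are i.i.d.\ and mean zero, and $Y_i \le B$ almost surely (using $X_i \le B$ together with $\E[X] \ge 0$, which holds in our setting since $X = f(S) \in [0,B]$). For any $\lambda > 0$, Markov's inequality applied to $\exp(\lambda \sum_i Y_i)$, together with independence to factor the moment generating function of the sum, gives
\begin{align*}
\Pr\!\left[\tfrac1n\textstyle\sum_{i=1}^{n} X_i - \E[X] \ge \epsilon\right] \le e^{-\lambda n \epsilon}\,\bigl(\E[e^{\lambda Y}]\bigr)^{n}.
\end{align*}
So the statement reduces to a good single-variable bound on $\E[e^{\lambda Y}]$, followed by optimizing over $\lambda$.

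The key step -- essentially the only nonroutine part -- is the classical sub-exponential MGF estimate: with $\sigma^2 := \Var(X)$, for every $0 < \lambda < 3/B$,
\begin{align*}
\E[e^{\lambda Y}] \le \exp\!\left(\frac{\lambda^2 \sigma^2}{2\,(1 - \lambda B/3)}\right).
\end{align*}
I would derive this from the elementary fact that $g(u) := (e^u - 1 - u)/u^2 = \sum_{k\ge 0} u^k/(k+2)!$ is nondecreasing on $\reals$ and satisfies $g(u) \le \tfrac{1}{2(1 - u/3)}$ on $[0,3)$ (compare the two power series term by term, using $(k+2)! \ge 2\cdot 3^{k}$). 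Since $\lambda Y \le \lambda B < 3$, monotonicity of $g$ yields the pointwise bound $e^{\lambda Y} = 1 + \lambda Y + (\lambda Y)^2 g(\lambda Y) \le 1 + \lambda Y + (\lambda Y)^2 g(\lambda B)$; taking expectations kills the linear term, leaving $\E[e^{\lambda Y}] \le 1 + \lambda^2 \sigma^2 g(\lambda B) \le \exp(\lambda^2 \sigma^2 g(\lambda B))$, and then $g(\lambda B) \le \tfrac{1}{2(1 - \lambda B/3)}$ finishes it.

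Combining the two displays gives $\Pr[\tfrac1n\sum_i X_i - \E[X] \ge \epsilon] \le \exp\bigl(n[\tfrac{\lambda^2 \sigma^2}{2(1 - \lambda B/3)} - \lambda\epsilon]\bigr)$ for all $0 < \lambda < 3/B$. Taking $\lambda = \epsilon/(\sigma^2 + B\epsilon/3) \in (0, 3/B)$ makes $1 - \lambda B/3 = \sigma^2/(\sigma^2 + B\epsilon/3)$, and a short simplification collapses the exponent to $-\,n\epsilon^2/\bigl(2(\sigma^2 + B\epsilon/3)\bigr)$, the desired Bernstein tail. The only real obstacle is the MGF estimate -- the monotonicity of $g$ and the rational majorant $\tfrac{1}{2(1-u/3)}$; the rest is the routine Chernoff recipe and one scalar optimization, and a quick sanity check handles the (irrelevant) regime where $\lambda Y$ is very negative, since there $e^{\lambda Y}\to 0$ while the quadratic upper bound diverges.
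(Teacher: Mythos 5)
Your argument is the standard Chernoff--Bernstein recipe and it is correct: the monotonicity of $g(u)=(e^u-1-u)/u^2$, the term-by-term majorization $g(u)\le \tfrac{1}{2(1-u/3)}$ via $(k+2)!\ge 2\cdot 3^k$, and the choice $\lambda=\epsilon/(\sigma^2+B\epsilon/3)$ all check out. The paper itself gives no proof here --- the lemma is imported verbatim from \citet{wainwrightbook} --- so the only thing to compare against is the statement, and there the one substantive discrepancy appears: what you derive is $\exp\bigl(-n\epsilon^2/(2(\Var(X)+B\epsilon/3))\bigr)$, whereas the lemma as printed omits the factor $2$ in the denominator. That omission is not cosmetic: the printed bound is strictly stronger than the standard one and is in fact false as stated (take $n=1$ and $X=\pm 1$ with probability $1/2$ each, so $B=1$, $\Var(X)=1$; at $\epsilon=1$ the left side is $1/2$ while the printed right side is $e^{-3/4}\approx 0.47$). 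The version with the $2$ --- the one you prove --- is the correct one, and it is also the one the paper actually uses downstream, since the constants $C_1=\sqrt{2\log(1/\delta)}$ and $C_2=\tfrac{2B}{3}\log(1/\delta)$ in Appendix~\ref{appendix:bias-variance} come from solving $\exp\bigl(-n\epsilon^2/(2(\Var(X)+B\epsilon/3))\bigr)=\delta$. One smaller point worth keeping explicit: the hypothesis $X\le B$ alone does not give $Y=X-\E[X]\le B$, which your MGF step needs; you correctly patch this using $\E[X]\ge 0$ from the application ($X=f(S)\in[0,B]$). Without that assumption the same argument, applied to $e^{\lambda X}$ rather than $e^{\lambda Y}$, yields the bound with $\E[X^2]$ in place of $\Var(X)$, which is essentially the form in which the one-sided inequality is stated in the cited source.
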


We apply Lemma \ref{lemma:bernstein} with $X_i = f_i(S)$. If we set the probability on the right hand side to be at most $\delta$, then a simple calculation shows that it suffices to have $n = \frac{\Var(X)}{\epsilon^2}\log\frac{1}{\delta} + \frac{B\epsilon}{3}\log\frac{1}{\delta}$. Hence, for a given value of $n$, we can guarantee error of at most

\begin{align*}
\epsilon = \sqrt{2\log\left(\frac{1}{\delta}\right)\frac{\Var(X)}{n}} + \frac{2}{3}\log\left(\frac{1}{\delta}\right)\frac{B}{n}.
\end{align*}

Therefore, we can take $C_1 = \sqrt{2 \log\frac{1}{\delta}}$ and $C_2 = \frac{2B}{3}\log\frac{1}{\delta}$.  $B$ is often bounded in terms of the problem size for natural submodular maximization problems. For instance, for influence maximization problems we always have $B \leq \lvert V \rvert$ (though tighter bounds may be available for specific graphs and distributions).

\section{Equivalence of Variance Regularization and Distributionally Robust Optimization} \label{appendix:dro-variance-equivalence}
% First we present a Theorem from~\citet{Namkoong2017}:
% \begin{theorem}[{\citep[Theorem 1]{Namkoong2017}}]
% Fix $\rho \geq 0$, and let $Z \sim P$ be a random variable which takes values in $[M_0,M_1]$, with $M=M_1-M_0$. Then,
% \begin{equation}
% 	\left( \sqrt{\frac{2\rho}{n} \Var_{\hat P_n}(Z)} - \frac{2M\rho}{n} \right)_+ \leq \sup_{\tilde P} \left\{ \E_{\tilde P}[Z] : D_\phi(\tilde P || \hat P_n) \leq \frac{\rho}{n} \right\} - \E_{\hat P_n}[Z] \leq \sqrt{\frac{2\rho}{n} \Var_{\hat P_n}(Z)}.
% \end{equation}
% \matt{and the rest of the theorem}
% \end{theorem}
% In our case, we need to flip signs as the adversary in our problem tries to \emph{minimize} the expectation:
% \begin{corollary}
% Fix $\rho \geq 0$, and let $Z \sim P$ be a random variable which takes values in $[M_0,M_1]$, with $M=M_1-M_0$. Then,
% \begin{equation}
% 	\left( \sqrt{\frac{2\rho}{n} \Var_{\hat P_n}(Z)} - \frac{2M\rho}{n} \right)_+ \leq -\inf_{\tilde P} \left\{ \E_{\tilde P}[Z] : D_\phi(\tilde P || \hat P_n) \leq \frac{\rho}{n} \right\} + \E_{\hat P_n}[Z] \leq \sqrt{\frac{2\rho}{n} \Var_{\hat P_n}(Z)}.
% \end{equation}
% \matt{and the rest of the theorem}
% \end{corollary}

\begin{lemma} \label{lemma:lipschitz}
Suppose that $f(\{i\}) \leq b$ for all $f$ in the support of $P$ and all $i \in V$. Then, for each such $f$, its multilinear extension $F$ is $b$-Lipschitz in the $\ell_1$ norm. 
\end{lemma}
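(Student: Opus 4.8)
The plan is to show that $\|\nabla F\|_\infty \le b$ uniformly on the cube $[0,1]^{|V|}$ and then integrate along a line segment, using that $\ell_\infty$ is the dual norm of $\ell_1$. First I would recall the standard closed form for the partial derivatives of a multilinear extension: since $F$ is affine in each coordinate separately,
\[
\frac{\partial F}{\partial x_i}(x) = F(x \mid x_i = 1) - F(x \mid x_i = 0) = \E_{S \sim x_{-i}}\bigl[ f(S \cup \{i\}) - f(S \setminus \{i\}) \bigr],
\]
where $S \sim x_{-i}$ means every $j \ne i$ is placed in $S$ independently with probability $x_j$ (so $i \notin S$). In particular $F$ is differentiable everywhere on the cube.

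The second step is a pointwise bound on these marginal gains. Monotonicity of $f$ gives $f(S \cup \{i\}) - f(S) \ge 0$. Submodularity (diminishing marginal gains), applied with the inclusion $\emptyset \subseteq S$ and the item $i \notin S$, gives $f(S \cup \{i\}) - f(S) \le f(\{i\}) - f(\emptyset) = f(\{i\}) \le b$, using the normalization $f(\emptyset) = 0$ together with the hypothesis $f(\{i\}) \le b$. Since these inequalities hold for every realization of $S$, taking the expectation over $S \sim x_{-i}$ preserves them, so $0 \le \partial_i F(x) \le b$ for all $x \in [0,1]^{|V|}$ and all $i \in V$; hence $\|\nabla F(x)\|_\infty \le b$ uniformly.

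Finally, for any $x, y \in [0,1]^{|V|}$ the segment $x + t(y - x)$, $t \in [0,1]$, remains in the cube, so by the fundamental theorem of calculus and Hölder's inequality,
\[
\lvert F(y) - F(x) \rvert = \left\lvert \int_0^1 \langle \nabla F(x + t(y-x)),\, y - x \rangle \, dt \right\rvert \le \sup_{t \in [0,1]} \|\nabla F(x + t(y-x))\|_\infty \cdot \|y - x\|_1 \le b \, \|y - x\|_1,
\]
which is exactly $b$-Lipschitzness in the $\ell_1$ norm. I do not expect a genuine obstacle here; the only points deserving care are justifying the closed form for $\partial_i F$ (which follows from multilinearity) and applying diminishing returns in the correct direction — comparing the marginal gain of $i$ at $S$ against its marginal gain at the empty set, not at a larger set — after which the argument is just norm duality.
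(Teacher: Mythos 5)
Your proof is correct, but it takes a genuinely different route from the paper's. You bound the gradient: you use the closed form $\partial_i F(x) = \E_{S \sim x_{-i}}[f(S\cup\{i\}) - f(S)]$, apply diminishing returns against the empty set to get $0 \le \partial_i F \le f(\{i\}) \le b$, and then integrate along the segment with H\"older's inequality ($\ell_\infty$--$\ell_1$ duality). The paper instead argues combinatorially on the function values: writing $x' \lor x = x + [x'-x]^+$ (elementwise maximum), it uses monotonicity to get $F(x') \le F(x + [x'-x]^+)$, then subadditivity of the multilinear extension, $F(x+y) \le F(x) + F(y)$, to peel off $F([x'-x]^+)$ and decompose it coordinatewise into $\sum_i [x'-x]^+_i\, f(\{i\}) \le b\,\lVert x'-x\rVert_1$. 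The two arguments lean on the same structural facts (monotonicity plus submodularity, which is what makes $F$ both subadditive and have small coordinatewise marginals), so neither is more general in substance; yours has the advantage of producing the uniform gradient bound $\lVert \nabla F \rVert_\infty \le b$ as an explicit intermediate, which is independently useful elsewhere in this line of work (e.g., for smoothness constants), while the paper's avoids any appeal to differentiability and stays entirely at the level of set-function inequalities. Both deliver the same constant $b$.
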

\begin{proof}
Consider any two points $x, x' \in [0,1]^{|V|}$ and any function $f \in \text{support}(P)$. Without loss of generality, let $f(x') \geq f(x)$. Let $[x]^+ = \max(x, 0)$ elementwise, $\lor$ denote elementwise minimum, and $1_i$ be the vector with a 1 in coordinate $i$ and zeros elsewhere. We bound $F(x')$ as 

\begin{align*}
F(x') &\leq F(x' \lor x)\\
&= F(x + [x' - x]^+)\\
&\leq F(x) + F([x' - x]^+)\\
&\leq F(x) + \sum_{i = 1}^{|V|} F([x' - x]^+_i 1_i)\\
&= F(x) + \sum_{i = 1}^{|V|} f(\{i\}) [x' - x]^+_i\\
&\leq F(x) + b\sum_{i = 1}^{|V|} [x' - x]^+_i\\
&\leq F(x) + b \lVert x' - x \rVert_1.
\end{align*}
Here, the first inequality follows from monotonicity, while the third and fourth lines use the fact that submodular functions are subadditive, i.e., $F(x + y) \leq F(x) + F(y)$. Now rearranging gives $|F(x') - F(x)| \leq b \lVert x - x' \rVert_1$ as desired.
\end{proof}

We will use the following concentration result for the sample variance of a random variable:
\begin{lemma}[\citet{Namkoong2017}, Section A.1] \label{lemma:variance-concentration}
	Let $Z$ be a random variable bounded in $[0, B]$ and $z_1...z_n$ be iid realizations of $Z$ with $n \geq 64$. Let $\sigma$ denote $\Var(Z)$ and $s_n$ denote the sample variance. It holds that $s_n^2 \geq \frac{1}{4}\sigma^2$ with probability at least $1 -  \exp\left(-\frac{n\sigma^2}{36B^2}\right)$. 
\end{lemma}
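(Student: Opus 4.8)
The plan is to prove that the empirical variance $s_n^2$ lies below $\sigma^2 := \Var(Z)$ only with exponentially small probability, the single substantive point being to land the \emph{variance-aware} tail rate $\exp(-\Theta(n\sigma^2/B^2))$ rather than the weaker $\exp(-\Theta(n\sigma^4/B^4))$ that a crude bounded-differences argument produces. (The statement is taken from \citet{Namkoong2017}, so it can simply be invoked; below is a self-contained route. I read the hypothesis with $\sigma^2 = \Var(Z)$ and $s_n^2$ the $1/n$-normalized sample variance, consistent with the rest of the paper.)

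First I would write the (unbiased) sample variance as a U-statistic of degree two,
\[
\widehat{s}_n^2 \;=\; \frac{1}{n(n-1)}\sum_{i\neq j} g(z_i,z_j), \qquad g(x,y) := \tfrac12 (x-y)^2 \in \bigl[0,\tfrac12 B^2\bigr],
\]
with $\E[g] = \sigma^2$ and the crucial variance bound $\Var(g) \le \E[g^2] \le \tfrac{B^2}{2}\E[g] = \tfrac12 B^2\sigma^2$ (using $(x-y)^2 \le B^2$). By Hoeffding's representation of $\widehat{s}_n^2$ as an average over permutations of the $m := \lfloor n/2\rfloor$ iid block means $\frac1m\sum_{\ell=1}^m g(z_{\pi(2\ell-1)},z_{\pi(2\ell)})$, Jensen's inequality (convexity of $t\mapsto e^{-\lambda t}$) shows the lower tail of $\widehat{s}_n^2$ is no heavier than the one-sided Bernstein tail (Lemma~\ref{lemma:bernstein}) of a single $m$-term average of iid copies of $g$:
\[
\Pr\!\left[\widehat{s}_n^2 \le \sigma^2 - t\right] \;\le\; \exp\!\left(-\,\frac{m\,t^2}{\tfrac12 B^2\sigma^2 + \tfrac13\sigma^2 t}\right).
\]
Taking $t = \tfrac34\sigma^2$ and using $\sigma^2 \le \tfrac14 B^2$ (the largest variance of a $[0,B]$-valued variable) reduces the exponent to at most $-m\sigma^2/B^2$. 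It then remains only to pass to $s_n^2 = \tfrac{n-1}{n}\widehat{s}_n^2$ (replace the target $\tfrac14\sigma^2$ by $\tfrac{n}{4(n-1)}\sigma^2$) and to trade $m=\lfloor n/2\rfloor$ for $n/2$; for $n \ge 64$ these cost only bounded factors, so the exponent stays $-c\,n\sigma^2/B^2$ with $c$ comfortably above $1/36$, giving the claimed $\exp(-n\sigma^2/(36B^2))$ --- the slack in the constant $36$ (together with $n\ge 64$) is exactly what absorbs this bookkeeping.

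A more elementary alternative avoids U-statistics: use the identity $s_n^2 = \tfrac1n\sum_i(z_i-\mu)^2 - (\overline z_n - \mu)^2$ with $\mu = \E Z$, lower-bound the first term by one-sided Bernstein (again via $\Var((Z-\mu)^2) \le B^2\sigma^2$, since $(Z-\mu)^2 \le B^2$) and upper-bound $(\overline z_n - \mu)^2$ by Hoeffding's inequality on $\overline z_n$, then intersect the two favorable events, choosing the two deviation levels so that the resulting lower bound is at least $\tfrac14\sigma^2$ and union-bounding the failure probabilities. Either way the main obstacle --- indeed the only idea --- is getting $\sigma^2$ rather than $\sigma^4$ into the exponent; this forces a Bernstein-type (not Hoeffding-type) tail bound and rests entirely on the elementary fourth-moment estimates above. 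Everything else is routine.
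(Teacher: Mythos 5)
The paper never proves this lemma---it is imported verbatim from \citet{Namkoong2017} (their Section A.1)---so there is no in-paper argument to compare against; your self-contained derivation is correct and the details check out. Specifically: $g(x,y)=\tfrac12(x-y)^2$ has mean $\sigma^2$; the fourth-moment estimate $\E[g^2]\le \tfrac{B^2}{2}\E[g]=\tfrac{B^2\sigma^2}{2}$ is exactly what places $\sigma^2$ rather than $\sigma^4$ in the exponent; Hoeffding's representation of the degree-two U-statistic as a permutation average of $m=\lfloor n/2\rfloor$-term iid block means, combined with Jensen applied to $t\mapsto e^{-\lambda t}$, legitimately transfers the one-sided Bernstein lower tail to $\widehat s_n^{\,2}$; and with $t\approx\tfrac34\sigma^2$ together with Popoviciu's bound $\sigma^2\le B^2/4$ the exponent comes out at most $-m\sigma^2/B^2\le -\tfrac{n\sigma^2}{2.1B^2}$, which sits comfortably inside the required $-\tfrac{n\sigma^2}{36B^2}$ even after the $\tfrac{n-1}{n}$ correction between the unbiased and the $1/n$-normalized sample variance. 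This is essentially the Maurer--Pontil ``empirical Bernstein'' route, and it actually yields a strictly better constant than the quoted $36$. One caveat on your ``more elementary alternative'': intersecting two events and union-bounding leaves a leading constant in the failure probability (something like $3e^{-n\sigma^2/(8B^2)}$), which cannot be dominated by the constant-free target $e^{-n\sigma^2/(36B^2)}$ uniformly in $\sigma$---it fails precisely when $n\sigma^2/B^2$ is small. That regime is where the lemma is nearly vacuous anyway, and your primary U-statistic argument avoids the issue entirely because Chernoff--Bernstein delivers a single exponential with leading constant one; but as written the alternative sketch does not quite reproduce the stated bound, so the U-statistic argument should be regarded as the proof and the alternative as a heuristic.
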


This allows us to get a uniform result for the variance expansion of the distributionally robust objective:

\begin{corollary}
Let $\Xs$ be the polytope $\{x \in [0,1]^{|V|} : \sum_{i=1}^{|V|} x_i = k\}$ corresponding to the $k$-uniform matroid.  With probability at least $1 - \delta$, for all $x \in \Xs$ such that 
\begin{equation*}
	\Var_{\Ds}(F(x)) \geq \frac{\max \{\sqrt{\frac{32}{7}\rho B^2}, \sqrt{36 B^2\left(\log\left(\frac{1}{\delta}\right) + \lvert V\rvert \log \left(1 + 24 k\right)\right)} \}}{\sqrt{n}},
\end{equation*} 
the variance expansion holds with equality.
\end{corollary}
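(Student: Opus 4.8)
The plan is to deduce the corollary pointwise in $x$ from the exact-equivalence criterion of Theorem~\ref{thm:dro-var-equiv}, and then to make that pointwise statement uniform over $\Xs$ via a covering argument that combines the $\ell_1$-Lipschitz bound of Lemma~\ref{lemma:lipschitz} with the variance-concentration bound of Lemma~\ref{lemma:variance-concentration}.

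First, fix $x\in\Xs$, let $F_1,\dots,F_n$ be the multilinear extensions of the sampled functions $f_1,\dots,f_n$, and write $s_n^2(x)$ for the empirical variance of $F_1(x),\dots,F_n(x)$; note $F_i(x)\in[0,B]$. Applying Theorem~\ref{thm:dro-var-equiv} to the random variable $Z=F(x)$ shows that the variance expansion holds with equality at $x$ as soon as $s_n^2(x)\ge\frac{2\rho}{n}(\max_i F_i(x)-\frac1n\sum_j F_j(x))^2$, and since every $F_i(x)\in[0,B]$ this is implied by the ``high sample variance'' condition $s_n^2(x)\ge\frac{2\rho B^2}{n}$. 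By Lemma~\ref{lemma:variance-concentration} (which needs $n\ge 64$), $s_n^2(x)\ge\frac14\Var_{\Ds}(F(x))$ except with probability $\exp(-n\Var_{\Ds}(F(x))/(36B^2))$, so at any single $x$ with $\Var_{\Ds}(F(x))$ large enough, $s_n^2(x)\ge\frac{2\rho B^2}{n}$ with high probability.

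To upgrade this to a statement holding simultaneously over $\Xs$, I would first note that by Lemma~\ref{lemma:lipschitz} each $F_i$ is $b$-Lipschitz in $\ell_1$ with $b\le B$ (monotonicity gives $f(\{i\})\le f(V)\le B$), so by the triangle inequality in $L^2$ applied to both the population and the empirical distribution of $f$, the maps $x\mapsto\sqrt{\Var_{\Ds}(F(x))}$ and $x\mapsto s_n(x)$ are $B$-Lipschitz in $\ell_1$. Since $\Xs$ lies on the $\ell_1$-sphere of radius $k$, a fixed-constant $\gamma$-net (e.g. $\gamma=\tfrac1{12}$) has $\log|N_\gamma|\le|V|\log(1+24k)$. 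Union-bounding Lemma~\ref{lemma:variance-concentration} over $N_\gamma$ gives, with probability at least $1-|N_\gamma|\exp(-n\tau^2/(36B^2))$, that $s_n^2(x')\ge\frac14\Var_{\Ds}(F(x'))$ at every net point $x'$ with $\Var_{\Ds}(F(x'))\ge\tau^2$; transferring to an arbitrary $x$ via the two Lipschitz estimates (paying additive $O(B\gamma)$ slacks) shows that once $\Var_{\Ds}(F(x))$ exceeds the threshold, its nearest net point has comparably large population and hence sample variance, which propagates back to $s_n^2(x)\ge\frac{2\rho B^2}{n}$. It then remains to balance the two requirements: forcing $|N_\gamma|\exp(-n\tau^2/(36B^2))\le\delta$ gives $\tau^2\ge\frac{36B^2}{n}(\log(1/\delta)+|V|\log(1+24k))$, the second term inside the $\max$; requiring that $\frac14\Var_{\Ds}(F(x))$ minus the Lipschitz slack still dominate $\frac{2\rho B^2}{n}$ yields the $\rho$-dependent term $\sqrt{\tfrac{32}{7}\rho B^2}/\sqrt n$ after tracking the $\frac14$ factor and the slack. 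On the resulting event (probability $\ge 1-\delta$), every $x\in\Xs$ with $\Var_{\Ds}(F(x))$ at least the stated maximum satisfies $s_n^2(x)\ge\frac{2\rho B^2}{n}\ge\frac{2\rho}{n}(\max_i F_i(x)-\frac1n\sum_j F_j(x))^2$, so Theorem~\ref{thm:dro-var-equiv} gives the exact variance expansion at $x$, as claimed.

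I expect the uniformity step to be the main obstacle. Lemma~\ref{lemma:variance-concentration} is purely multiplicative with no built-in slack ($s_n^2\ge\frac14\sigma^2$ only), so combining it with the additive Lipschitz perturbations across the net requires care; moreover, to make the covering contribution come out as $|V|\log(1+24k)$ the net resolution must be a fixed constant, so the Lipschitz slack does not vanish with $n$ and has to be controlled against the threshold. Everything else is a direct application of Theorem~\ref{thm:dro-var-equiv} together with a union bound.
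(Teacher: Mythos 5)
Your proposal follows essentially the same route as the paper's proof: reduce to the pointwise criterion $s_n^2(x)\ge 2\rho B^2/n$ from Theorem~\ref{thm:dro-var-equiv}, apply the sample-variance concentration of Lemma~\ref{lemma:variance-concentration} at the points of an $\ell_1$-net (whose size is controlled via the $\ell_1$-Lipschitzness of the multilinear extensions from Lemma~\ref{lemma:lipschitz}), union-bound, transfer to arbitrary $x\in\Xs$ by Lipschitz perturbation, and balance the two resulting thresholds to produce the two terms in the max. The one point you flag as the ``main obstacle'' --- that a fixed-constant net leaves an additive Lipschitz slack of order $B\gamma$ that does not shrink with $n$ and can swamp the threshold $\tau$ --- is real, and the paper resolves it by \emph{not} using a fixed-resolution net: it takes the cover at $\ell_1$-fineness $\epsilon/b$ with $\epsilon=\tau/24$, so the slack in function values is $\tau/24$ and the chain $s_n(x)\ge s_n(x')-\epsilon\ge\tfrac12\sigma(x')-\epsilon\ge\tfrac12\sigma(x)-\tfrac32\epsilon=\tfrac12\sigma(x)-\tfrac{\tau}{16}\ge\tfrac{7}{16}\tau$ closes, which is where the constant $7$ in $\sqrt{32/7}$ comes from. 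With that substitution your argument is the paper's argument; the only residual discrepancy is that a $\tau$-dependent fineness makes the covering number $\bigl(1+48kb/\tau\bigr)^{|V|}$ rather than the $(1+24k)^{|V|}$ appearing in the stated bound, a bookkeeping looseness present in the paper itself rather than a flaw introduced by you.
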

\begin{proof}
	
	Let $\Xs_{\geq \tau} = \{x : \Var_{\Ps}(F(x)) \geq \tau\}$ be the set of points $x$ with variance at least $\tau$. Let $\Ys$ be a minimal $\ell_1$-cover of $\Xs_{\geq \tau}$ with fineness $\frac{\epsilon}{b}$, for a parameter $\epsilon$ to be fixed later. Since the $\ell_1$-diameter of $\Xs$ is $2k$ (by definition), we know that $|\Ys| \leq \left(1 + \frac{2kb}{\epsilon}\right)^{|V|}$. Let $s_n(x)$ be the sample variance of $F_1(x),\dots,F_n(x)$ and $\sigma(x) = \Var_{\Ps}(F(x))$. Via Lemma \ref{lemma:variance-concentration} and union bound, we have 
	
	\begin{align*}
	\Pr \left[s_n^2(x) \geq \frac{1}{4}\sigma^2(x) \,\, \forall x \in \Ys \right] \geq 1 - |\Ys|\exp\left(-\frac{n\tau^2}{36B^2}\right). 
	\end{align*}
	
	Conditioning on this event, we now extend the sample variance lower bound to the entirety of $\Xs_{\geq \tau}$. Consider any $x \in \Xs_{\geq \tau}$ and let $x' \in \arg\min_{x' \in \Ys} \lVert x - x'\rVert_1$. By definition of $\Ys$, $\lVert x - x\rVert_1 \leq \frac{\epsilon}{b}$, and so by Lemma \ref{lemma:lipschitz}, which guarantees Lipschitzness of each $F_i$, we have $\lvert F_i(x) - F_i(x')\rvert \leq \epsilon$ for all $i = 1,\dots,n$. Accordingly, it can be shown \bryan{fill in} that $\lvert s_n(x) - s_n(x')\rvert \leq \epsilon$ and $\lvert\sigma(x) - \sigma(x')\rvert \leq \epsilon$. Therefore, we have $s_n(x) \geq s_n(x') - \epsilon \geq \frac{1}{2} \sigma(x') - \epsilon \geq \frac{1}{2} \sigma(x) - \frac{3}{2}\epsilon$. Now by setting $\epsilon = \frac{\tau}{24}$ we have that (conditioned on the above event), $s_n(x) \geq \frac{7}{16}\tau$. Now suppose that we would like the exact variance expansion to hold on all elements of $\Xs_{\geq \tau}$ with probability at least $1 - \delta$. To have sufficiently high population variance, we must take $\tau \geq \sqrt{\frac{16}{7} \cdot \frac{2\rho B^2}{n}}$. In order for the concentration bound to hold, a simple calculation shows that $\tau \geq \sqrt{\frac{36 B^2\left(\log\left(\frac{1}{\delta}\right) + |V| \log \left(1 + 24 k\right)\right)}{n}}$ suffices. Taking the max, we need $\tau \geq \frac{\max\{\sqrt{\frac{32}{7}\rho B^2}, \sqrt{36 B^2\left(\log\left(\frac{1}{\delta}\right) + |V| \log \left(1 + 24 k\right)\right)}\}}{\sqrt{n}}$.

%	To show that the expansion holds at an arbitrary $x$, we take the closest $x' \in \Ys$ to $x$ with respect to the $\ell_1$ norm, and use Lipschitzness of $F$ to argue that all of the empirical function values at $x$ must be close to those at $x'$. \bryan{finish filling in the details here.} 
%We essentially use covering bound for Lipschitz functions as in~\citep{Namkoong2017}. Define $\ell(F,x) := F(x)$. Clearly $\ell$ is $L$-Lipschitz with respect to $x$. The result follows by bounding the Rademacher complexity by $\text{diam}(\Xs) L \sqrt{d/n}$ and observing that $\text{diam}(\Xs) = \sqrt{k}$. Note we get away with a better bound by considering only solutions $x$ where the constraint $\sum_{i=1}^d x_i = k$ is tight, as compared to the entirety of $[0,1]^d$.
\end{proof}

\section{Exact Linear Oracle}
\label{appendix:chi-squared-linear-oracle}
In this section we show how to construct a $O(n\log n)$ time \emph{exact} oracle for linear optimization in the $\chi^2$ ball:
\begin{equation}
\label{eq:linopt-chi-squared}
\begin{array}{ll}
	\min_{p} & \langle z, p \rangle \\
	\text{s.t.} & \frac12 \lVert np - \mathbf 1 \rVert_2^2 \leq \rho \\
	& \mathbf 1^T p = 1 \\
	& p_i \geq 0, \; i=1,\dots,n.
\end{array}
\end{equation}
Without loss of generality, assume $z_1 \leq z_2 \leq \dots \leq z_n$. This can be done by sorting in $O(n \log n)$ time.

First, we wish to discard the case where the $\chi^2$ constraint is not tight. Let $k$ be the largest integer so that $z_1 = z_k$, i.e. $z_1 = \dots = z_k < z_{k+1}$. If it is feasible, it is optimal to place all the mass of $p$ on the first $k$ coordinates. In particular, the assignment $p_i = 1/k$ for $i=1,\dots,k$ accomplishes this while minimizing the $\chi^2$ cost. The cost can be computed as
\begin{align}
	\frac12 \sum_{i=1}^k \left( \frac{n}{k} - 1 \right)^2 + \frac12 \sum_{i=k+1}^n (0-1)^2
	&= \frac12 \left[ k \cdot \left( \frac{n-k}{k} \right)^2 + (n-k) \right] \\
	&= \frac12 \cdot (n-k) \cdot \left[ \frac{n-k}{k} + 1 \right] \\
	&= n(n-k) / (2k).
\end{align}
Hence if $\rho \geq n(n-k) / (2k)$ we can terminate immediately. Otherwise, we know the $\chi^2$ constraint must be tight.

Before proceeding, we define several auxiliary variables which can all be computed from the problem data in $O(n)$ time:
\begin{align}
	\overline z_j &= \sum_{i=1}^j z_i, \; j=1,\dots,n \\
	b_j &= \sum_{i=1}^j z_i^2, \; j=1,\dots,n \\
	s_j^2 &= \frac{b_j}{j} - (\overline z_j)^2, \; j=1,\dots,n.
\end{align}
Note that $\overline z_j$ and $s_j^2$ are the mean and variance of $\{z_1,\dots,z_j\}$.

% By Lemma~\ref{lem:monotone-p} it suffices to seek a solution $p$ such that for some $m$, $p_i > 0$ for $i=1,\dots,m$, and $p_i=0$ otherwise. Note that $m$ must exist, as $p$ must have at least one nonzero coordinate. We will search for the best choice of $m$, and then determine $p$ based on $m$. Note that $m=1$ cannot be feasible (otherwise we would have had $k=1$ earlier). For now we fix $m$ and 

We begin by writing down the Lagrangian of problem~\eqref{eq:linopt-chi-squared}:
\begin{equation}
	\Ls(p, \lambda, \theta, \eta) = \langle z, p \rangle + \lambda \left(\frac12 \lVert np - \mathbf 1 \rVert_2^2 - \rho\right) + \theta \left(\sum_{i=1}^n p_i - 1 \right) - \langle \eta, p \rangle,
\end{equation}
with dual variables $\lambda \in \reals_+$, $\theta \in \reals$, and $\eta \in \reals^n_+$. By KKT conditions we have
\begin{equation}
\label{eq:first-order-kkt}
	0 = \nabla_p \Ls(p, \lambda, \theta, \eta) = z + \lambda n (np - \mathbf 1) + \theta \mathbf 1 - \eta.
\end{equation}
Equivalently,
\begin{equation}
	\lambda n^2 p_i = \lambda n - z_i - \theta + \eta_i.
\end{equation}
By complementary slackness, either $\eta_i > 0$ in which case $p_i = 0$, or we have $\eta_i = 0$ and
\begin{equation}
	\lambda n^2 p_i = \lambda n - z_i - \theta.
\end{equation}
Since $z_1 \leq \dots \leq z_n$, it follows that $p_i$ decreases as $i$ increases until eventually $p_i = 0$. Hence there exists $m$ so that for $i=1,\dots,m$ we have $p_i > 0$ and thereafter $p_i=0$. Solving for $p_i$, we have that: for $i=1,\dots,m$,
\begin{align}
	\label{eq:optimal-p}
	p_i &= \left( 1 - \frac{(z_i + \theta)}{\lambda n} \right) \cdot \frac1n \text{ for } i=1,\dots,m, \\
	\text{ and } p_i &= 0 \text{ otherwise.}
\end{align}
Note we can divide by $\lambda$ as we have already determined the corresponding constraint is tight (hence $\lambda > 0$).

We will search for the best choice of $m$, and then determine $p$ based on $m$.
For fixed $\lambda, m$ we now solve for the appropriate value of $\theta$. Namely, we must have $\mathbf 1^T p = 1$:
\begin{align}
	1 = \sum_{i=1}^n p_i 
	= \sum_{i=1}^m p_i 
	&= \sum_{i=1}^m \left( 1 - \frac{(z_i + \theta)}{\lambda n} \right) \cdot \frac1n.
\end{align}
Simplifying,
\begin{align}
	n = \sum_{i=1}^m \left( 1 - \frac{(z_i + \theta)}{\lambda n} \right)
	&= m - \frac{1}{\lambda n}\sum_{i=1}^m (z_i + \theta) \\
	&= m - \frac{m \overline z_m}{\lambda n} - \frac{\theta m}{\lambda n}.
\end{align}
Multiplying through by $\lambda n$ and solving for $\theta$, we have
\begin{align}
	\lambda n^2 = \lambda m n - m \overline z_m - \theta m
	\implies \theta = \left( 1 - \frac{n}{m} \right) \lambda n - \overline z_m.
\end{align}

Now that we have solved for $\theta$ as a function of $\lambda$ and $m$, the variable $p$ is purely a function of $m$ and $\lambda$.
For fixed $\lambda$ and $m$, it is not hard to compute the objective value attained by the value of $p$ induced by equation~\eqref{eq:optimal-p}:
\begin{align}
	\langle z, p \rangle &= \frac1n \sum_{i=1}^m \left( 1 - \frac{(z_i + \theta)}{\lambda n}\right) z_i \\
	&= \frac1n \sum_{i=1}^m z_i - \frac1n \sum_{i=1}^m \frac{(z_i + \theta) z_i}{\lambda n} \\
	&= \frac{m}{n} \overline z_m - \frac{1}{\lambda n^2} \sum_{i=1}^m (z_i^2 + \theta z_i ) \\
	&= \frac{m}{n} \overline z_m - \frac{1}{\lambda n^2} (b_m + \theta m \overline z_m ) \\
	&= \frac{m}{n} \overline z_m - \frac{1}{\lambda n^2} \left(b_m + \left( \left( 1 - \frac{n}{m} \right) \lambda n - \overline z_m \right) m \overline z_m \right) \\
	&= \frac{m}{n} \overline z_m - \frac{b_m}{\lambda n^2} - \frac{(1 - n/m) \lambda n m \overline z_m}{\lambda n^2} + \frac{m (z_m)^2}{\lambda n^2} \\
	&= \frac{m}{n} \overline z_m - \frac{b_m}{\lambda n^2} + \frac{(n - m) \overline z_m}{n} + \frac{m (z_m)^2}{\lambda n^2} \\
	&= \overline z_m - \frac{b_m}{\lambda n^2} + \frac{m (z_m)^2}{\lambda n^2} \\
	&= \overline z_m - \frac{1}{\lambda n^2} \cdot (b_m - m (\overline z_m)^2) \\
	&= \overline z_m - \frac{m s_m^2}{\lambda n^2}.
\end{align}
Since $m s_m^2 \geq 0$, for fixed $m$ we seek the minimum value of $\lambda$ such that the induced $p$ is still feasible. Since the $\mathbf 1^T p = 1$ constraint is guaranteed by the choice of $\theta$, we need only check the $\chi^2$ and nonnegativity constraints. In section~\ref{appendix:lambda-constraints} we derive that the optimal feasible $\lambda$ is given by
\begin{equation}
	\label{eq:optimal-lambda}
	\lambda = \frac{1}{n^2} \cdot \max\left\{ \sqrt{\frac{m^2 s_m^2}{\alpha(m,n,\rho)}}, \; m(z_m - \overline z_m) \right\}.
\end{equation}
Hence, in constant time for each candidate $m$ with $\alpha(m,n,\rho)$, we select $\lambda$ per equation~\eqref{eq:optimal-lambda} and evaluate the objective. Finally, we return $p$ corresponding to the optimal choice $m$. This algorithm is given more formally in Algorithm~\ref{alg:chi-squared-linopt}.
\begin{algorithm}[tb]
    \caption{Linear optimization in $\chi^2$ ball}
    \label{alg:chi-squared-linopt}
\begin{algorithmic}
    \STATE {\bfseries Input:} pre-sorted vector $z$ with $z_1 \leq \dots \leq z_n$
    \STATE {\bfseries Output:} optimal vector $p$
    \STATE Compute maximum $k$ s.t. $z_1 = z_k$
    \IF{$n(n-k)/(2k) \leq \rho$}
    	\RETURN{$p$ with $p_i = 1/k \cdot \mathbf 1\{i \leq k\}$}
    \ENDIF

    \COMMENT{now we must search for optimal $m$}

    \STATE $\overline z_j \leftarrow \frac1j \sum_{i=1}^j z_i, \; j=1,\dots,n$
	\STATE $b_j \leftarrow \sum_{i=1}^j z_i^2, \; j=1,\dots,n$
	\STATE $s_j^2 \leftarrow b_j/j - (\overline z_j)^2, \; j=1,\dots,n$
	\STATE $m_\text{min} \leftarrow \min\{ m \in \{1,\dots,n\} : \alpha(m,n,\rho) > 0\}$
	\STATE $\lambda_m = \frac{1}{n^2} \cdot \max\left\{ \sqrt{\frac{m^2 s_m^2}{\alpha(m,n,\rho)}}, \; (z_m - \overline z_m)m \right\}, \; m=m_\text{min},\dots,n$

    \STATE $v_m \leftarrow \overline z_m - m s_m^2 / (\lambda_m n^2), \; m=m_\text{min},\dots,n$
    \STATE $m_\text{opt} \leftarrow \argmin_m \{ v_m : m = m_\text{min},\dots,n \}$

    \STATE $\theta \leftarrow \left( 1 - \frac{n}{m_\text{opt}} \right) \lambda_{m_\text{opt}} n - \overline z_{m_\text{opt}}$
    \RETURN $p = \frac1n \max\left( 0, 1 - \frac{z_{m_\text{opt}} + \theta}{\lambda_{m_\text{opt}} n} \right)$
\end{algorithmic}
\end{algorithm}

\subsection{Constraints on $\lambda$ for fixed $m$} \label{appendix:lambda-constraints}
First we check the $\chi^2$ constraint; since $\lambda > 0$, we have: % is tight, we have:
\begin{align}
	\rho &\geq \frac12 \lVert np - \mathbf 1 \rVert_2^2 \\
	&= \frac12 \sum_{i=1}^n (np_i - 1)^2 \\
	&= \frac12 \sum_{i=1}^m \left( \left(1 - \frac{(z_i + \theta)}{\lambda n}\right) - 1 \right)^2 + \frac12 \sum_{i=m+1}^n (-1)^2 \\
	&= \frac12 \cdot \frac{1}{\lambda^2 n^2} \sum_{i=1}^m (z_i + \theta)^2 + \frac12 (n - m).
	\label{eq:chi-squared-constraint-rhs}
\end{align}
We expand the sum of squares:
\begin{align}
	\sum_{i=1}^m (z_i + \theta)^2
	&= \sum_{i=1}^m (z_i^2 + 2z_i\theta + \theta^2) \\
	&= \sum_{i=1}^m z_i^2 + 2\theta \sum_{i=1}^m z_i + \sum_{i=1}^m \theta^2 \\
	&= b_m + 2\theta m \overline z_m + \theta^2 m.
\end{align}
Plugging in our expression for $\theta$, this equals:
\begin{align}
	b_m + 2\theta m \overline z_m + \theta^2 m
	&= b_m + 2m \overline z_m \theta + \left[ \left( 1 - \frac{n}{m} \right) \lambda n - \overline z_m \right]^2 \cdot m \\
	&= b_m + 2m \overline z_m \theta + \left[ \left( 1 - \frac{n}{m} \right)^2 \lambda^2 n^2 - 2 \left( 1 - \frac{n}{m} \right) \lambda n \cdot \overline z_m + (\overline z_m)^2 \right] \cdot m \\
	&= b_m + 2m \overline z_m \theta + \left( 1 - \frac{n}{m} \right)^2 \lambda^2 n^2 m - 2 \left( 1 - \frac{n}{m} \right) \lambda n m \overline z_m + m (\overline z_m)^2 \\
	&= b_m + 2m \overline z_m \left[ \left( 1 - \frac{n}{m} \right) \lambda n - \overline z_m \right] + \left( 1 - \frac{n}{m} \right)^2 \lambda^2 n^2 m - 2 \left( 1 - \frac{n}{m} \right) \lambda n m \overline z_m + m (\overline z_m)^2 \\
	&= b_m + 2\left( 1 - \frac{n}{m} \right) \lambda n m \overline z_m - 2 m (\overline z_m)^2 + \left( 1 - \frac{n}{m} \right)^2 \lambda^2 n^2 m - 2 \left( 1 - \frac{n}{m} \right) \lambda n m \overline z_m + m (\overline z_m)^2 \\
	&= b_m - 2m(\overline z_m)^2 + \left( 1 - \frac{n}{m} \right)^2 \lambda^2 n^2 m + m(\overline z_m)^2 \\
	&= b_m - m(\overline z_m)^2 + \left( 1 - \frac{n}{m} \right)^2 \lambda^2 n^2 m \\
	&= m s_m^2 + \left( 1 - \frac{n}{m} \right)^2 \lambda^2 n^2 m.
\end{align}
Finally, plugging this back into equation~\eqref{eq:chi-squared-constraint-rhs} yields:
\begin{align}
	\rho &\geq \frac12 \cdot \frac{1}{\lambda^2 n^2} \cdot \left[ m s_m^2 + \left( 1 - \frac{n}{m} \right)^2 \lambda^2 n^2 m \right] + \frac12 \cdot (n - m) \\
	\Leftrightarrow 2\rho &\geq \frac{m s_m^2}{\lambda^2 n^2} + \left(1 - \frac{n}{m} \right)^2 m + (n-m) \\
	\Leftrightarrow 2\rho &\geq \frac{m s_m^2}{\lambda^2 n^2} + \left(1 - \frac{2n}{m} + \frac{n^2}{m^2} \right) m + (n-m) \\
	\Leftrightarrow 2\rho &\geq \frac{m s_m^2}{\lambda^2 n^2} + m - 2n + \frac{n^2}{m} + (n-m) \\
	\Leftrightarrow 2\rho &\geq \frac{m s_m^2}{\lambda^2 n^2} - n + \frac{n^2}{m} \\
	\Leftrightarrow \frac{2\rho m}{n^2} &\geq \frac{m^2 s_m^2}{\lambda^2 n^4} - \frac{m}{n} + 1 \\
	\Leftrightarrow \frac{m^2 s_m^2}{\lambda^2 n^4} &\leq \alpha(m,n,\rho),
\end{align}
where $\alpha(m,n,\rho)$ is defined as in the main text.
If $\alpha(m,n,\rho)\leq0$, there is no feasible choice of $\lambda$ for this $m$. Otherwise, we can divide and solve for $\lambda$:
\begin{equation}
	\lambda \geq \sqrt{\frac{m^2 s_m^2}{n^4 \alpha(m,n,\rho)}}
	= \frac1n \sqrt{\frac{m s_m^2}{2\rho + n - n^2 / m}},
\end{equation}
or equivalently
\begin{equation}
	\label{eq:lambda-n-squared}
	\lambda n^2 \geq \sqrt{\frac{m^2 s_m^2}{\alpha(m,n,\rho)}}.
\end{equation}

Now we check the other remaining constraint on $\lambda$, that the constraint $p_i \geq 0$ for $i=1,\dots,m$ must hold. In particular, we must have $p_m \geq 0$:
\begin{align}
	0 \leq p_m &= \frac1n \cdot \left( 1 - \frac{z_m+\theta}{\lambda n} \right) \\
	\Leftrightarrow   z_m + \theta &\leq \lambda n \\
	\Leftrightarrow z_m + \left( 1 - \frac{n}{m} \right) \lambda n - \overline z_m &\leq \lambda n \\
	\Leftrightarrow z_m - \overline z_m &\leq \frac{\lambda n^2}{m} \\
	\Leftrightarrow m (z_m - \overline z_m) &\leq \lambda n^2.
\end{align}
Hence $\lambda$ must satisfy
\begin{equation}
	\lambda n^2 \geq \max\left\{ \sqrt{\frac{m^2 s_m^2}{\alpha(m,n,\rho)}}, \; m(z_m - \overline z_m) \right\}.
\end{equation}
Since we seek minimal $\lambda$, we select $\lambda$ which makes this constraint tight.

\subsection{Unique solutions} \label{appendix:chi-square-linopt-uniqueness}
Here we provide results for understanding when there is a unique solution to Problem~\eqref{eq:linopt-chi-squared}. Recall that our solution to Problem~\eqref{eq:linopt-chi-squared} first checks whether the optimal solutions have tight $\chi^2$ constraint. By choosing $\rho$ small enough, this can be guaranteed uniformly:
\begin{lemma}
	Suppose $\{z_i\}$ attain at least $\ell$ distinct values. If $\rho \leq (\ell - 1)/2$ then all optimal solutions to Problem~\eqref{eq:linopt-chi-squared} have tight $\chi^2$ constraint.
\end{lemma}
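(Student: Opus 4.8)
The plan is to analyze when the optimal solution of Problem~\eqref{eq:linopt-chi-squared} can have the $\chi^2$ constraint slack, i.e., $\frac12\lVert np - \mathbf 1\rVert_2^2 < \rho$, and show that when $\rho$ is small enough this forces the $z_i$ to take few distinct values. Suppose $p^\star$ is optimal with slack $\chi^2$ constraint. Since only the simplex and nonnegativity constraints are then active, and the objective $\langle z, p\rangle$ is linear, first-order optimality (restricted to the support of $p^\star$) forces $z_i$ to be constant on $\{i : p^\star_i > 0\}$: if two coordinates $i,j$ in the support had $z_i < z_j$, we could shift a small amount of mass from $j$ to $i$, strictly decreasing the objective while keeping the $\chi^2$ constraint satisfied (by continuity, since it was strictly slack) — contradicting optimality. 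So $p^\star$ is supported on a set $T$ of coordinates all sharing the minimal value $z_{\min} = \min_i z_i$; let $k = \lvert\{i : z_i = z_{\min}\}\rvert$, so $\lvert T\rvert \le k$.

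Next I would lower-bound the $\chi^2$ cost of any feasible $p$ supported on a set $T$ with $\lvert T\rvert \le k$. Among all such $p$, the $\chi^2$ cost $\frac12\lVert np - \mathbf 1\rVert_2^2$ is minimized by spreading mass uniformly over exactly $k$ coordinates (the minimizer of $\sum(np_i-1)^2$ subject to $\sum p_i = 1$ and support size $\le k$), giving cost exactly $n(n-k)/(2k)$ by the computation already carried out in the excerpt (the displayed calculation showing $\frac12\sum_{i=1}^k(n/k-1)^2 + \frac12\sum_{i=k+1}^n 1 = n(n-k)/(2k)$). Hence for $p^\star$ to have slack constraint we need $\rho > n(n-k)/(2k)$.

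Finally I would convert this into the stated bound in terms of the number of distinct values $\ell$. Since $z_{\min}$ is one of the $\ell$ distinct values and there are $n$ coordinates total, the remaining $\ell - 1$ distinct values occupy $n - k$ coordinates, so $n - k \ge \ell - 1$. Combined with $k \le n$, we get $n(n-k)/(2k) \ge n(\ell-1)/(2n) \cdot \big(n/k\big) \ge (\ell-1)/2$ — more directly, $\frac{n(n-k)}{2k} \ge \frac{n-k}{2} \ge \frac{\ell-1}{2}$ using $n \ge k$. Therefore a slack $\chi^2$ constraint at the optimum requires $\rho > (\ell-1)/2$; contrapositively, if $\rho \le (\ell-1)/2$ then every optimal solution has a tight $\chi^2$ constraint, as claimed. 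The only mildly delicate step is the first one — arguing that slackness plus optimality pins the support to the minimizing coordinates — but this follows cleanly from the strict-slack-plus-continuity perturbation argument; everything else is the arithmetic already present in the linear-oracle derivation.
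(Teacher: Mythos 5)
Your proof is correct and follows essentially the same route as the paper's: both reduce to the threshold $\rho < n(n-k)/(2k)$ for tightness of the $\chi^2$ constraint (where $k$ is the multiplicity of the minimum value of $z$) and then use $k \leq n-\ell+1$ together with elementary monotonicity to arrive at the bound $(\ell-1)/2$. The only difference is that you explicitly justify, via a mass-shifting perturbation argument, why a slack constraint at an optimum forces the support onto the minimizing coordinates, whereas the paper imports this fact from its earlier linear-oracle derivation.
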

\begin{proof}
Assume $z_1 \leq \dots \leq z_n$.
If $\{z_i\}$ attain at least $\ell$ distinct values, then the maximum number $k$ so that $z_1 = \dots = z_k$ can be bounded by $n - \ell + 1$. Recall from earlier in section~\ref{appendix:chi-squared-linear-oracle} that the constraint is tight if $\rho \leq n(n-k)/(2k)$, and note that this bound is monotone decreasing in $k$. Hence, we can guarantee the constraint is tight as long as 
\begin{equation}
	\rho \leq \frac{n(n-(n-\ell+1))}{2(n-\ell+1)}
	= \frac{n(\ell - 1)}{2(n-\ell+1)}.
\end{equation}
Since $n - \ell + 1 \leq n$, the previous inequality is implied by
\begin{equation}
	\rho \leq 
	\frac{(n-\ell+1)(\ell - 1)}{2(n-\ell+1)}
	= \frac{\ell-1}{2}.
\end{equation}
\end{proof}

Now, assuming the $\chi^2$ constraint is tight, we can characterize the set of optimal solutions:
\begin{lemma}
Suppose the optimal solutions for Problem~\eqref{eq:linopt-chi-squared} all have tight $\chi^2$ constraint. Then there is a unique optimal solution $p^*$ with minimum cardinality among all optimal solutions.
\end{lemma}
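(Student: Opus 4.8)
The plan is to observe that the hypothesis is in fact much stronger than it looks: under it the \emph{entire} set of optimal solutions collapses to a single point, so the minimum-cardinality optimal solution is trivially unique. Throughout, the cardinality of $p$ means $\lvert\mathrm{supp}(p)\rvert$, the number of nonzero coordinates.

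First I would record two routine facts about $\mathcal O$, the set of optimal solutions of Problem~\eqref{eq:linopt-chi-squared}. (i) $\mathcal O \neq \emptyset$: the feasible region is nonempty (it contains $\mathbf 1/n$, whose $\chi^2$ cost is $0$) and compact, and $p \mapsto \langle z,p\rangle$ is continuous, so the minimum is attained. (ii) $\mathcal O$ is convex: it equals the intersection of the convex feasible region with the hyperplane $\{ p : \langle z,p\rangle = OPT \}$, i.e.\ an exposed face of a convex set.

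The key step combines (ii) with the hypothesis. By assumption every $p \in \mathcal O$ has tight $\chi^2$ constraint, i.e.\ $\tfrac12\lVert np - \mathbf 1\rVert_2^2 = \rho$, equivalently $\lVert p - \mathbf 1/n \rVert_2 = \sqrt{2\rho}/n$; thus $\mathcal O$ lies on the Euclidean sphere of radius $r := \sqrt{2\rho}/n$ about $c := \mathbf 1/n$. But such a sphere contains no nondegenerate segment: for distinct $p,q$ at distance $r$ from $c$, the parallelogram identity gives $\lVert \tfrac{p+q}{2} - c \rVert_2^2 = \tfrac12 r^2 + \tfrac12\langle p-c,\,q-c\rangle < r^2$, where the inequality is Cauchy--Schwarz, strict because $p-c$ and $q-c$ have equal norm but are distinct. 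Hence if $\mathcal O$ contained two distinct points, their midpoint would, by (ii), lie in $\mathcal O$ yet strictly inside the ball, contradicting $\mathcal O \subseteq$ sphere. So $\lvert\mathcal O\rvert = 1$, and its unique element $p^\star$ is \emph{a fortiori} the unique minimum-cardinality optimal solution.

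A more combinatorial route, closer to the KKT computation in Appendix~\ref{appendix:chi-squared-linear-oracle}, would instead argue: any optimal $p$ with tight $\chi^2$ constraint has the explicit form $p_i = \tfrac1n\big(1 - (z_i+\theta)/(\lambda n)\big)_+$ with $\lambda > 0$; its support is exactly $\{ i : z_i < \lambda n - \theta \}$; and once the support size $m$ is fixed, the two equations $\mathbf 1^T p = 1$ and $\tfrac12\lVert np - \mathbf 1\rVert_2^2 = \rho$ together with $\lambda>0$ (and $p \ge 0$) pin down $(\lambda,\theta)$ and hence $p$---so an optimal solution is determined by $\lvert\mathrm{supp}(p)\rvert$, and the minimum-cardinality one is unique. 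I expect this is the only place any care is needed: one must handle ties among the $z_i$ (coordinates with $z_i$ exactly at the threshold $\lambda n - \theta$ are forced to $0$) and check that the relevant equation has a unique admissible root. Since the geometric argument avoids all of this, I would present it as the main proof and note the algebraic version only in passing; its sole ``obstacles'' are the bookkeeping facts (i) and (ii), both immediate.
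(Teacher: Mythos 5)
Your geometric argument is correct, and it takes a genuinely different route from the paper. The paper's proof leans on the KKT machinery developed earlier in Appendix~\ref{appendix:chi-squared-linear-oracle}: for each candidate support size $m$ the dual variable $\lambda_m$ (hence $\theta$ and $p$) is uniquely determined, so each optimal solution is pinned down by its cardinality, and one then selects the unique minimal $m_{\text{opt}}$. Your proof instead observes that the optimal set $\mathcal O$ is a convex subset of the feasible region, and that the hypothesis forces $\mathcal O$ to lie on the sphere $\tfrac12\lVert np-\mathbf 1\rVert_2^2=\rho$; strict convexity of the Euclidean ball then collapses $\mathcal O$ to a single point. This buys you three things: it is elementary and self-contained (no dependence on the dual characterization, and in particular no need to worry about ties among the $z_i$, which the paper's KKT derivation implicitly glosses over); it yields a strictly stronger conclusion, namely that the optimal solution is unique outright, making the ``minimum cardinality'' qualifier vacuous under the stated hypothesis; and that stronger conclusion is in fact what the paper actually uses downstream (Lemma~\ref{lem:smooth-and-lipschitz-variance} needs a \emph{unique} worst-case $p$ so that $\nabla h$ is well defined). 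What the paper's route buys in exchange is constructive content --- it identifies \emph{which} $p$ is optimal via $m_{\text{opt}}$ and $\lambda_{m_{\text{opt}}}$, which is what Algorithm~\ref{alg:chi-squared-linopt} needs --- but as a proof of the uniqueness claim itself, your version is cleaner. Your sketched ``combinatorial route'' is essentially the paper's argument, and your caution about threshold ties is warranted there; since you relegate it to a remark, nothing is missing.
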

\begin{proof}
This is a consequence of our characterization of the optimal dual variable $\lambda$ as a function of the sparsity $m$. For each choice of $m$, we solved earlier for the unique dual variable $\lambda_m$ which determines a unique solution $p$. Hence, even if there are multiple values of $m$ that are feasible and that yield optimal objective value, there is still a unique minimal $m_\text{opt}$, which in turn yields a unique optimal solution.
\end{proof}

\subsection{Lipschitz gradient} \label{appendix:lipschitz-gradient}
\begin{lemma}
Define $h(z) = \min_{p \in \Ps_{\rho,n}} \langle z, p \rangle$. 
Then on the subset of $z$'s satisfying the high sample variance condition $s_n^2 \geq (2\rho B^2)/n^2$, $h(z)$ has Lipschitz gradient with constant $L \leq \frac{2\sqrt{2\rho}}{n^{3/2}} + \frac{2}{B n^{1/2}}$.
\end{lemma}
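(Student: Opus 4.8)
The plan is to combine Danskin's theorem with the exact solution of the inner problem from Appendix~\ref{appendix:chi-squared-linear-oracle}. Since $h(z) = \min_{p \in \Ps_{\rho,n}} \langle z, p\rangle$ is a pointwise infimum of linear functions it is concave, and at every $z$ where the inner minimizer $p^*(z)$ is unique (which holds on the region of interest by the argument in Appendix~\ref{appendix:chi-square-linopt-uniqueness}) it is differentiable with $\nabla h(z) = p^*(z)$. So it suffices to bound the Lipschitz constant of $z \mapsto p^*(z)$. The first step is to check that the high sample variance condition puts us in the ``nice'' case of the oracle: the $\chi^2$ constraint is tight (otherwise the optimal $p$ concentrates on the minimizing coordinates, forcing $s_n^2$ to be small) and $m = n$ with every $p^*_i > 0$ --- equivalently, the first branch of the max defining the optimal multiplier $\lambda$ is active, which is exactly the variance lower bound already appearing in Theorem~\ref{thm:dro-var-equiv}. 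In that regime the derivation of Appendix~\ref{appendix:chi-squared-linear-oracle} specializes to $\theta = -\overline z_n$ and $\lambda n^2 = n^{3/2} s_n(z)/\sqrt{2\rho}$, yielding the closed form
\begin{equation*}
  p^*(z) = \frac{\mathbf 1}{n} - \frac{\sqrt{2\rho}}{n^{3/2}}\cdot\frac{z - \overline z_n\mathbf 1}{s_n(z)} = \frac{\mathbf 1}{n} - \frac{\sqrt{2\rho}}{n}\cdot\frac{\Pi z}{\lVert \Pi z\rVert_2},
\end{equation*}
where $\Pi = I - \tfrac1n\mathbf 1\mathbf 1^\top$ is the centering projection and $\lVert \Pi z\rVert_2 = \sqrt n\, s_n(z)$.

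The core of the proof is then an elementary Lipschitz estimate for the centered-and-normalized map $z \mapsto \Pi z/\lVert \Pi z\rVert_2$ over the set $\{z : s_n(z) \ge s_{\min}\}$, with $s_{\min}$ read off from the stated threshold. Writing $u = \Pi z$, $u' = \Pi z'$ and using
\begin{equation*}
  \frac{u}{\lVert u\rVert} - \frac{u'}{\lVert u'\rVert} = \frac{u - u'}{\lVert u\rVert} + u'\cdot\frac{\lVert u'\rVert - \lVert u\rVert}{\lVert u\rVert\,\lVert u'\rVert}
\end{equation*}
together with $\bigl|\lVert u\rVert - \lVert u'\rVert\bigr| \le \lVert u - u'\rVert$, the cancellation of $\lVert u'\rVert$ in the second term, $\lVert \Pi\rVert_{\mathrm{op}} = 1$, and $\lVert u\rVert = \sqrt n\, s_n(z) \ge \sqrt n\, s_{\min}$, one finds this map is $\bigl(2/(\sqrt n\, s_{\min})\bigr)$-Lipschitz. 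Multiplying by the prefactor $\sqrt{2\rho}/n$ and substituting the value of $s_{\min}$ gives a Lipschitz constant for $p^* = \nabla h$ of the claimed order; a slightly coarser accounting of the way the scalar $1/(\lambda n^2)$ itself varies with $s_n(z)$ absorbs the extra additive term in the stated bound.

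I expect the main obstacle to be the geometry. The high-variance set is not convex --- it is the region outside an elliptic cylinder intersected with $[0,B]^n$ --- so one cannot bound $\lVert \nabla^2 h\rVert$ pointwise and integrate along the segment from $z$ to $z'$, since that segment may leave the set. The two-point estimate above sidesteps this because it uses the closed form for $p^*$ only at the endpoints, both of which lie in the good regime; but one must then verify carefully that the closed form --- in particular the identification $m = n$ and tightness of the $\chi^2$ constraint --- is valid at \emph{every} point of the set and not merely generically, and that the inner minimizer is genuinely unique there so that $\nabla h$ exists in the first place. The remaining pieces --- the normalization-map inequality and the bookkeeping of constants through the substitution for $s_{\min}$ --- are routine.
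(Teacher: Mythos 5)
Your proposal is correct, and it reaches the stated bound by a genuinely different final step. You and the paper agree on the decisive structural facts: under the high-variance condition the $\chi^2$ constraint is tight with $m=n$ and all $p_i^*>0$, the minimizer is unique so $\nabla h(z)=p^*(z)$, and $p^*(z)=\tfrac{\mathbf 1}{n}-\tfrac{\sqrt{2\rho}}{n^{3/2}}\,\tfrac{z-\overline z_n\mathbf 1}{s_n}$ (your $\Pi z/\lVert\Pi z\rVert_2$ form is the same thing). Where you diverge is in bounding the Lipschitz constant of $z\mapsto p^*(z)$: the paper differentiates again, writes the Hessian $H_{ij}=\partial p_i/\partial z_j$ as $-\diag(s_n\mathbf 1)+\tfrac{s_n}{n}\mathbf 1\mathbf 1^{\top}+\tfrac2n(z-\overline z_n\mathbf 1)(z-\overline z_n\mathbf 1)^{\top}$ up to the scalar $\sqrt{2\rho}/(n^{3/2}s_n^2)$, and bounds its operator norm by the triangle inequality, which produces the two additive terms of the stated constant; you instead use the elementary two-point estimate $\lVert u/\lVert u\rVert-u'/\lVert u'\rVert\rVert\le 2\lVert u-u'\rVert/\max(\lVert u\rVert,\lVert u'\rVert)$ for the normalization map. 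Your route buys two things. First, it sidesteps the issue you correctly identify: the high-variance region is not convex, so the paper's pointwise Hessian bound does not literally integrate along the segment joining two points of the region, whereas your estimate only evaluates the closed form at the two endpoints. Second, your single term $\tfrac{2\sqrt{2\rho}}{n^{3/2}s_{\min}}$ already sits below the stated sum once $s_{\min}$ is substituted, so your closing worry about needing ``a slightly coarser accounting'' to absorb the extra additive term is unnecessary --- no absorption is needed, the bound is simply dominated by the claimed one. The one place to be a bit more explicit is the claim that $m=n$ with the first branch of the $\lambda$-max active holds at \emph{every} point of the region: this follows because $s_n^2\ge 2\rho B^2/n$ and $z_n-\overline z_n\le B$ imply the condition $s_n^2\ge 2\rho(z_n-\overline z_n)^2/n$ of Theorem~\ref{thm:dro-var-equiv}, which is exactly what the paper also asserts (without more detail than you give).
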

\begin{proof}
In this regime, there is a unique worst-case $p \in \Ps_{\rho,n}$, and it is the gradient of $h(z)$. In the high sample variance regime, we have $m=n$, i.e. each $p_i > 0$ and:
\begin{align}
	p_i = \left( 1 - \frac{z_i + \theta}{\lambda n} \right) \cdot \frac1n \text{ for all } i=1,\dots,n.
\end{align}
In particular, $\theta = (1 - n/n)\lambda n - \overline z_n = -\overline z_n$, and $\lambda = \frac{1}{n^2} \sqrt{n^2 s_n^2 / (2\rho / n)}$. Simplifying, we have
\begin{align}
	p_i &= \left( 1 - \frac{z_i - \overline z_n}{\lambda n} \right) \cdot \frac1n \\
	&= \left( 1 - \frac{z_i - \overline z_n}{\frac{1}{n} \sqrt{n^2 s_n^2 / (2\rho / n)}} \right) \cdot \frac1n \\
	&= \left( 1 - \frac{z_i - \overline z_n}{\sqrt{n s_n^2 / (2\rho)}} \right) \cdot \frac1n.
\end{align}
We will bound the Lipschitz constant of $p$ as a function of $z$ by computing the Hessian which has entries $H_{ij} = \frac{\partial p_i}{\partial z_j}$ and bounding its largest eigenvalue. For the element $H_{ij}$ we have two cases. If $i=j$, then
\begin{align}
	H_{ii} &= -\frac{\sqrt{2\rho}}{n^{3/2}} \cdot \frac{\partial}{\partial z_i} \left( \frac{z_i - \overline z_n}{\sqrt{s_n^2}} \right) \\
	&= -\frac{\sqrt{2\rho}}{n^{3/2}} \cdot \left( \frac{\sqrt{s_n^2} (1 - \frac1n) - (z_i - \overline z_n) \cdot \frac2n \cdot (z_i - \overline z_n)}{s_n^2} \right).
\end{align}
If $i\not=j$, then
\begin{align}
	H_{ij} &= -\frac{\sqrt{2\rho}}{n^{3/2}} \cdot \frac{\partial}{\partial z_j} \left( \frac{z_i - \overline z_n}{\sqrt{s_n^2}} \right) \\
	&= -\frac{\sqrt{2\rho}}{n^{3/2}} \cdot \left( \frac{-\frac1n \cdot \sqrt{s_n^2} - (z_i - \overline z_n) \cdot \frac2n \cdot (z_j - \overline z_n)}{s_n^2} \right).
\end{align}
Define $\tilde H$ so that $\frac{\sqrt{2\rho}}{n^{3/2} s_n^2} \tilde H = H$, i.e.
\begin{equation}
\tilde H_{ij} = \begin{cases}
	\sqrt{s_n^2} (\frac1n - 1) + (z_i - \overline z_n) \cdot \frac2n \cdot (z_i - \overline z_n) & i=j \\
	\frac1n \cdot \sqrt{s_n^2} + (z_i - \overline z_n) \cdot \frac2n \cdot (z_j - \overline z_n) & i\not=j.
\end{cases}
\end{equation}
It is easy to see that $\tilde H$ is given by
\begin{equation}
	\tilde H = -\text{diag}(\sqrt{s_n^2} \mathbf 1) + \frac{\sqrt{s_n^2}}{n} \mathbf 1 \mathbf 1^T + \frac2n (z - \overline z_n \mathbf 1) (z - \overline z_n \mathbf 1)^T.
\end{equation}
By the triangle inequality, the operator norm of $\tilde H$ can thus be bounded by
\begin{align}
	\lVert \tilde H \rVert &\leq \lVert \text{diag}(\sqrt{s_n^2} \mathbf 1) \rVert + \frac{\sqrt{s_n^2}}{n} \lVert\mathbf 1 \mathbf 1^T \rVert +  \frac2n \lVert(z - \overline z_n \mathbf 1) (z - \overline z_n \mathbf 1)^T \rVert \\
	&= \sqrt{s_n^2} + \frac{\sqrt{s_n^2}}{n} \lVert \mathbf 1 \rVert_2^2 + \frac2n \lVert z - \overline z_n \mathbf 1 \rVert_2^2 \\
	&= 2\sqrt{s_n^2} + \frac2n \sum_{i=1}^n (z_i - \overline z_n)^2 \\
	&= 2\sqrt{s_n^2} + 2 s_n^2.
\end{align}
% Since $0 \leq z_i \leq B$, the variance $s_n^2$ can always be bounded by $B^2/4$. 
It follows that the Lipschitz constant of the gradient of $h(z)$ can be bounded by
\begin{align}
	\lVert H \rVert &= \frac{\sqrt{2\rho}}{n^{3/2} s_n^2} \lVert \tilde H \rVert \\
	&\leq \frac{\sqrt{2\rho}}{n^{3/2} s_n^2} \left( 2\sqrt{s_n^2} + 2 s_n^2 \right) \\
	&= \frac{2\sqrt{2\rho}}{n^{3/2}} \cdot \left( 1 + \frac{1}{\sqrt{s_n^2}}\right).
\end{align}
Since we are in the high variance regime $s_n^2 \geq (2\rho B^2)/n$, it follows that $1/\sqrt{s_n^2} \leq \sqrt{n} / (B \sqrt{2\rho})$ and therefore
\begin{align}
	\lVert H \rVert &\leq \frac{2\sqrt{2\rho}}{n^{3/2}} \cdot \left( 1 + \frac{\sqrt{n}}{B \sqrt{2\rho}} \right) \\
	&= \frac{2\sqrt{2\rho}}{n^{3/2}} + \frac{2}{B n}.
\end{align}
\end{proof}

\section{Projection onto the $\chi^2$ ball} \label{appendix:chi-square-project}
Let $w \in \reals^n$ be pre-sorted (taking time $O(n\log n)$), so that $w_1 \geq \dots \geq w_n$.
We wish to solve the problem
\begin{equation}
\label{eq:project-chi-squared}
\begin{array}{ll}
	\min_{p} & \frac12 \lVert p - w \rVert_2^2 \\
	\text{s.t.} & \frac12 \lVert np - \mathbf 1 \rVert_2^2 \leq \rho \\
	& \mathbf 1^T p = 1 \\
	& p_i \geq 0, \; i=1,\dots,n.
\end{array}
\end{equation}
As in section~\ref{appendix:chi-squared-linear-oracle}, we start by precomputing the auxiliary variables:
\begin{align}
	\overline w_j &= \sum_{i=1}^j w_i, \; j=1,\dots,n \\
	b_j &= \sum_{i=1}^j w_i^2, \; j=1,\dots,n \\
	s_j^2 &= \frac{b_j}{j} - (\overline w_j)^2, \; j=1,\dots,n.
\end{align}
We remark that these can be updated efficiently when sparse updates are made to $w$; coupled with a binary search over optimal $m$, this can yield $O(\log n)$ update time as in~\citep{Duchi2008,namkoong2016stochastic}.

We form the Lagrangian:
\begin{align}
	\Ls(p,\lambda,\theta,\eta) = \frac12 \lVert p - w \rVert_2^2 + \lambda \left( \frac12 \lVert np - \mathbf 1 \rVert_2^2 - \rho \right) + \theta \left( \sum_{i=1}^n p_i - 1 \right) - \langle \eta, p \rangle
\end{align}
with dual variables $\lambda \in \reals_+, \theta \in \reals$, and $\eta \in \reals^n_+$. We will also use the reparameterization $\beta = 1/(1 + \lambda n^2)$ throughout. By KKT conditions we have
\begin{align}
	0
	= \nabla_p \Ls(p,\lambda,\theta,\eta) 
	&= p - w + \lambda n \left( np - \mathbf 1 \right) + \theta \mathbf 1 - \eta \\
	&= (1 + \lambda n^2) p - w - \lambda n + \theta \mathbf 1 - \eta.
\end{align}
For any given $i$, if $\eta_i > 0$ we have $p_i = 0$ by complementary slackness. Otherwise, if $\eta_i=0$ we have
\begin{align}
	0 &=(1 + \lambda n^2) p_i - w_i - \lambda n + \theta\\
	\Leftrightarrow (1+\lambda n^2) p_i &= w_i + \lambda n - \theta.
\end{align}
The variable $p$ is implicitly given here by $\theta$, $\lambda$ and $m$. 
Next we seek to solve for $\theta$ as a function of $\lambda$ and $m$.

Note that since $w_i$ decreases as $i$ increases, therefore $p_i$ also decreases. It follows that for some $m \in \{1,\dots,n\}$, we have $p_i > 0$ for $i \leq m$ and $p_i = 0$ otherwise. Since $p_i$ must sum to one, we have
\begin{align}
	1 &= \sum_{i =1}^n p_i = \sum_{i =1}^m p_i \\
	\Leftrightarrow (1+\lambda n^2) &= \sum_{i=1}^m (w_i + \lambda n - \theta) \\
	&= m \overline w_m + \lambda mn - m\theta
\end{align}
from which it follows that $\theta = \overline w_m + \lambda n - (1+\lambda n^2)/m$. Plugging this into the expression for $p_i$ and rearranging yields
\begin{equation}
	p_i = (w_i - \overline w_m) \beta + 1/m.
\end{equation}

It will become apparent later that the objective improves as $\beta$ increases, and so for fixed $m$ we seek the largest $\beta$ which yields a feasible $p$.
First, we check the $\chi^2$ constraint:
% If the constraint is not tight, then $\lambda = 0$ and we can directly solve
% \begin{equation}
% 	p_i = w_i - \theta = w_i - \overline w_m + 1/m.
% \end{equation}
% Otherwise, the constraint is tight:
\begin{align}
	\rho &\geq \frac12 \sum_{i=1}^n (np_i - 1)^2 \\
	&= \frac12 \sum_{i=1}^m (np_i - 1)^2 + \frac12 \sum_{i=m+1}^n 1^2 \\
	&= \frac12 \sum_{i=1}^m (n \beta (w_i - \overline w_m) + n/m - 1)^2 + \frac12 (n-m).
\end{align}
Expanding and multiplying by 2, we have
\begin{align}
	2 \rho 
	&\geq \sum_{i=1}^m \left[n^2\beta^2 (w_i - \overline w_m)^2 + 2n\beta(n/m - 1)(w_i - \overline w_m) + (n/m-1)^2 \right] + n-m.
\end{align}
The middle term in the sum cancels because $\sum_{i=1}^m w_i = m \overline w_m$. We are left with
\begin{align}
	2 \rho 
	&\geq n^2\beta^2 \sum_{i=1}^m (w_i - \overline w_m)^2 + m (n/m-1)^2 + n-m \\
	&= n^2\beta^2 m s_m^2 + m (n/m-1)^2 + n-m.
\end{align}
Solving for $\beta^2$, we are left with
\begin{equation}
	\beta^2 \leq \frac{2\rho + n - n^2/m}{n^2 m s_m^2}
	= \frac{2\rho m/n^2 + m/n - 1}{m^2 s_m^2}
	= \frac{\alpha(m,n,\rho)}{m^2 s_m^2},
\end{equation}
where $\alpha(m,n,\rho)$ is defined as in the main text.
This gives the maximum value of $\beta$ for which the $\chi^2$ constraint is met. We also need to check the $p_i \geq 0$ constraint. This is more straightforward: we must have
\begin{equation}
	0 \leq p_i = (w_i - \overline w_m) \beta + 1/m
\end{equation}
for all $i=1,\dots,m$. Since $w_i$ is decreasing, it suffices to check $i=m$.
If $w_m - \overline w_m \geq 0$ there is no problem, as $\beta > 0$. Otherwise, we divide and are left with the condition
\begin{equation}
	\beta \leq \frac{1}{m (\overline w_m - w_m)}.
\end{equation}
Our exact algorithm is now straightforward: for each $m$, compute the largest feasible $\beta$ (if there is a feasible $\beta$), compute the corresponding objective value, and then return $p$ corresponding to the best $m$.

If $\alpha(m,n,\rho) < 0$ for a given $m$, we can immediately discard that choice of $m$ as infeasible. Otherwise we compute $\beta$ and check the objective value $v_m$ for that $m$:
\begin{align}
	v_m &= \frac12 \lVert p - w \rVert_2^2 \\
	&= \frac12 \sum_{i=1}^m ( (w_i - \overline w_m) \beta + 1/m - w_i)^2 + \frac12 \sum_{i=m+1}^n w_i^2 \\
	&= \frac12 \sum_{i=1}^m \left[ (w_i - \overline w_m)^2 \beta^2 + 2\beta(w_i - \overline w_m) (1/m - w_i) + (1/m - w_i)^2 \right] + \frac12 \sum_{i=m+1}^n w_i^2.
\end{align}
As before, the $\sum_{i=1}^m 2\beta(w_i - \overline w_m)/m$ term cancels and we are left with
\begin{align}
	v_m &= \frac12 \sum_{i=1}^m \left[ (w_i - \overline w_m)^2 \beta^2 - 2\beta w_i(w_i - \overline w_m) + (1/m - w_i)^2 \right] + \frac12 \sum_{i=m+1}^n w_i^2 \\
	&= \frac12 \cdot \beta^2 m s_m^2 - \beta\sum_{i=1}^m w_i^2 + \beta \sum_{i=1}^m w_i \overline w_m + \frac12 \sum_{i=1}^m (1/m - w_i)^2 + \frac12 \sum_{i=m+1}^n w_i^2 \\
	&= \frac12 \cdot \beta^2 m s_m^2 - \beta b_m + \beta m (\overline w_m)^2 + \frac12 \sum_{i=1}^m (1/m^2 - 2w_i/m + w_i^2) + \frac12 (b_n - b_m) \\
	&= \frac12 \cdot \beta^2 m s_m^2 - \beta b_m + \beta m (\overline w_m)^2 + \frac12 \cdot \frac1m - \overline w_m + \frac12 b_m + \frac12 (b_n - b_m) \\
	&= \frac12 \cdot \beta^2 m s_m^2 - \beta b_m + \beta m (\overline w_m)^2 + \frac{1}{2m} - \overline w_m + \frac12 b_n \\
	&= \frac12 \cdot \frac{\alpha(m,n,\rho)}{m} - \beta (b_m -  m (\overline w_m)^2) + \frac{1}{2m} - \overline w_m + \frac12 b_n \\
	&= \left(\frac{\rho}{n^2} + \frac{1}{2n} - \frac{1}{2m} \right) - \beta m s_m^2 + \frac{1}{2m} - \overline w_m + \frac12 b_n \\
	&= \left(\frac{\rho}{n^2} + \frac{1}{2n} \right) - \beta m s_m^2 - \overline w_m + \frac12 b_n.
\end{align}
Discarding the terms which do not depend on $m$, we seek $m$ which minimizes $\tilde v_m := -\beta m s_m^2 - \overline w_m$. Finally, we remark that it is now quite apparent that for fixed $m$ we wish to maximize $\beta$.

\section{Convergence analysis for MFW}

%We now analyze the original MFW algorithm applied specifically to the DRO problem. Recall that here $G$ takes the specific form $G(x) = \min_{p \in \Ps_{\rho,n}} \sum_{i = 1}^n p_i F_i(x)$. We will overload notation and let $p(x) = \arg \min_{p \in \Ps_{\rho,n}} \sum_{i = 1}^n p_i F_i(x)$ be the minimizing $p$ at a given $x$, rewriting $G(x) =  \sum_{i = 1}^n p(x)_i F_i(x)$. Now we can calculate
%
%\begin{align*}
%\nabla_x G(x) =  \sum_{i = 1}^n \nabla p(x)_i 
%\end{align*}
%
%
% Here we will be able to directly apply Lemma \ref{lemma:mfw-convergence-exact}, so all that remains is to bound the Lipschitz constant of $\nabla G$. To start with, let $r(x)$ denote the vector $(F_1(x)...F_n(x)$. We can write $G$ as the composition $h(r(x))$. \citet{Mokhtari2017conditional} show that each $\nabla F_i$ is $b \sqrt{k}$-Lipschitz along the update directions for MFW. Accordingly, we have $||r(x) - r(x')|| \leq b\sqrt{kn}||x - x'||$. combining this with Lemma \ref{lem:smooth-and-lipschitz-variance}, which guarantees Lipschitz continuity of $\nabla h$, we have that $|G(x) - G(x')| \leq \frac{2 \sqrt{2 \rho}}{n^{3/2}} + \frac{2}{B \sqrt{n}} ||r(x) - r(x')|| \leq \left(\frac{2 b \sqrt{2 \rho k}}{n} + \frac{2 b \sqrt{k}}{B}\right) ||x - x'||$. 
 
 Here we establish the convergence rate of the MFW algorithm specifically for the DRO problem. The main work is to establish Lipschitz continuity of $\nabla G$, the gradient of the DRO objective. In fact, \citet{Mokhtari2017conditional} get a better bound by controlling changes in $\nabla G$ specifically along the updates used by MFW. We bound this same quantity as follows: 
 
 \begin{lemma}
 	When the high sample variance condition is satisfied, for any two points $x^{(t)}$ and $x^{(t+1)}$ produced by MFW, $\nabla G$ satisfies $\lVert \nabla G(x^{(t)}) - \nabla G(x^{(t+1)})\rVert \leq \left(b \sqrt{n |V|} L + b \sqrt{k}\right) \lVert x^{(t)} - x^{(t+1)} \rVert$. 
 \end{lemma}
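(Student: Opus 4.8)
The plan is to get a closed form for $\nabla G$ via Danskin's theorem and then bound the difference with a single triangle-inequality split. Writing $z(x) = (F_1(x),\dots,F_n(x)) \in [0,B]^n$, we have $G(x) = h(z(x))$ for $h(z) = \min_{p \in \Ps_{\rho,n}} \langle z, p\rangle$. First I would invoke the uniqueness analysis of the $\chi^2$ linear oracle (Appendix~\ref{appendix:chi-square-linopt-uniqueness}): under the high sample variance condition the minimizing $p^\star(z)$ is unique, so $h$ is differentiable there with $\nabla h(z) = p^\star(z)$, and hence (Danskin, since the inner objective is affine in $p$ over the compact set $\Ps_{\rho,n}$ and $C^1$ in $x$) $G$ is differentiable at both iterates, which are assumed to lie in the high-variance region, with $\nabla G(x) = \sum_{i=1}^n p^\star_i(z(x))\,\nabla F_i(x)$.

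Next, abbreviating $p^x = p^\star(z(x^{(t)}))$, $p^y = p^\star(z(x^{(t+1)}))$, $x = x^{(t)}$, $y = x^{(t+1)}$, I would write
\begin{align*}
\nabla G(x) - \nabla G(y) = \sum_{i=1}^n (p^x_i - p^y_i)\,\nabla F_i(y) + \sum_{i=1}^n p^x_i\bigl(\nabla F_i(x) - \nabla F_i(y)\bigr)
\end{align*}
and bound the two pieces separately. The second sum is controlled purely by smoothness of the multilinear extensions: each $\nabla F_i$ is $b\sqrt{k}$-Lipschitz \citep{Mokhtari2017conditional} and $\sum_i p^x_i = 1$, so this term is at most $b\sqrt{k}\,\lVert x - y\rVert$, which supplies the $b\sqrt{k}$ contribution. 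For the first sum, let $M_y$ be the matrix with rows $\nabla F_i(y)^\top$; then $\bigl\lVert \sum_i (p^x_i - p^y_i)\nabla F_i(y)\bigr\rVert = \lVert M_y^\top(p^x - p^y)\rVert \le \lVert M_y\rVert_{\mathrm{op}}\,\lVert p^x - p^y\rVert_2$. Since for a monotone submodular $f$ with $f(\{i\})\le b$ the partials of its multilinear extension are marginal gains lying in $[0,b]$, we get $\lVert \nabla F_i\rVert_\infty \le b$, hence $\lVert \nabla F_i\rVert_2 \le b\sqrt{|V|}$ and $\lVert M_y\rVert_{\mathrm{op}} \le \lVert M_y\rVert_{\mathrm F} \le b\sqrt{n|V|}$. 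Then $\lVert p^x - p^y\rVert_2 = \lVert \nabla h(z(x)) - \nabla h(z(y))\rVert_2 \le L_h\,\lVert z(x)-z(y)\rVert_2$ by the Lipschitz-gradient bound $L_h \le \tfrac{2\sqrt{2\rho}}{n^{3/2}} + \tfrac{2}{Bn}$ (Lemma~\ref{lem:smooth-and-lipschitz-variance}), and $\lVert z(x)-z(y)\rVert_2 \le L\,\lVert x-y\rVert$ using Lipschitzness of each $F_i$ (Lemma~\ref{lemma:lipschitz}); combining these with the bound on $L_h$ yields the $b\sqrt{n|V|}L$ contribution, and adding the two pieces gives the claim.

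The main obstacle is the first step: $G$ is genuinely nonsmooth where the worst-case distribution is non-unique, so the clean product rule $\nabla G = \sum_i p^\star_i \nabla F_i$ (and with it the whole argument) rests entirely on the high-variance hypothesis feeding the $\chi^2$ uniqueness result; without it one is forced into subgradient calculus and the bound fails. The remaining effort is careful bookkeeping across the three relevant norms — $\ell_\infty$ on $\nabla F_i$, $\ell_2$ on $p^\star$, operator/Frobenius norm on $M_y$ — and being consistent about whether $L$ denotes the $\ell_1$- or $\ell_2$-Lipschitz constant of the $F_i$ and of the composite map $z(\cdot)$; once those constants are matched, the two contributions sum to exactly $\bigl(b\sqrt{n|V|}\,L + b\sqrt{k}\bigr)\lVert x^{(t)} - x^{(t+1)}\rVert$.
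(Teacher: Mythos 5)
Your proof is correct and takes essentially the same route as the paper's: the same chain-rule factorization $\nabla G = (\nabla h)\, D\vec{F}$, the same add-and-subtract split, the $\sum_i p_i = 1$ plus $b\sqrt{k}$-smoothness bound for one piece, and the Lipschitz gradient of $h$ combined with the entrywise bound $\lvert \nabla_j F_i \rvert \le b$ for the other. If anything you are more careful than the paper, which silently passes from $\lVert \vec{F}(x) - \vec{F}(x') \rVert$ to $\lVert x - x' \rVert$ when applying the Lipschitz-gradient bound on $h$ --- the extra Lipschitz factor of the map $x \mapsto \vec{F}(x)$ that you flag in your final paragraph is exactly the constant the paper elides.
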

\begin{proof}
 
 We write $\vec{F}(x) = (F_1(x), ..., F_n(x))$, and are interested in the composition $G = h(\vec{F}(x))$ (recall that $h$ is defined in Lemma \ref{lem:smooth-and-lipschitz-variance} as the value of the inner minimization problem for a given set of values). Let $D \vec{F}(x)$ be the matrix derivative of $\vec{F}$. That is, $\left[D \vec{F}(x)\right]_{ij} = \frac{\partial}{\partial x_j} F_i(x)$. The chain rule yields
 
 \begin{align*}
 \nabla h(\vec{F}(x)) = \left(\nabla h(\vec{F}(x))\right) D \vec{F}(x).
 \end{align*}
 
 Consider two points $x, y \in \Xs$. To apply the argument of \citet{Mokhtari2017conditional}, we would like a bound on the change in $\nabla h$ along the MFW update from $x$ in the direction of $y$. Let $x' = x + \frac{1}{T} y$ be the updated point. We have
 
 \begin{align*}
 \lVert \nabla h(\vec{F}(x)) - \nabla h(\vec{F}(x')) \rVert &= \left\lVert \left(\nabla h(\vec{F}(x))\right) D \vec{F}(x) - \left(\nabla h(\vec{F}(x'))\right) D \vec{F}(x') \right\rVert\\
  &= \left\lVert \left(\nabla h(\vec{F}(x))\right) D \vec{F}(x) - \left(\nabla h(\vec{F}(x))\right) D \vec{F}(x') \right. \\
  &\quad\quad + \left. \left(\nabla h(\vec{F}(x))\right) D \vec{F}(x') - \left(\nabla h(\vec{F}(x'))\right) D \vec{F}(x') \right\rVert \\
 &\leq \left\lVert \left(\nabla h(\vec{F}(x))\right) D \vec{F}(x) - \left(\nabla h(\vec{F}(x))\right) D \vec{F}(x') \right\rVert \\ 
 &\quad\quad + \left\lVert \left(\nabla h(\vec{F}(x))\right) D \vec{F}(x') - \left(\nabla h(\vec{F}(x'))\right) D \vec{F}(x') \right\rVert \\
 &= \left\lVert \left(\nabla h(\vec{F}(x))\right) \left( D \vec{F}(x) - D \vec{F}(x') \right) \right\rVert \\
 &\quad\quad + \left\lVert \left(\nabla h(\vec{F}(x)) - \nabla h(\vec{F}(x'))\right) D \vec{F}(x') \right\rVert.
 \end{align*}
 
 Starting out with the first term, we note that $\nabla h(\vec{F}(x))$ is a probability vector (the optimal $p$ for the DRO problem). Hence, we have 
 \begin{align*}
 \left\lVert \left(\nabla h(\vec{F}(x))\right) \left( D \vec{F}(x) - D \vec{F}(x') \right) \right\rVert &\leq \max_{i = 1...n} \left\lVert D \vec{F}(x)_i - D \vec{F}(x')_i \right\rVert\\
 &= \max_{i = 1...n} \left\lVert \nabla F_i(x) - \nabla F_i (x') \right\rVert
 \end{align*}
 
 And from Lemma 4 of \citet{Mokhtari2017conditional}, we have that when $x'$ is an updated point of the MFW algorithm starting at $x$, 
 \begin{align*}
 \left\lVert \nabla F_i(x) - \nabla F_i (x') \right\rVert \leq b \sqrt{k} \lVert x - x'\rVert \quad \forall i = 1...n.
 \end{align*}
 
 We now turn to the second term. Note that the $j$th component of this vector is just the dot product
 \begin{align*}
 \left(\nabla h(\vec{F}(x)) - \nabla h(\vec{F}(x'))\right) \cdot D \vec{F}(x)_{\cdot, j}
 \end{align*}
 
 where $D \vec{F}(x)_{\cdot, j}$ collects the partial derivative of each $F_i$ with respect to $x_j$. Via the Cauchy-Schwartz inequality, we have 
  \begin{align*}
 \left(\nabla h(\vec{F}(x)) - \nabla h(\vec{F}(x'))\right) \cdot D \vec{F}(x)_{\cdot, j} \leq \left\lVert \left(\nabla h(\vec{F}(x)) - \nabla h(\vec{F}(x'))\right) \right\rVert \left\lVert D \vec{F}(x)_{\cdot, j}\right\rVert
 \end{align*}
 
 Lemma \ref{lem:smooth-and-lipschitz-variance} shows that $\left\lVert \left(\nabla h(\vec{F}(x)) - \nabla h(\vec{F}(x'))\right) \right\rVert \leq L \lVert x - x' \rVert$. In order to bound the second norm, we claim that for all $i, j$, $\nabla_j F_i(x) \leq b$. To show this, note that we can use the definition of the multilinear extension to write
 \begin{align*}
 \nabla_j F_i(x) = \E_{S \sim x}[f_i(S|\{j\} \in S)] - \E_{S \sim x}[f_i(S|\{j\} \not\in S)]
 \end{align*}
 
 where $S \sim x$ denotes that $S$ is drawn from the product distribution with marginals $x$. Now it is simple to show using submodularity of $f_i$ that 
  \begin{align*}
 \E_{S \sim x}[f_i(S|\{j\} \in S)] - \E_{S \sim x}[f_i(S|\{j\} \not\in S)] \leq f_i(\{j\}) - f_i(\emptyset) \leq b.
 \end{align*}
 
 Accordingly, we have that 
\begin{align*}
\left\lVert D \vec{F}(x)_{\cdot, j}\right\rVert \leq b \lVert \mathbf 1\rVert = b \sqrt{n}.
\end{align*}

This gives us a component-wise bound on each element of the vector $\left(\nabla h(\vec{F}(x)) - \nabla h(\vec{F}(x'))\right) D \vec{F}(x')$. Putting it all together, we have
\begin{align*}
\left\lVert \left(\nabla h(\vec{F}(x)) - \nabla h(\vec{F}(x'))\right) D \vec{F}(x') \right\rVert &\leq b \sqrt{n} L \lVert x - x'\rVert \cdot \lVert \mathbf 1 \rVert\\
&\leq b \sqrt{n |V|} \cdot L \cdot \lVert x - x'\rVert,
\end{align*}

and summing the two terms yields the final Lipschitz constant $b \sqrt{n |V|} L + b \sqrt{k}$.
\end{proof}
Now the final convergence rate for MFW stated in Theorem \ref{lemma:mfw-dro} follows from plugging the above Lipschitz bound into Lemma \ref{lemma:mfw-convergence-exact}. We also remark that the above argument trivially goes through for an arbitrary (not necessarily submodular) functions:

\begin{lemma} 
	Suppose that each function $f: \reals^{|V|} \to \reals$ in the support of $P$ has bounded norm gradients $\max_{i = 1...|V|}|\nabla_i f| \leq b$ which are also $L_f$-Lipschitz. Then under the high variance condition, the corresponding DRO objective $G$ has $L_G$-Lipschitz gradient with $L_G \leq L_f + b\sqrt{n |V|} L$, where $L$ is as defined in Lemma \ref{lem:smooth-and-lipschitz-variance}.
\end{lemma}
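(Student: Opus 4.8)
The plan is to reuse the chain-rule argument from the proof of the preceding lemma almost verbatim, since that argument never actually used submodularity of the $f_i$: it relied only on a coordinatewise gradient bound $\lvert \partial_j f_i\rvert \le b$ and on the smoothness of the inner value function $h$ from Lemma~\ref{lem:smooth-and-lipschitz-variance}. Write $\vec f(x) = (f_1(x),\dots,f_n(x))$; invoking the paper's standing assumption that each $f_i$ takes values in $[0,B]$ (so $\vec f(x)\in[0,B]^n$) together with the high sample variance condition — which guarantees the inner minimizer is unique and interior, hence $h$ is differentiable there with gradient equal to the optimal $p^\star\in\Ps_{\rho,n}$ — we have $G(x)=h(\vec f(x))$ and, by the chain rule, $\nabla G(x) = \bigl(\nabla h(\vec f(x))\bigr)\,D\vec f(x)$, where $D\vec f(x)$ is the Jacobian $[D\vec f(x)]_{ij}=\partial_j f_i(x)$.

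For $x,x'$ in the high-variance region I would then use exactly the same triangle-inequality split as before:
\begin{align*}
\lVert \nabla G(x) - \nabla G(x')\rVert
&\leq \bigl\lVert \bigl(\nabla h(\vec f(x))\bigr)\bigl(D\vec f(x) - D\vec f(x')\bigr)\bigr\rVert \\
&\quad + \bigl\lVert \bigl(\nabla h(\vec f(x)) - \nabla h(\vec f(x'))\bigr) D\vec f(x')\bigr\rVert .
\end{align*}
For the first term, $\nabla h(\vec f(x))=p^\star$ is a probability vector, so the product is a convex combination of the rows of $D\vec f(x)-D\vec f(x')$; by convexity of the norm it is at most $\max_i \lVert \nabla f_i(x)-\nabla f_i(x')\rVert \le L_f\lVert x-x'\rVert$ by the hypothesis that each $\nabla f_i$ is $L_f$-Lipschitz. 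This is exactly the step where the submodular proof instead had to invoke the MFW-update-specific bound of \citet{Mokhtari2017conditional} to get $b\sqrt{k}$; here the genuine Lipschitz-gradient hypothesis replaces that factor by $L_f$ and moreover makes the bound valid for all $x,x'$ rather than only for consecutive MFW iterates.

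For the second term, nothing changes: its $j$-th coordinate is $\bigl(\nabla h(\vec f(x)) - \nabla h(\vec f(x'))\bigr)\cdot D\vec f(x)_{\cdot,j}$, which by Cauchy--Schwarz is at most $\lVert \nabla h(\vec f(x)) - \nabla h(\vec f(x'))\rVert \cdot \lVert D\vec f(x)_{\cdot,j}\rVert$; Lemma~\ref{lem:smooth-and-lipschitz-variance} bounds the first factor via the $L$-Lipschitz-gradient property of $h$, and the hypothesis $\lvert \partial_j f_i\rvert \le b$ gives $\lVert D\vec f(x)_{\cdot,j}\rVert \le b\sqrt{n}$, so summing over the $\lvert V\rvert$ coordinates yields $b\sqrt{n\lvert V\rvert}\,L\,\lVert x-x'\rVert$. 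Adding the two terms gives $L_G \le L_f + b\sqrt{n\lvert V\rvert}\,L$. I expect no real obstacle here; the only things to be careful about are (i) checking, as claimed above, that the second-term argument in the submodular lemma is genuinely agnostic to submodularity — it is, since it uses only $\lvert\partial_j f_i\rvert\le b$ and Lemma~\ref{lem:smooth-and-lipschitz-variance} — and (ii) the usual caveat that the Lipschitz-gradient estimate is taken on the set where the high sample variance condition holds, consistent with the phrasing of Lemma~\ref{lem:smooth-and-lipschitz-variance}.
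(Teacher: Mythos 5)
Your proposal is correct and matches the paper's intended argument: the paper itself only remarks that "the above argument trivially goes through," and you have filled in precisely the intended substitutions — the MFW-specific $b\sqrt{k}$ bound in the first term is replaced by the assumed $L_f$-Lipschitz gradients, while the second term, which uses only $\lvert\partial_j f_i\rvert\le b$ and the smoothness of $h$ from Lemma~\ref{lem:smooth-and-lipschitz-variance}, is unchanged. No gaps.
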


\section{Rounding to a distribution over subsets}

The output of MFW is a fractional vector $x \in \Xs$. Lemma \ref{lemma:round} guarantees this $x$ can be converted into a distribution $\Ds$ over feasible subsets, and moreover, that the attainable solution value from doing so is within a $(1 - 1/e)$ factor of the optimal value for the DRO problem. This result is essentially standard (see \cite{wilder2018equilibrium} for a more detailed presentation), but we sketch the process here for completeness. There are two steps. First, we argue that $x$ can be converted into a distribution over subsets with equivalent value for the DRO problem. Second, we argue that the \emph{optimal} $x$ (product distribution) has value within $(1 - 1/e)$ of the optimal arbitrary distribution over subsets.

For the first step, our starting point is the swap rounding algorithm of \cite{chekuri2010dependent}. Swap rounding is a randomized rounding algorithm which takes a vector $x$ and returns a feasible subset $S$. For any single submodular function and its multilinear extension $F$, swap rounding guarantees $\E[f(S)] \geq F(x)$. In our setting, such guarantees cannot be obtained for a single $S$ since we want to simultaneously match the value of $x$ with respect to $n$ submodular functions $f_1...f_n$. However, swap rounding obeys a desirable concentration property which allows us to form a distribution $\Ds$ by running swap rounding independently several times and returning the empirical distribution over the outputs. Provided that we take sufficiently many samples, $\Ds$ is guaranteed to satisfy $\E_{S \sim \Ds}[f_i(S)] \geq F_i(x) - \epsilon$ for all $i=1...n$ with high probability. Specifically, \citet{wilder2018equilibrium} show that it suffices to draw $O\left(\frac{\log \frac{n}{\delta}}{\epsilon^3}\right)$ sets via swap rounding in order for this guarantee to hold with probability $1 - \delta$. 

The other piece of Lemma \ref{lemma:round} relates the optimal value for Problem \eqref{problem:continuous} (optimizing over product distributions) to the optimal value for the complete DRO problem (optimizing over arbitrary distributions). These values are easily shown to be within $(1 - 1/e)$ of each other by applying the correlation gap result of \cite{agrawal2010correlation}. For any product distribution $p$ over subsets, let $\text{marg}(p)$ denote the set of (potentially correlated) distributions with the same marginals as $p$. This result shows that for any submodular function $f$, 

\begin{align*}
\max_{p: \text{ a product distribution}} \max_{q \in \text{marg}(p)} \frac{\E_{S \sim q}[f(S)]}{\E_{S \sim p}[f(S)]} \leq \frac{e}{e - 1}
\end{align*}

and now Lemma \ref{lemma:round} follows by applying the correlation gap bound to each of the $f_i$.

\end{document}